\newcommand{\citet}[1]{\citeauthor{#1}~\shortcite{#1}}
\newcommand{\citep}{\cite}
\newtheorem{theorem}{Theorem}
\newtheorem{property}{Property}
\newtheorem{lemma}{Lemma}
\newtheorem{remark}{Remark}
\newtheorem{proposition}{Proposition}
\begin{document}
%
\title{Accelerated Variance Reduced Stochastic ADMM}

\author{Yuanyuan Liu, Fanhua Shang, James Cheng\\
Department of Computer Science and Engineering, The Chinese University of Hong Kong\\
$\{$yyliu, fhshang, jcheng$\}$@cse.cuhk.edu.hk
}
\maketitle

\begin{abstract}
Recently, many variance reduced stochastic alternating direction method of multipliers (ADMM) methods (e.g.\ SAG-ADMM, SDCA-ADMM and SVRG-ADMM) have made exciting progress such as linear convergence rates for strongly convex problems. However, the best known convergence rate for general convex problems is $\mathcal{O}(1/T)$ as opposed to $\mathcal{O}(1/T^2)$ of accelerated batch algorithms, where $T$ is the number of iterations. Thus, there still remains a gap in convergence rates between existing stochastic ADMM and batch algorithms. To bridge this gap, we introduce the momentum acceleration trick for batch optimization into the stochastic variance reduced gradient based ADMM (SVRG-ADMM), which leads to an accelerated (ASVRG-ADMM) method. Then we design two different momentum term update rules for strongly convex and general convex cases. We prove that ASVRG-ADMM converges linearly for strongly convex problems. Besides having a low per-iteration complexity as existing stochastic ADMM methods, ASVRG-ADMM improves the convergence rate on general convex problems from $\mathcal{O}(1/T)$ to $\mathcal{O}(1/T^2)$. Our experimental results show the effectiveness of ASVRG-ADMM.
\end{abstract}

\section{Introduction}
In this paper, we consider a class of composite convex optimization problems
\vspace{-1mm}
\begin{equation}\label{equ1}
\min_{x\in\mathbb{R}^{d_{1}}} f(x)+h(Ax),
\end{equation}
where $A\!\in\!\mathbb{R}^{d_{2}\times d_{1}}$ is a given matrix, $f(x):=\frac{1}{n}\!\sum^{n}_{i=1}\!f_{i}(x)$, each $f_{i}(x)$ is a convex function, and $h(Ax)$ is convex but possibly non-smooth. With regard to $h(\cdot)$, we are interested in a sparsity-inducing regularizer, e.g.\ $\ell_{1}$-norm, group Lasso and nuclear norm. When $A$ is an identity matrix, i.e.\ $A\!=\!I_{d_{1}}$, the above formulation (\ref{equ1}) arises in many places in machine learning, statistics, and operations research~\citep{bubeck:sgd}, such as logistic regression, Lasso and support vector machine (SVM). We mainly focus on the large sample regime. In this regime, even first-order batch methods, e.g.\ FISTA~\citep{beck:fista}, become computationally burdensome due to their per-iteration complexity of $O(nd_{1})$. As a result, stochastic gradient descent (SGD) with per-iteration complexity of $O(d_{1})$ has witnessed tremendous progress in the recent years. Especially, a number of stochastic variance reduced gradient methods such as SAG~\citep{roux:sag}, SDCA~\citep{shalev-Shwartz:sdca} and SVRG~\citep{johnson:svrg} have been proposed to successfully address the problem of high variance of the gradient estimate in ordinary SGD, resulting in a linear convergence rate (for strongly convex problems) as opposed to sub-linear rates of SGD. More recently, the Nesterov's acceleration technique~\citep{nesterov:co} was introduced in~\citep{zhu:Katyusha,hien:asmd} to further speed up the stochastic variance-reduced algorithms, which results in the best known convergence rates for both strongly convex and general convex problems. This motivates us to integrate the momentum acceleration trick into the stochastic alternating direction method of multipliers (ADMM) below.

When $A$ is a more general matrix, i.e.\ $A\!\neq\!I_{d_{1}}$, the formulation (\ref{equ1}) becomes many more complicated problems arising from machine learning, e.g.\ graph-guided fuzed Lasso~\citep{kim:flasso} and generalized Lasso~\citep{tibshirani:glasso}. To solve this class of composite optimization problems with an auxiliary variable $y\!=\!Ax$, which are the special case of the general ADMM form,
\begin{equation}\label{equ2}
\min_{x\in\mathbb{R}^{d_{1}}\!,y\in\mathbb{R}^{d_{2}}} f(x)+h(y),\;\textup{s.t.}\;Ax+By=c,
\end{equation}
the ADMM is an effective optimization tool~\citep{boyd:admm}, and has shown attractive performance in a wide range of real-world problems, such as big data classification~\citep{nie:svm}. To tackle the issue of high per-iteration complexity of batch (deterministic) ADMM (as a popular first-order optimization method), \citet{wang:oadm}, \citet{suzuki:oadmm} and \citet{ouyang:sadmm} proposed some online or stochastic ADMM algorithms. However, all these variants only achieve the convergence rate of $\mathcal{O}(1/\sqrt{T})$ for general convex problems and $\mathcal{O}(\log T/{T})$ for strongly convex problems, respectively, as compared with the $\mathcal{O}(1/T^2)$ and linear convergence rates of accelerated batch algorithms~\citep{nesterov:cp}, e.g.\ FISTA, where $T$ is the number of iterations. By now several accelerated and faster converging versions of stochastic ADMM, which are all based on variance reduction techniques, have been proposed, e.g.\ SAG-ADMM~\citep{zhong:fsadmm}, SDCA-ADMM~\citep{Suzuki:sdca} and SVRG-ADMM~\citep{zheng:fadmm}. With regard to strongly convex problems, \citet{Suzuki:sdca} and \citet{zheng:fadmm} proved that linear convergence can be obtained for the special ADMM form (i.e.\ $B\!=\!-I_{d_{2}}$ and $c\!=\!\textbf{0}$) and the general ADMM form, respectively. In SAG-ADMM and SVRG-ADMM, an $\mathcal{O}(1/T)$ convergence rate can be guaranteed for general convex problems, which implies that there still remains a gap in convergence rates between the stochastic ADMM and accelerated batch algorithms.

To bridge this gap, we integrate the momentum acceleration trick in~\citep{tseng:sco} for deterministic optimization into the stochastic variance reduction gradient (SVRG) based stochastic ADMM (SVRG-ADMM). Naturally, the proposed method has low per-iteration time complexity as existing stochastic ADMM algorithms, and does not require the storage of all gradients (or dual variables) as in SCAS-ADMM~\citep{zhao:scasadmm} and SVRG-ADMM~\citep{zheng:fadmm}, as shown in Table~\ref{tab1}. We summarize our main contributions below.

\begin{itemize}
\item We propose an accelerated variance reduced stochastic ADMM (ASVRG-ADMM) method, which integrates both the momentum acceleration trick in~\citep{tseng:sco} for batch optimization and the variance reduction technique of SVRG~\citep{johnson:svrg}.
\vspace{-1mm}
\item We prove that ASVRG-ADMM achieves a linear convergence rate for strongly convex problems, which is consistent with the best known result in SDCA-ADMM~\citep{Suzuki:sdca} and SVRG-ADMM~\citep{zheng:fadmm}.
\vspace{-1mm}
\item We also prove that ASVRG-ADMM has a convergence rate of $\mathcal{O}(1/T^{2})$ for non-strongly convex problems, which is a factor of $T$ faster than SAG-ADMM and SVRG-ADMM, whose convergence rates are $\mathcal{O}(1/T)$.
\vspace{-1mm}
\item Our experimental results further verified that our ASVRG-ADMM method has much better performance than the state-of-the-art stochastic ADMM methods.
\end{itemize}

\begin{table}[t]
\centering
\small
\caption{Comparison of convergence rates and memory requirements of some stochastic ADMM algorithms.}
\label{tab1}
\setlength{\tabcolsep}{2pt}
\renewcommand\arraystretch{1.15}
\begin{tabular}{cccc}
\hline
  & General convex   & Strongly-convex  & Space requirement\\
\hline
\!SAG-ADMM\!  & $\!\mathcal{O}(1/T)\!$ & unknown & $\!{O}(d_{1}\!d_{2}\!\!+\!nd_{1}\!)\!$\\
\!SDCA-ADMM\!  & unknown & linear rate & $\!{O}(d_{1}\!d_{2}\!\!+\!n)\!$\\
\!SCAS-ADMM\!  & $\!\mathcal{O}(1/T)\!$ & $\!\mathcal{O}(1/T)\!$ & $\!{O}(d_{1}\!d_{2})\!$\\
\!SVRG-ADMM\!  & $\!\mathcal{O}(1/T)\!$ & linear rate & $\!{O}(d_{1}\!d_{2})\!$\\
\!ASVRG-ADMM\!\!  & $\!\mathcal{O}(1/T^{2})\!$ & linear rate & $\!{O}(d_{1}\!d_{2})\!$\\
\hline
\end{tabular}
\end{table}

\section{Related Work}

Introducing $y=Ax\in\!\mathbb{R}^{d_{2}}$, problem (\ref{equ1}) becomes
\begin{equation}\label{equ3}
\min_{x\in\mathbb{R}^{d_{1}}\!,y\in\mathbb{R}^{d_{2}}} f(x)+h(y),\;\textup{s.t.}\;Ax-y=\textbf{0}.
\end{equation}
Although (\ref{equ3}) is only a special case of the general ADMM form (\ref{equ2}), when $B\!=\!-I_{d_{2}}$ and $c\!=\!\textbf{0}$, the stochastic (or online) ADMM algorithms and theoretical results in~\citep{wang:oadm,ouyang:sadmm,zhong:fsadmm,zheng:fadmm} and this paper are all for the more general problem (\ref{equ2}). To minimize (\ref{equ2}), together with the dual variable $\lambda$, the update steps of batch ADMM are
\begin{eqnarray}
&y_{k}=\arg\min_{y} h(y)+\frac{\beta}{2}\|Ax_{k-\!1}\!+\!By\!-\!c\!+\!\lambda_{k-\!1}\|^{2},\label{equ31}\\
&x_{k}=\arg\min_{x} f(x)+\frac{\beta}{2}\|Ax\!+\!By_{k}\!-\!c\!+\!\lambda_{k-\!1}\|^{2},\;\;\;\label{equ32}\\
&\lambda_{k}=\lambda_{k-\!1}+Ax_{k}+By_{k}-c,\qquad\qquad\qquad\qquad\;\:\:\label{equ33}
\end{eqnarray}
where $\beta\!>\!0$ is a penalty parameter.

To extend the batch ADMM to the online and stochastic settings, the update steps for $y_{k}$ and $\lambda_{k}$ remain unchanged. In~\citep{wang:oadm,ouyang:sadmm}, the update step of $x_{k}$ is approximated as follows:
\vspace{-1mm}
\begin{equation}\label{equ4}
\begin{split}
x_{k}=\arg\min_{x} &\:x^{T}\nabla\!f_{i_{k}}\!(x_{k-\!1})+\frac{1}{2\eta_{k}}\|x-x_{k-\!1}\|^{2}_{G}\\
&+\frac{\beta}{2}\|Ax\!+\!By_{k}\!-\!c\!+\!\lambda_{k-\!1}\|^{2},
\end{split}
\end{equation}
where we draw $i_{k}$ uniformly at random from $[n]\!:=\!\{1,\ldots,n\}$, $\eta_{k}\!\propto\!1/\sqrt{k}$ is the step-size, and $\|z\|^{2}_{G}\!=\!z^{T}Gz$ with given positive semi-definite matrix $G$, e.g.\ $G\!=\!I_{d_{1}}$ in~\citep{ouyang:sadmm}. Analogous to SGD, the stochastic ADMM variants use an unbiased estimate of the gradient at each iteration. However, all those algorithms have much slower convergence rates than their batch counterpart, as mentioned above. This barrier is mainly due to the variance introduced by the stochasticity of the gradients. Besides, to guarantee convergence, they employ a decaying sequence of step sizes $\eta_{k}$, which in turn impacts the rates.

More recently, a number of variance reduced stochastic ADMM methods (e.g.\ SAG-ADMM, SDCA-ADMM and SVRG-ADMM) have been proposed and made exciting progress such as linear convergence rates. SVRG-ADMM in~\citep{zheng:fadmm} is particularly attractive here because of its low storage requirement compared with the algorithms in~\citep{zhong:fsadmm,Suzuki:sdca}. Within each epoch of SVRG-ADMM, the full gradient $\widetilde{p}=\!\nabla\! f(\widetilde{x})$ is first computed, where $\widetilde{x}$ is the average point of the previous epoch. Then $\nabla\!f_{i_{k}}\!(x_{k-\!1})$ and $\eta_{k}$ in (\ref{equ4}) are replaced by
\begin{equation}\label{equ5}
\widetilde{\nabla}\!f_{I_{k}}\!(x_{k-\!1})=\frac{1}{|I_{k}|}\!\sum_{i_{k}\in I_{k}}\!\left(\nabla\!f_{i_{k}}\!(x_{k-\!1})-\nabla\!f_{i_{k}}\!(\widetilde{x})\right)+\widetilde{p}\,
\end{equation}
and a constant step-size $\eta$, respectively, where $I_{k}\!\subset\![n]$ is a mini-batch of size $b$ (which is a useful technique to reduce the variance). In fact, $\widetilde{\nabla}\!f_{I_{k}}\!(x_{k-\!1})$ is an unbiased estimator of the gradient $\nabla\! f(x_{k-\!1})$, i.e.\ $\mathbb{E}[\widetilde{\nabla}\!f_{I_{k}}\!(x_{k-\!1})]\!=\!\nabla\! f(x_{k-\!1})$.

\begin{algorithm}[t]
\caption{ASVRG-ADMM for strongly-convex case}
\label{alg1}
\renewcommand{\algorithmicrequire}{\textbf{Input:}}
\renewcommand{\algorithmicensure}{\textbf{Initialize:}}
\renewcommand{\algorithmicoutput}{\textbf{Output:}}
\begin{algorithmic}[1]
\REQUIRE $m$, $\eta$, $\beta>0$, $1\leq b\leq n$.\\
\ENSURE $\widetilde{x}^{0}\!=\!\widetilde{z}^{0},\widetilde{y}^{0}$, $\theta$, $\widetilde{\lambda}^{0}\!=\!-\frac{1}{\beta}(A^{T})^{\dag}\nabla\! f(\widetilde{x}_{0})$.\\
\FOR{$s=1,2,\ldots,T$}
\STATE {$x^{s}_{0}=z^{s}_{0}=\widetilde{x}^{s-\!1}$, $y^{s}_{0}=\widetilde{y}^{s-\!1}$, $\lambda^{s}_{0}=\widetilde{\lambda}^{s-\!1}$;}
\STATE {$\widetilde{p}=\nabla\!f(\widetilde{x}^{s-1})$;}
\FOR{$k=1,2,\ldots,m$}
\STATE {Choose $I_{k}\!\subseteq\! [n]\!$ of size b, uniformly at random;}
\STATE {$y^{s}_{k}\!=\!\arg\min_{y}h(y)+\frac{\beta}{2}\|Az^{s}_{k-\!1}\!+\!By-c+\!\lambda^{s}_{k-\!1}\|^{2}$;}
\STATE {$z^{s}_{k}\!=\!z^{s}_{k-\!1}\!-\!\frac{\eta\left(\widetilde{\nabla}\!f_{I_{k}}\!(x^{s}_{k-\!1})+\beta A^{T}\!(Az^{s}_{k-\!1}\!+By^{s}_{k}-c+\lambda^{s}_{k-\!1})\right)}{\gamma\theta}$;}
\STATE {$x^{s}_{k}\!=\!(1-\theta)\widetilde{x}^{s-1}+\theta z^{s}_{k}$;}
\STATE {$\lambda^{s}_{k}\!=\!\lambda^{s}_{k-1}+Az^{s}_{k}+By^{s}_{k}-c$;}
\ENDFOR
\STATE {$\widetilde{x}^{s}\!=\!\frac{1}{m}\!\sum^{m}_{k=1}\!x^{s}_{k}$,\, $\widetilde{y}^{s}\!=\!(1\!-\!\theta)\widetilde{y}^{s-\!1}\!+\!\frac{\theta}{m}\!\sum^{m}_{k=1}\!y^{s}_{k}$,}
\STATE {$\widetilde{\lambda}^{s}\!=\!-\frac{1}{\beta}(A^{T})^{\dag}\nabla f(\widetilde{x}^{s})$;}
\ENDFOR
\OUTPUT {$\widetilde{x}^{T}$, $\widetilde{y}^{T}$.}
\end{algorithmic}
\end{algorithm}

\section{Accelerated Variance Reduced Stochastic ADMM}
In this section, we design an accelerated variance reduced stochastic ADMM method for both strongly convex and general convex problems. We first make the following assumptions: Each convex $f_{i}(\cdot)$ is $L_{i}$-smooth, i.e.\ there exists a constant $L_{i}\!>\!0$ such that $\|\nabla\!f_{i}(x)\!-\!\nabla\!f_{i}(y)\|\!\leq\!L_{i}\|x\!-\!y\|$, $\forall x,y\!\in\!\mathbb{R}^{d}$, and $L\!\triangleq\!\max_{i}L_{i}$; $f(\cdot)$ is $\mu$-strongly convex, i.e.\ there is $\mu\!>\!0$ such that $f(x)\!\geq\! f(y)\!+\!\nabla\! f(y)^{T}\!(x\!-\!y)\!+\!\frac{\mu}{2}\|x\!-\!y\|^{2}$ for all $x,y\!\in\!\mathbb{R}^{d}$; The matrix $A$ has full row rank. The first two assumptions are common in the analysis of first-order optimization methods, while the last one has been used in the convergence analysis of batch ADMM~\citep{shang:rpca,nishihara:admm,deng:admm} and stochastic ADMM~\citep{zheng:fadmm}.

\subsection{The Strongly Convex Case}
In this part, we consider the case of (\ref{equ2}) when each $f_{i}(\cdot)$ is convex, $L$-smooth, and $f(\cdot)$ is $\mu$-strongly convex. Recall that this class of problems include graph-guided Logistic Regression and SVM as notable examples. To efficiently solve this class of problems, we incorporate both the momentum acceleration and variance reduction techniques into stochastic ADMM. Our algorithm is divided into $T$ epochs, and each epoch consists of $m$ stochastic updates, where $m$ is usually chosen to be $O(n)$ as in~\citep{johnson:svrg}.

Let $z$ be an important auxiliary variable, its update rule is given as follows. Similar to~\citep{zhong:fsadmm,zheng:fadmm}, we also use the inexact Uzawa method~\citep{zhang:pd} to approximate the sub-problem (\ref{equ4}), which can avoid computing the inverse of the matrix $(\frac{1}{\eta}I_{d_{1}}\!+\!\beta A^{T}\!A)$. Moreover, the momentum weight $0\!\leq\!\theta_{s}\!\leq\!1$ (the update rule for $\theta_{s}$ is provided below) is introduced into the proximal term $\frac{1}{2\eta}\|x\!-\!x_{k-\!1}\|^{2}_{G}$ similar to that of (\ref{equ4}), and then the sub-problem with respect to $z$ is formulated as follows:
\begin{equation}\label{equ6}
\begin{split}
\min_{z}&\, (z-\!z^{s}_{k-\!1}\!)^{T}\widetilde{\nabla}\!f_{I_{k}}\!(x^{s}_{k-\!1})\!+\!\frac{\theta_{\!s-\!1}}{2\eta}\!\|z-\!z^{s}_{k-\!1}\|^{2}_{G}\\
&+\frac{\beta}{2}\|Az+By^{s}_{k}-c+\lambda^{s}_{k-1}\|^{2},
\end{split}
\end{equation}
where $\widetilde{\nabla}\!f_{I_{k}}\!(x^{s}_{k-\!1})$ is defined in (\ref{equ5}), $\eta\!<\!\frac{1}{2L}$, and $G\!=\!\gamma I_{d_{1}}\!-\!\frac{\eta\beta}{\theta_{s-\!1}}A^{T}\!A$ with $\gamma\!\geq\!\gamma_{\min}\!\equiv\!\frac{\eta\beta\|A^{T}\!A\|_{2}}{\theta_{s-\!1}}\!+\!1$ to ensure that $G\succeq I$ similar to~\citep{zheng:fadmm}, where $\|\!\cdot\!\|_{2}$ is the spectral norm, i.e.\ the largest singular value of the matrix. Furthermore, the update rule for $x$ is given by
\begin{equation}\label{equ71}
x^{s}_{k}\!=\!\widetilde{x}^{s-\!1}\!+\theta_{\!s-\!1}(z^{s}_{k}\!-\widetilde{x}^{s-\!1})\!=\!(1\!-\theta_{\!s-\!1})\widetilde{x}^{s-\!1}\!+\theta_{s-\!1} z^{s}_{k},
\end{equation}
where $\theta_{\!s-\!1}(z^{s}_{k}\!-\!\widetilde{x}^{s-\!1})$ is the key momentum term (similar to those in accelerated batch methods~\citep{nesterov:co}), which helps accelerate our algorithm by using the iterate of the previous epoch, i.e.\ $\widetilde{x}^{s-\!1}$. Similar to $x^{s}_{k}$, $\widetilde{y}^{s}\!=\!(1\!-\!\theta_{s-\!1})\widetilde{y}^{s-\!1}\!+\!\frac{\theta_{s-\!1}}{m}\!\sum^{m}_{k=1}\!y^{s}_{k}$. Moreover, $\theta_{s}$ can be set to a constant $\theta$ in all epochs of our algorithm, which must satisfy $0\!\leq\!\theta\!\leq\!1\!-\!{\delta(b)}/(\alpha\!-\!1)$, where $\alpha\!=\!\frac{1}{L\eta}\!>\!1\!+\!\delta(b)$, and $\delta(b)$ is defined below. The optimal value of $\theta$ is provided in Proposition 1 below. The detailed procedure is shown in Algorithm~\ref{alg1}, where we adopt the same initialization technique for $\widetilde{\lambda}^{s}$ as in~\citep{zheng:fadmm}, and $(\cdot)^{\dag}$ is the pseudo-inverse. Note that, when $\theta\!=\!1$, ASVRG-ADMM degenerates to SVRG-ADMM in~\citep{zheng:fadmm}.

\subsection{The Non-Strongly Convex Case}
In this part, we consider general convex problems of the form (\ref{equ2}) when each $f_{i}(\cdot)$ is convex, $L$-smooth, and $h(\cdot)$ is not necessarily strongly convex (but possibly non-smooth). Different from the strongly convex case, the momentum weight $\theta_{s}$ is required to satisfy the following inequalities:
\begin{equation}\label{equ71}
\frac{1-\theta_{s}}{\theta^{2}_{s}}\leq \frac{1}{\theta^{2}_{s-\!1}}\;\,\textup{and}\;\, 0\leq\theta_{s}\leq 1-\frac{\delta(b)}{\alpha-1},
\end{equation}
where $\delta(b)\!:=\!\frac{n-b}{b(n-1)}$ is a decreasing function with respect to the mini-batch size $b$. The condition (\ref{equ71}) allows the momentum weight to decease, but not too fast, similar to the requirement on the step-size $\eta_{k}$ in classical SGD and stochastic ADMM~\citep{tseng:sgd}. Unlike batch acceleration methods, the weight must satisfy both inequalities in (\ref{equ71}).

Motivated by the momentum acceleration techniques in~\citep{tseng:sco,nesterov:co} for batch optimization, we give the update rule of the weight $\theta_{s}$ for the mini-batch case:
\begin{equation}\label{equ7}
\theta_{s}=\frac{\sqrt{\theta^{4}_{s-\!1}\!+4\theta^{2}_{s-\!1}}-\theta^{2}_{s-1}}{2}\;\,\textup{and}\;\, \theta_{0}= 1-\frac{\delta(b)}{\alpha-1}.
\end{equation}
For the special case of $b\!=\!1$, we have $\delta(1)\!=\!1$ and $\theta_{0}\!=\!1\!-\!\frac{1}{\alpha-1}$, while $b\!=\!n$ (i.e.\ batch version), $\delta(n)\!=\!0$ and $\theta_{0}\!=\!1$. Since $\{\theta_{s}\}$ is decreasing, then $\theta_{s}\!\leq\! 1\!-\!\frac{\delta(b)}{\alpha-1}$ is satisfied. The detailed procedure is shown in Algorithm~\ref{alg2}, which has many slight differences in the initialization and output of each epoch from Algorithm~\ref{alg1}. In addition, the key difference between them is the update rule for the momentum weight $\theta_{s}$. That is, $\theta_{s}$ in Algorithm~\ref{alg1} can be set to a constant, while that in Algorithm~\ref{alg2} is adaptively adjusted as in (\ref{equ7}).

\begin{algorithm}[t]
\caption{ASVRG-ADMM for general convex case}
\label{alg2}
\renewcommand{\algorithmicrequire}{\textbf{Input:}}
\renewcommand{\algorithmicensure}{\textbf{Initialize:}}
\renewcommand{\algorithmicoutput}{\textbf{Output:}}
\begin{algorithmic}[1]
\REQUIRE $m$, $\eta$, $\beta>0$, $1\leq b\leq n$.\\
\ENSURE $\widetilde{x}^{0}=\widetilde{z}^{0},\widetilde{y}^{0}$, $\widetilde{\lambda}^{0}$, $\theta_{0}=1-\frac{L\eta\delta(b)}{1-L\eta}$.\\
\FOR{$s=1,2,\ldots,T$}
\STATE {$x^{s}_{0}\!=\!(1\!-\!\theta_{s-\!1})\widetilde{x}^{s-\!1}\!+\!\theta_{s-\!1}\widetilde{z}^{s-\!1}$\!, $y^{s}_{0}\!=\!\widetilde{y}^{s-\!1}$\!, $\lambda^{s}_{0}\!=\!\widetilde{\lambda}^{s-\!1}$;}
\STATE {$\widetilde{p}=\nabla\!f(\widetilde{x}^{s-1})$, $z^{s}_{0}\!=\!\widetilde{z}^{s-\!1}$;}
\FOR{$k=1,2,\ldots,m$}
\STATE {Choose $I_{k}\!\subseteq\! [n]\!$ of size b, uniformly at random;}
\STATE {$y^{s}_{k}\!=\!\arg\min_{y}h(y)+\frac{\beta}{2}\|Az^{s}_{k-\!1}\!+\!By-c+\!\lambda^{s}_{k-\!1}\|^{2}$;}
\STATE {$z^{s}_{k}\!=\!z^{s}_{k-\!1}\!-\!\frac{\eta\left(\widetilde{\nabla}\!f_{I_{k}}\!(x^{s}_{k-\!1})+\beta A^{T}\!(Az^{s}_{k-\!1}\!+By^{s}_{k}-c+\lambda^{s}_{k-\!1})\right)}{\gamma\theta_{s-1}}$;}
\STATE {$x^{s}_{k}\!=\!(1-\theta_{s-\!1})\widetilde{x}^{s-1}+\theta_{s-\!1}z^{s}_{k}$;}
\STATE {$\lambda^{s}_{k}\!=\!\lambda^{s}_{k-1}+Az^{s}_{k}+By^{s}_{k}-c$;}
\ENDFOR
\STATE {$\widetilde{x}^{s}\!=\!\frac{1}{m}\!\sum^{m}_{k=1}\!x^{s}_{k}$,\, $\widetilde{y}^{s}\!=\!(1\!-\!\theta_{\!s-\!1})\widetilde{y}^{s-\!1}\!+\!\frac{\theta_{\!s-\!1}}{m}\!\!\sum^{m}_{k=1}\!y^{s}_{k}$,}
\STATE {$\widetilde{\lambda}^{s}\!=\!\lambda^{s}_{m}$,\, $\widetilde{z}^{s}\!=\!z^{s}_{m}$,\, $\theta_{s}\!=\!\frac{\sqrt{\theta^{4}_{s-\!1}+4\theta^{2}_{s-\!1}}-\:\!\theta^{2}_{s-\!1}}{2}$;}
\ENDFOR
\OUTPUT {$\widetilde{x}^{T}$, $\widetilde{y}^{T}$.}
\end{algorithmic}
\end{algorithm}

\section{Convergence Analysis}
This section provides the convergence analysis of our ASVRG-ADMM algorithms (i.e.\ Algorithms~\ref{alg1} and~\ref{alg2}) for strongly convex and general convex problems, respectively. Following~\citep{zheng:fadmm}, we first introduce the following function $P(x,y):= f(x)-f(x^{*})-\nabla\!f(x^{*})^{T}\!(x-x^{*})+h(y)-h(y^{*})-h'(y^{*})^{T}\!(y-y^{*})$ as a convergence criterion, where $h'(y)$ denotes the (sub)gradient of $h(\cdot)$ at $y$. Indeed, $P(x,y)\!\geq\!0$ for all $x,y\!\in\!\mathbb{R}^{d}$. In the following, we give the intermediate key results for our analysis.

\begin{lemma}\label{lemm1}
\begin{displaymath}
\begin{split}
&\mathbb{E}[\|\widetilde{\nabla}\!f_{I_{k}}\!(x^{s}_{k-\!1})-\nabla\!f(x^{s}_{k-\!1})\|^{2}]\\
\leq &2L\delta(b)\!\left[f(\widetilde{x}^{s-\!1})\!-\!f(x^{s}_{k-\!1})\!+\!(x^{s}_{k-\!1}\!-\widetilde{x}^{s-\!1})^{T}\nabla\!f(x^{s}_{k-\!1})\right],
\end{split}
\end{displaymath}
where $\delta(b)\!=\!\frac{n-b}{b(n-1)}\!\leq\!1\;\textrm{and}\; 1\leq b\leq n$.
\end{lemma}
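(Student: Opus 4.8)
The plan is to recognize this as a mini-batch variance bound and to decompose it into two independent pieces: a sampling-variance computation, which produces the factor $\delta(b)$, and a smoothness estimate, which produces the factor $2L$ together with the bracketed $f$-expression. First I would introduce the shorthand $\zeta_i := \nabla\!f_i(x^s_{k-1}) - \nabla\!f_i(\widetilde{x}^{s-1})$ and note from the definition (\ref{equ5}) that the centered estimate satisfies $\widetilde{\nabla}\!f_{I_k}(x^s_{k-1}) - \nabla\!f(x^s_{k-1}) = \frac{1}{b}\sum_{i\in I_k}\zeta_i - \bar\zeta$, where $\bar\zeta = \frac{1}{n}\sum_{i=1}^n\zeta_i = \nabla\!f(x^s_{k-1}) - \nabla\!f(\widetilde{x}^{s-1})$. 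Since $\mathbb{E}[\widetilde{\nabla}\!f_{I_k}(x^s_{k-1})] = \nabla\!f(x^s_{k-1})$, the quantity to be bounded is exactly the variance of the sample mean $\frac{1}{b}\sum_{i\in I_k}\zeta_i$ under the uniform without-replacement draw of $I_k$.

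The key step is the sampling-variance computation. I would write the sample mean through indicator variables, $\frac{1}{b}\sum_{i\in I_k}\zeta_i = \frac{1}{b}\sum_{i=1}^n \mathbf{1}[i\in I_k]\zeta_i$, and use the elementary facts $\mathbb{E}[\mathbf{1}[i\in I_k]] = b/n$ and $\mathbb{E}[\mathbf{1}[i\in I_k]\mathbf{1}[j\in I_k]] = \frac{b(b-1)}{n(n-1)}$ for $i\neq j$. Expanding $\mathbb{E}\|\frac{1}{b}\sum_{i\in I_k}\zeta_i\|^2$ and subtracting $\|\bar\zeta\|^2$ collapses, after collecting the coefficients of $\frac{1}{n}\sum_i\|\zeta_i\|^2$ and of $\|\bar\zeta\|^2$, to the finite-population identity $\mathbb{E}[\|\frac{1}{b}\sum_{i\in I_k}\zeta_i - \bar\zeta\|^2] = \delta(b)\,(\frac{1}{n}\sum_i\|\zeta_i\|^2 - \|\bar\zeta\|^2)$, with $\delta(b) = \frac{n-b}{b(n-1)}$ emerging precisely as the correction factor. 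Dropping the nonnegative term $\|\bar\zeta\|^2$ then yields the clean upper bound $\delta(b)\cdot\frac{1}{n}\sum_{i=1}^n\|\zeta_i\|^2$.

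For the final step I would invoke convexity and $L_i$-smoothness of each $f_i$ in the form of the standard co-coercivity inequality $\|\nabla\!f_i(x) - \nabla\!f_i(y)\|^2 \leq 2L_i[f_i(y) - f_i(x) - \nabla\!f_i(x)^T(y-x)]$, applied with $x = x^s_{k-1}$ and $y = \widetilde{x}^{s-1}$. Using $L = \max_i L_i$ to dominate each $L_i$, averaging over $i\in[n]$, and recombining $\frac{1}{n}\sum_i f_i = f$ and $\frac{1}{n}\sum_i\nabla\!f_i = \nabla\!f$ converts $\frac{1}{n}\sum_i\|\zeta_i\|^2$ into $2L[f(\widetilde{x}^{s-1}) - f(x^s_{k-1}) + (x^s_{k-1} - \widetilde{x}^{s-1})^T\nabla\!f(x^s_{k-1})]$, which multiplied by $\delta(b)$ is exactly the claimed bound.

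The main obstacle I anticipate is the sampling-variance computation, specifically obtaining the without-replacement correction $\delta(b)$ rather than the naive $1/b$ that with-replacement sampling would yield. The bookkeeping of the pairwise indicator covariance $\frac{b(b-1)}{n(n-1)} - (b/n)^2$ is where the $(n-1)$ denominator, and hence the exact form of $\delta(b)$, originates, and it is easy to mishandle the cross terms. Everything else—unbiasedness, discarding $\|\bar\zeta\|^2$, and the co-coercivity estimate—is routine.
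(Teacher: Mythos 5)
Your proof is correct and follows essentially the same route as the paper's: the authors likewise reduce the mini-batch variance to $\delta(b)$ times the single-sample quantity (citing a sampling lemma of Kone\v{c}n\'y and Richt\'arik for the without-replacement identity that you instead derive by hand via indicator variables), discard the squared-mean term using $\mathbb{E}[\|x-\mathbb{E}[x]\|^{2}]=\mathbb{E}[\|x\|^{2}]-\|\mathbb{E}[x]\|^{2}$, and finish with the same co-coercivity inequality for convex $L_i$-smooth functions (Nesterov's Theorem 2.1.5). The only cosmetic difference is that the paper first proves the $b=1$ case and then invokes the cited lemma for $b\geq 2$, whereas you handle all $1\leq b\leq n$ in a single indicator computation.
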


\begin{lemma}\label{lemm2}
Using the same notation as in Lemma 1, let $(x^{*},y^{*},\lambda^{*})$ denote an optimal solution of problem (\ref{equ2}), and $\{(z^{s}_{k},x^{s}_{k}, y^{s}_{k},\lambda^{s}_{k},\widetilde{x}^{s},\widetilde{y}^{s})\}$ be the sequence generated by Algorithm~\ref{alg1} or~\ref{alg2} with $\theta_{s}\!\leq1\!-\!\frac{\delta(b)}{\alpha\!-\!1}\!$, where $\alpha\!=\!\frac{1}{L\eta}$. Then the following holds for all $k$,
\begin{equation*}
\begin{split}
&\mathbb{E}\!\left[\!P(\widetilde{x}^{s}\!,\widetilde{y}^{s})\!-\!\frac{\theta_{\!s-\!1}}{m}\!\!\sum^{m}_{k=1}\!\left((x^{*}\!-\!z^{s}_{k})^{T}\!A^{T}\!\varphi^{s}_{k}\!+\!(y^{*}\!-\!y^{s}_{k})^{T}\! B^{T}\!\varphi^{s}_{k}\right)\!\right]\!\\
\!\leq& \mathbb{E}\!\left[\frac{P(\widetilde{x}^{s-\!1}\!,\widetilde{y}^{s-\!1})}{1/(1\!-\!\theta_{\!s-\!1})}+\frac{\!\theta^{2}_{\!s-\!1}\!\left(\|x^{*}\!-z^{s}_{0}\|^{2}_{G}\!-\!\|x^{*}\!-z^{s}_{m}\|^{2}_{G}\right)}{2m\eta}\right]\\
+&\frac{\beta\theta_{\!s-\!1}}{2m}\mathbb{E}\!\left[\!\|Az^{s}_{0}\!-\!Ax^{*}\|^{2}\!-\!\|Az^{s}_{m}\!\!-\!Ax^{*}\|^{2}\!+\!\!\sum^{m}_{k=1}\!\|\lambda^{s}_{k}\!-\!\lambda^{s}_{k-\!1}\|^{2}\!\right]
\end{split}
\end{equation*}
where $\varphi^{s}_{k}=\beta(\lambda^{s}_{k}-\lambda^{*})$.
 \end{lemma}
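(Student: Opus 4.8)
The plan is to prove Lemma~\ref{lemm2} by analyzing a single inner iteration $k$ within epoch $s$ and then summing over $k=1,\ldots,m$. First I would exploit the $z$-subproblem~(\ref{equ6}). By the first-order optimality condition for the minimizer $z^{s}_{k}$, and using the definition $G=\gamma I_{d_{1}}-\frac{\eta\beta}{\theta_{s-1}}A^{T}\!A$ together with the explicit update in Algorithms~\ref{alg1}--\ref{alg2}, I would derive a variational inequality that holds for every $x^{*}$: roughly, $(z^{s}_{k}-x^{*})^{T}\big(\widetilde{\nabla}\!f_{I_{k}}(x^{s}_{k-1})+\frac{\theta_{s-1}}{\eta}G(z^{s}_{k}-z^{s}_{k-1})+\beta A^{T}(Az^{s}_{k}+By^{s}_{k}-c+\lambda^{s}_{k-1})\big)\le 0$. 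The quadratic cross-term involving $G$ is then turned into a telescoping three-point identity, $\langle G(z^{s}_{k}-z^{s}_{k-1}),\,z^{s}_{k}-x^{*}\rangle=\tfrac12(\|x^{*}-z^{s}_{k-1}\|_{G}^{2}-\|x^{*}-z^{s}_{k}\|_{G}^{2}-\|z^{s}_{k}-z^{s}_{k-1}\|_{G}^{2})$, which is exactly what produces the telescoping $\|x^{*}-z^{s}_{0}\|_G^2-\|x^{*}-z^{s}_m\|_G^2$ term in the statement after summation.

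Next I would handle the gradient terms. I would split $\widetilde{\nabla}\!f_{I_{k}}(x^{s}_{k-1})=\nabla\!f(x^{s}_{k-1})+\big(\widetilde{\nabla}\!f_{I_{k}}(x^{s}_{k-1})-\nabla\!f(x^{s}_{k-1})\big)$; the deterministic part is bounded below using convexity and $L$-smoothness of $f$, while the stochastic deviation is controlled in expectation by Lemma~\ref{lemm1}. Crucially, because $x^{s}_{k}=(1-\theta_{s-1})\widetilde{x}^{s-1}+\theta_{s-1}z^{s}_{k}$, the convex-combination structure lets me relate $f(x^{s}_{k})$ to $f(x^{*})$ and $f(\widetilde{x}^{s-1})$, and the momentum weight $\theta_{s-1}$ appears naturally as the coefficient scaling the $z$-variables against the snapshot point. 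The smoothness inequality $f(x^{s}_{k})\le f(x^{s}_{k-1})+\nabla\!f(x^{s}_{k-1})^{T}(x^{s}_{k}-x^{s}_{k-1})+\frac{L}{2}\|x^{s}_{k}-x^{s}_{k-1}\|^{2}$ combined with $x^{s}_{k}-x^{s}_{k-1}=\theta_{s-1}(z^{s}_{k}-z^{s}_{k-1})$ generates a term $\frac{L\theta_{s-1}^{2}}{2}\|z^{s}_{k}-z^{s}_{k-1}\|^{2}$; this must be absorbed by the negative $\frac{\theta_{s-1}}{2\eta}\|z^{s}_{k}-z^{s}_{k-1}\|_{G}^{2}$ coming from the three-point identity, and this is where the step-size restriction $\eta<\frac{1}{2L}$ (equivalently $\alpha>1+\delta(b)$) and $G\succeq I$ are consumed to keep the residual nonpositive after invoking Lemma~\ref{lemm1}.

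For the dual and augmented-Lagrangian terms I would use the $\lambda$-update $\lambda^{s}_{k}=\lambda^{s}_{k-1}+Az^{s}_{k}+By^{s}_{k}-c$ to rewrite $\beta A^{T}(Az^{s}_{k}+By^{s}_{k}-c+\lambda^{s}_{k-1})$ in terms of $\varphi^{s}_{k}=\beta(\lambda^{s}_{k}-\lambda^{*})$ and the increment $\lambda^{s}_{k}-\lambda^{s}_{k-1}$, and I would bring in the $y$-subproblem optimality condition (giving the $h$-part of $P$ and the $B^{T}\varphi^{s}_{k}$ inner products). The feasibility identity $Ax^{*}+By^{*}=c$ converts the coupling terms into the differences $\|Az^{s}_{0}-Ax^{*}\|^{2}-\|Az^{s}_{m}-Ax^{*}\|^{2}$ and the $\sum_k\|\lambda^{s}_{k}-\lambda^{s}_{k-1}\|^{2}$ residual, while the inner products against $\varphi^{s}_{k}$ are precisely the terms kept on the left-hand side of the statement. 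Finally, summing over $k$, dividing by $m$, and using the epoch averages $\widetilde{x}^{s}=\frac1m\sum_k x^{s}_k$ and $\widetilde{y}^{s}=(1-\theta_{s-1})\widetilde{y}^{s-1}+\frac{\theta_{s-1}}{m}\sum_k y^{s}_k$ together with convexity of $P(\cdot,\cdot)$ (Jensen) assembles the left-hand $P(\widetilde{x}^{s},\widetilde{y}^{s})$ and the $\frac{1-\theta_{s-1}}{1}P(\widetilde{x}^{s-1},\widetilde{y}^{s-1})$ term.

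I expect the main obstacle to be the bookkeeping that couples the momentum weight $\theta_{s-1}$ across the three families of terms simultaneously: the quadratic $\|z^{s}_k-z^{s}_{k-1}\|^2$ residual from smoothness must be dominated by the $G$-weighted telescoping term \emph{after} Lemma~\ref{lemm1} injects an extra $2L\delta(b)$ factor, so verifying that the leftover coefficient is nonpositive is exactly the place where the constraints $\eta<\frac{1}{2L}$, $G\succeq I$, and $\theta_{s-1}\le 1-\frac{\delta(b)}{\alpha-1}$ are all used in concert. Getting the $\theta_{s-1}$ powers to line up so that the variance term is fully absorbed while the convex-combination argument still yields $P(\widetilde{x}^{s},\widetilde{y}^{s})$ with the correct $(1-\theta_{s-1})$ factor on the previous epoch is the crux; the telescoping and feasibility manipulations, by contrast, are routine once the single-step inequality is in hand.
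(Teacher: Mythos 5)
Your proposal follows essentially the same route as the paper's proof: the $z$-subproblem optimality condition plus the three-point identity (Property~1) for the telescoping $G$-norms, $L$-smoothness split so that part of the quadratic slack absorbs the variance term via Young's inequality and Lemma~1, the convex-combination point $(1-\theta_{s-1})\widetilde{x}^{s-1}+\theta_{s-1}x^{*}$ exploiting $\theta_{s-1}\le 1-\frac{\delta(b)}{\alpha-1}$, the $y$-subproblem and $\lambda$-update with feasibility $Ax^{*}+By^{*}=c$ for the dual terms, and finally summation over $k$ with Jensen's inequality to assemble $P(\widetilde{x}^{s},\widetilde{y}^{s})$ and the $(1-\theta_{s-1})P(\widetilde{x}^{s-1},\widetilde{y}^{s-1})$ factor. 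The only blemish is notational: the negative proximal term you would telescope carries coefficient $\frac{\theta^{2}_{s-1}}{2\eta}$ (an extra $\theta_{s-1}$ arises from $x^{s}_{k}-v^{*}=\theta_{s-1}(z^{s}_{k}-x^{*})$), not $\frac{\theta_{s-1}}{2\eta}$, exactly the power bookkeeping you flag as the crux, and it resolves as in the paper.
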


The detailed proofs of Lemmas~\ref{lemm1} and \ref{lemm2} are provided in the Supplementary Material.

\subsection{Linear Convergence}
Our first main result is the following theorem which gives the convergence rate of Algorithm~\ref{alg1}.

\begin{theorem}\label{theo1}
Using the same notation as in Lemma~\ref{lemm2} with given $\theta\!\leq\!1\!-\!\frac{\delta(b)}{\alpha-1}$, and suppose $f(\cdot)$ is $\mu$-strongly convex and $L_{\!f}$-smooth, and $m$ is sufficiently large so that
\begin{equation}\label{equ8}
\rho\!=\!\underbrace{\frac{\theta\|\theta G\!+\!\eta\beta A^{T}\!A\|_{2}}{\eta m\mu}}_{1}\!+\underbrace{1\!-\!\theta}_{2}+\!\underbrace{\frac{L_{\!f}\theta}{\beta m\sigma_{\min}(AA^{T})}}_{3}\!<\! 1,
\end{equation}
where $\sigma_{\min}(AA^{T})$ is the smallest eigenvalue of the positive semi-definite matrix $AA^{T}$, and $G$ is defined in (\ref{equ6}). Then
\begin{equation*}
\mathbb{E}\!\left[P(\widetilde{x}^{T}\!, \widetilde{y}^{T})\right]\leq \rho^{T} P(\widetilde{x}^{0}\!, \widetilde{y}^{0}).
\end{equation*}
\end{theorem}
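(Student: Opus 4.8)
The plan is to turn the single-epoch estimate of Lemma~\ref{lemm2} into a genuine contraction $\mathbb{E}[P(\widetilde{x}^{s},\widetilde{y}^{s})]\le\rho\,\mathbb{E}[P(\widetilde{x}^{s-1},\widetilde{y}^{s-1})]$ and then iterate it $T$ times. Because Algorithm~\ref{alg1} uses a constant weight $\theta_{s-1}\equiv\theta$, Lemma~\ref{lemm2} already exhibits the coefficient $1-\theta$ in front of $P(\widetilde{x}^{s-1},\widetilde{y}^{s-1})$, which supplies term~2 of $\rho$. The remaining work is to show that the cross terms on the left cancel the dual-residual sum on the right, and that the two surviving boundary quantities are each dominated by a constant multiple of $P(\widetilde{x}^{s-1},\widetilde{y}^{s-1})$ matching terms~1 and~3.

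First I would eliminate the inner-product terms $(x^{*}-z^{s}_{k})^{T}A^{T}\varphi^{s}_{k}+(y^{*}-y^{s}_{k})^{T}B^{T}\varphi^{s}_{k}$. Using feasibility of the optimum, $Ax^{*}+By^{*}=c$, together with the multiplier update $\lambda^{s}_{k}-\lambda^{s}_{k-1}=Az^{s}_{k}+By^{s}_{k}-c$ and $\varphi^{s}_{k}=\beta(\lambda^{s}_{k}-\lambda^{*})$, each summand equals $-\beta(\lambda^{s}_{k}-\lambda^{s}_{k-1})^{T}(\lambda^{s}_{k}-\lambda^{*})$. Applying the three-point identity $2(u-v)^{T}u=\|u\|^{2}-\|v\|^{2}+\|u-v\|^{2}$ with $u=\lambda^{s}_{k}-\lambda^{*}$, $v=\lambda^{s}_{k-1}-\lambda^{*}$ and summing over $k$ telescopes the distance terms and reproduces exactly $\sum_{k}\|\lambda^{s}_{k}-\lambda^{s}_{k-1}\|^{2}$; this sum cancels the identical quantity $\frac{\beta\theta}{2m}\sum_{k}\|\lambda^{s}_{k}-\lambda^{s}_{k-1}\|^{2}$ on the right of Lemma~\ref{lemm2}. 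What remains on the left is $\frac{\beta\theta}{2m}(\|\lambda^{s}_{m}-\lambda^{*}\|^{2}-\|\lambda^{s}_{0}-\lambda^{*}\|^{2})$, and discarding the nonnegative $\|\lambda^{s}_{m}-\lambda^{*}\|^{2}$ leaves $P(\widetilde{x}^{s},\widetilde{y}^{s})$ alone on the left while moving $+\frac{\beta\theta}{2m}\|\lambda^{s}_{0}-\lambda^{*}\|^{2}$ to the right.

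Next I would bound the two surviving boundary contributions. For the primal endpoint I use $z^{s}_{0}=\widetilde{x}^{s-1}$ to merge the two quadratics, $\frac{\theta^{2}}{2m\eta}\|x^{*}-z^{s}_{0}\|^{2}_{G}+\frac{\beta\theta}{2m}\|A(z^{s}_{0}-x^{*})\|^{2}=\frac{\theta}{2m\eta}\|x^{*}-\widetilde{x}^{s-1}\|^{2}_{\theta G+\eta\beta A^{T}\!A}$, dominate it by $\frac{\theta\|\theta G+\eta\beta A^{T}\!A\|_{2}}{2m\eta}\|x^{*}-\widetilde{x}^{s-1}\|^{2}$, and invoke $\mu$-strong convexity in the form $\frac{\mu}{2}\|\widetilde{x}^{s-1}-x^{*}\|^{2}\le P(\widetilde{x}^{s-1},\widetilde{y}^{s-1})$ to produce term~1; the negative endpoint quadratics at $z^{s}_{m}$ are nonpositive and are simply dropped. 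For the dual endpoint I use the initialization $\lambda^{s}_{0}=\widetilde{\lambda}^{s-1}=-\frac{1}{\beta}(A^{T})^{\dag}\nabla f(\widetilde{x}^{s-1})$ and the optimality relation $\lambda^{*}=-\frac{1}{\beta}(A^{T})^{\dag}\nabla f(x^{*})$, giving $\|\lambda^{s}_{0}-\lambda^{*}\|^{2}\le\frac{1}{\beta^{2}\sigma_{\min}(AA^{T})}\|\nabla f(\widetilde{x}^{s-1})-\nabla f(x^{*})\|^{2}$ via the full-row-rank bound $\|(A^{T})^{\dag}\|_{2}^{2}=1/\sigma_{\min}(AA^{T})$; then $L_{f}$-smoothness with convexity yields $\|\nabla f(\widetilde{x}^{s-1})-\nabla f(x^{*})\|^{2}\le 2L_{f}P(\widetilde{x}^{s-1},\widetilde{y}^{s-1})$, which is term~3. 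Collecting the three pieces gives the one-step contraction with factor $\rho$ of (\ref{equ8}), and a straightforward induction on $s$ starting from the deterministic $P(\widetilde{x}^{0},\widetilde{y}^{0})$ delivers $\mathbb{E}[P(\widetilde{x}^{T},\widetilde{y}^{T})]\le\rho^{T}P(\widetilde{x}^{0},\widetilde{y}^{0})$.

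The step I expect to be most delicate is the reduction of the $\varphi^{s}_{k}$ cross terms and their exact cancellation with the dual-residual sum: one must track signs carefully through the substitution of the constraint and the multiplier update, and verify that the coefficient $\frac{\beta\theta}{2m}$ produced by the three-point identity coincides with the one already present in Lemma~\ref{lemm2}, so that $\sum_{k}\|\lambda^{s}_{k}-\lambda^{s}_{k-1}\|^{2}$ vanishes entirely. Without this cancellation the accumulated per-iteration residual would not telescope across the epoch and the linear rate would fail to close. By contrast, the estimates yielding terms~1 and~3 are routine applications of strong convexity, smoothness, and the spectral identity afforded by the full-row-rank assumption on $A$.
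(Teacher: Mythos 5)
Your proposal is correct and follows essentially the same route as the paper's proof: the cancellation of the $\varphi^{s}_{k}$ cross terms against $\sum_{k}\|\lambda^{s}_{k}-\lambda^{s}_{k-1}\|^{2}$ via the three-point identity is exactly what the paper packages as its Lemma 4, your dual-endpoint bound via $\|(A^{T})^{\dag}\|_{2}^{2}=1/\sigma_{\min}(AA^{T})$ together with co-coercivity of $\nabla f$ reproduces the paper's Lemmas 6--7, and the merging of the primal endpoint quadratics into $\theta G+\eta\beta A^{T}\!A$, the dropping of the nonpositive terms at $z^{s}_{m}$ and $\lambda^{s}_{m}$, the strong-convexity bound yielding term 1, and the final iteration to $\rho^{T}$ all coincide step for step with the paper's argument. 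The only difference is cosmetic: you rederive inline the auxiliary facts that the paper states as separate lemmas imported from prior work.
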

The proof of Theorem~\ref{theo1} is provided in the Supplementary Material. From Theorem~\ref{theo1}, one can see that ASVRG-ADMM achieves linear convergence, which is consistent with that of SVRG-ADMM, while SCAS-ADMM has only an $\mathcal{O}(1/T)$ convergence rate.

\begin{remark}
Theorem~\ref{theo1} shows that our result improves slightly upon the rate $\rho$ in~\citep{zheng:fadmm} with the same $\eta$ and $\beta$. Specifically, as shown in~(\ref{equ8}), $\rho$ consists of three components, corresponding to those of Theorem 1 in~\citep{zheng:fadmm}. In Algorithm 1, recall that here $\theta\!\leq\!1$ and $G$ is defined in (\ref{equ6}). Thus, both the first and third terms in~(\ref{equ8}) are slightly smaller than those of Theorem 1 in~\citep{zheng:fadmm}. In addition, one can set $\eta\!=\!{1}/{8L}$ (i.e.\ $\alpha\!=\!8$) and $\theta\!=\!1\!-\!{\delta(b)}/({\alpha\!-\!1})\!=\!1\!-\!{\delta(b)}/{7}$. Thus, the second term in~(\ref{equ8}) equals to ${\delta(b)}/{7}$, while that of SVRG-ADMM is approximately equal to ${4L\eta\delta(b)}/(1\!-\!4L\eta\delta(b))\!\geq\!{\delta(b)}/{2}$. In summary, the convergence bound of SVRG-ADMM can be slightly improved by ASVRG-ADMM.
\end{remark}

\subsection{Selecting Scheme of $\theta$}

The rate $\rho$ in~(\ref{equ8}) of Theorem~\ref{theo1} can be expressed as the function with respect to the parameters $\theta$ and $\beta$ with given $m,\eta, L_{f}, L, A, \mu$. Similar to~\citep{nishihara:admm,zheng:fadmm}, one can obtain the optimal parameter $\beta^{*}\!=\!\sqrt{{L_{\!f}\mu}/(\sigma_{\min}(AA^{T})\|A^{T}\!A\|_{2})}$, which produces a smaller rate $\rho$. In addition, as shown in~(\ref{equ8}), all the three terms are with respect to the weight $\theta$. Therefore, we give the following selecting scheme for $\theta$.

\begin{proposition}
Given $\kappa_{\!f}\!=\!L_{\!f}/\mu,\beta^{*},\kappa\!=\!L/\mu, b, A$, and let $\omega\!=\!\|A^{T}\!A\|_{2}/\sigma_{\min}(AA^{T})$, we set $m\!>\!2\kappa\!+\!2\sqrt{\kappa_{\!f}\omega}$ and $\eta\!=\!{1}/{(L\alpha)}$, where $\alpha\!=\!\frac{m-2\sqrt{\kappa_{\!f} \omega}}{2\kappa}\!+\!\delta(b)\!+\!1$. Then the optimal $\theta^{*}$ of Algorithm 1 is given by
\begin{equation*}
\theta^{*}=\frac{m-2\sqrt{\kappa_{\!f}\omega}}{m-2\sqrt{\kappa_{\!f}\omega}+2\kappa(\delta(b)+1)}.
\end{equation*}
\end{proposition}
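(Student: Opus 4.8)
The plan is to treat the rate $\rho$ in (\ref{equ8}) as an explicit function of the two free parameters $\beta$ and $\theta$ (with $m,\eta,L_{\!f},L,A,\mu$ held fixed) and minimize it sequentially, first over $\beta$ and then over $\theta$. The first step is to simplify the spectral norm in the leading term. Since $G=\gamma I_{d_1}-\frac{\eta\beta}{\theta}A^{T}\!A$ in Algorithm~\ref{alg1}, we have $\theta G+\eta\beta A^{T}\!A=\theta\gamma I_{d_1}$, so $\|\theta G+\eta\beta A^{T}\!A\|_{2}=\theta\gamma$. Choosing the smallest admissible $\gamma=\gamma_{\min}=\frac{\eta\beta\|A^{T}\!A\|_{2}}{\theta}+1$ (which minimizes this term while keeping $G\succeq I$), the first term of (\ref{equ8}) becomes $\frac{\theta\beta\|A^{T}\!A\|_{2}}{m\mu}+\frac{\theta^{2}}{\eta m\mu}$. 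Thus $\rho$ splits into a $\beta$-linear piece, a $\beta$-reciprocal piece (the third term of (\ref{equ8})), and a $\beta$-independent remainder.

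Next I would minimize over $\beta$. Collecting the $\beta$-dependent terms gives an expression of the form $c_{1}\beta+c_{2}/\beta$ with $c_{1}=\frac{\theta\|A^{T}\!A\|_{2}}{m\mu}$ and $c_{2}=\frac{L_{\!f}\theta}{m\sigma_{\min}(AA^{T})}$, which by AM--GM is minimized at $\beta^{*}=\sqrt{c_{2}/c_{1}}=\sqrt{L_{\!f}\mu/(\sigma_{\min}(AA^{T})\|A^{T}\!A\|_{2})}$ with minimal value $2\sqrt{c_{1}c_{2}}=\frac{2\theta}{m}\sqrt{\kappa_{\!f}\omega}$. This recovers the stated $\beta^{*}$ and reduces $\rho$ to
\[
\rho=\frac{2\theta}{m}\sqrt{\kappa_{\!f}\omega}+\frac{\theta^{2}}{\eta m\mu}+(1-\theta).
\]

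Then I would substitute $\eta=1/(L\alpha)$, so that $\frac{\theta^{2}}{\eta m\mu}=\frac{\theta^{2}\alpha\kappa}{m}$, and use the definition of $\alpha$ to obtain $\alpha\kappa=\tfrac{1}{2}(m-2\sqrt{\kappa_{\!f}\omega})+\kappa(\delta(b)+1)$. Writing $P:=m-2\sqrt{\kappa_{\!f}\omega}$ and $Q:=2\kappa(\delta(b)+1)$, the first term equals $\theta-\frac{\theta P}{m}$ (because $2\sqrt{\kappa_{\!f}\omega}=m-P$) and the second equals $\frac{\theta^{2}(P+Q)}{2m}$, so after cancelling the $\theta$ against $1-\theta$ everything collapses to the clean quadratic $\rho=1-\frac{\theta P}{m}+\frac{\theta^{2}(P+Q)}{2m}$. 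Setting $d\rho/d\theta=0$ gives $\theta(P+Q)=P$, i.e.\ $\theta^{*}=P/(P+Q)$, which is exactly the advertised formula.

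Finally, I would verify feasibility. Since $\alpha-1=\frac{P}{2\kappa}+\delta(b)$, the constraint bound satisfies $1-\frac{\delta(b)}{\alpha-1}=\frac{P}{P+2\kappa\delta(b)}$, which strictly exceeds $\theta^{*}=\frac{P}{P+2\kappa\delta(b)+2\kappa}$; hence the unconstrained minimizer obeys $0\le\theta^{*}\le1-\delta(b)/(\alpha-1)$ as required by Lemma~\ref{lemm2}. Moreover $m>2\kappa+2\sqrt{\kappa_{\!f}\omega}$ guarantees $P>2\kappa>0$, so $\alpha>2+\delta(b)$ and therefore $\eta=1/(L\alpha)<1/(2L)$, consistent with (\ref{equ6}). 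The main obstacle here is not any single sharp estimate but the bookkeeping in the final algebraic collapse: the definition of $\alpha$ is engineered precisely so that the $\theta$-linear contribution of the $\beta$-optimized term cancels against the $1-\theta$ term, leaving a quadratic whose vertex is $\theta^{*}$. Getting that cancellation right, and confirming that the resulting minimizer lands inside the feasible window, is where the care is needed.
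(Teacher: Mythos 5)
Your proposal is correct and follows essentially the same route as the paper's proof: collapse $\|\theta G+\eta\beta A^{T}\!A\|_{2}$ to $\theta\gamma$ using the structure of $G$ and $\gamma_{\min}$, reduce $\rho$ to the quadratic $\frac{\kappa\alpha\theta^{2}}{m}+\frac{2\theta\sqrt{\kappa_{f}\omega}}{m}+1-\theta$, minimize in $\theta$, and check $\theta^{*}\leq 1-\delta(b)/(\alpha-1)$ exactly as the paper does. Your only additions are harmless bonuses the paper omits: deriving $\beta^{*}$ by AM--GM (the paper imports it from prior work, and your observation that the optimal $\beta$ is $\theta$-independent justifies the sequential minimization) and explicitly confirming $\eta<1/(2L)$.
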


The proof of Proposition 1 is provided in the Supplementary Material.

\begin{figure*}[t]
\centering
\includegraphics[width=0.493\columnwidth]{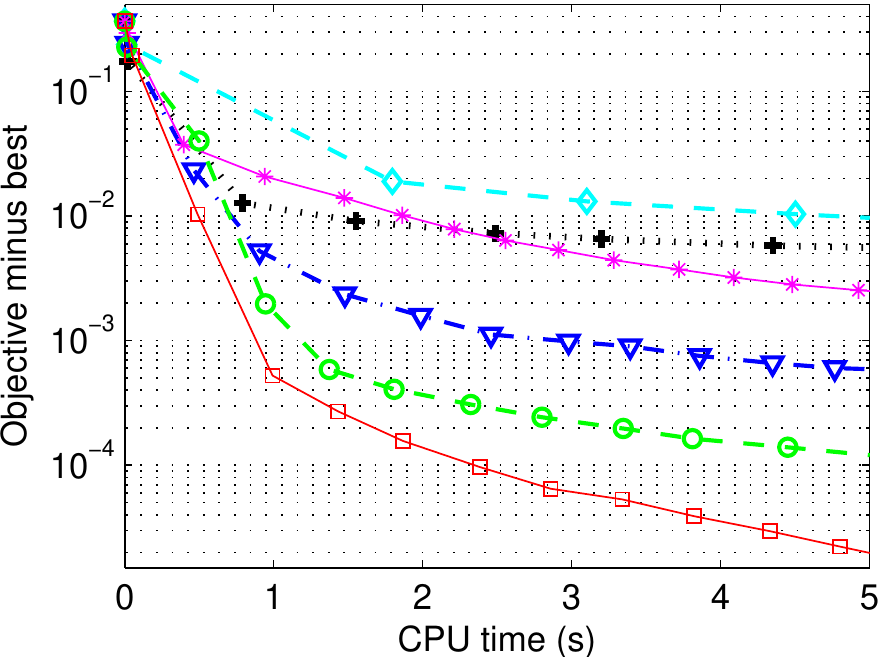}\;
\includegraphics[width=0.493\columnwidth]{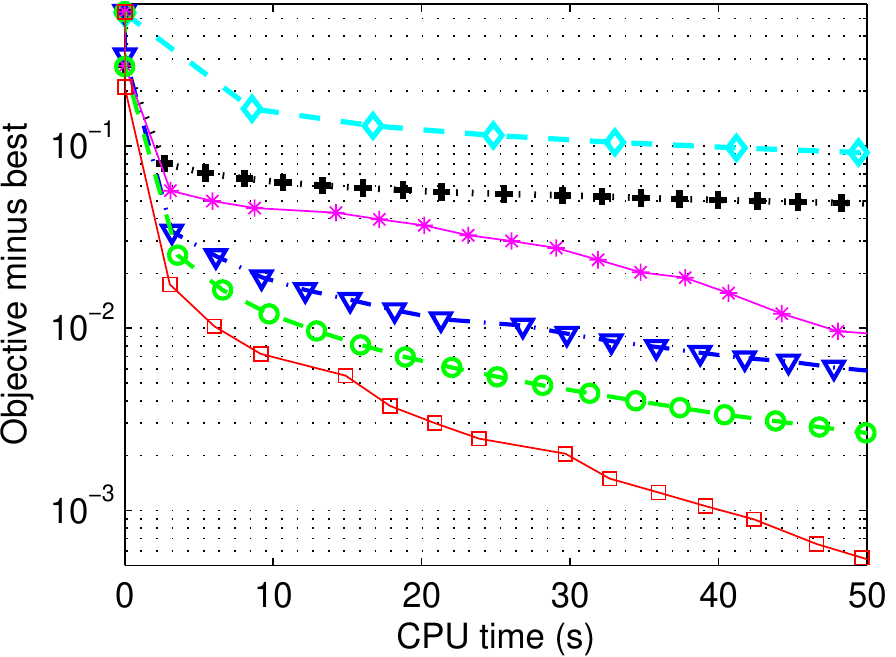}\;
\includegraphics[width=0.493\columnwidth]{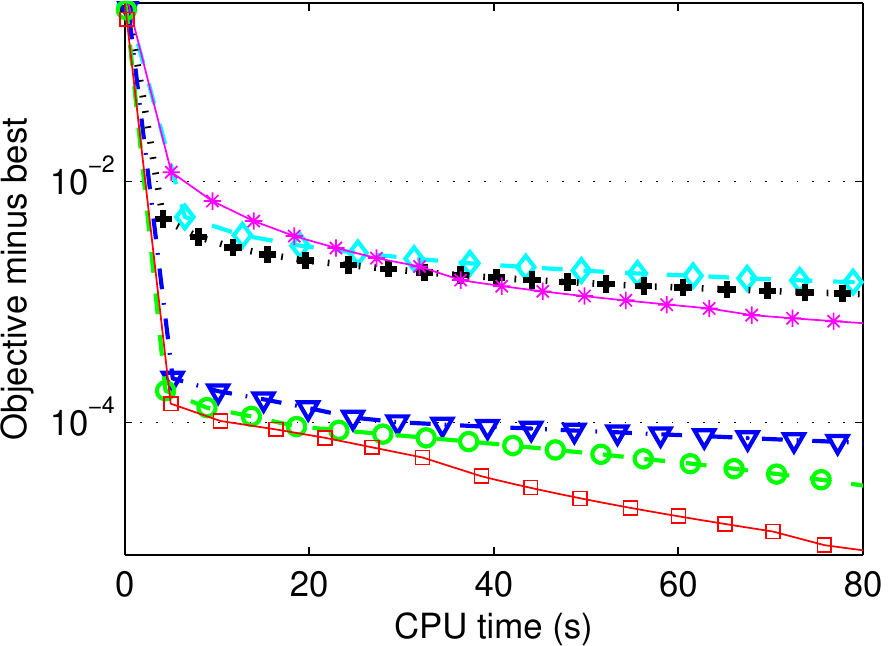}\;
\includegraphics[width=0.493\columnwidth]{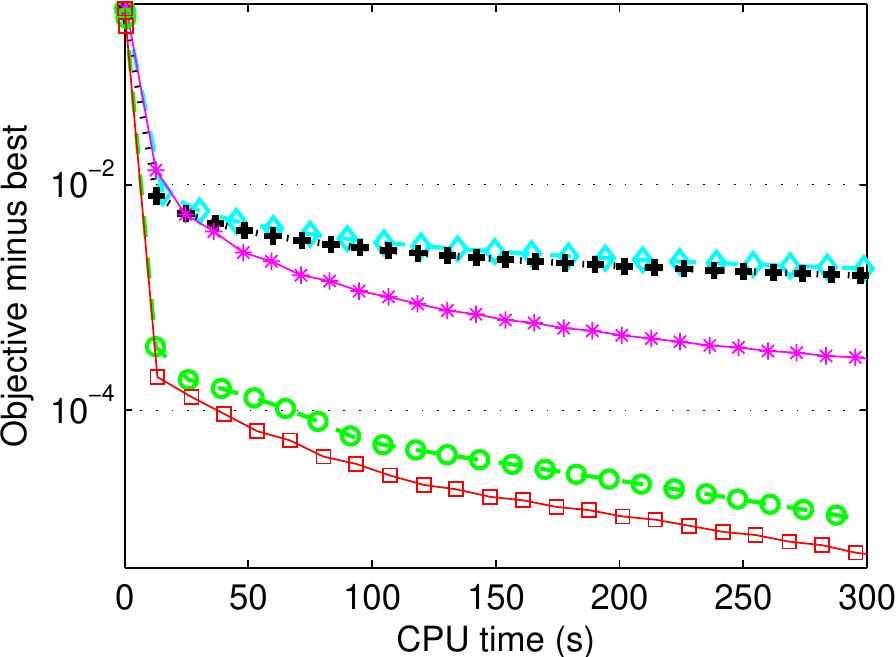}
\subfigure[a9a]{\includegraphics[width=0.493\columnwidth]{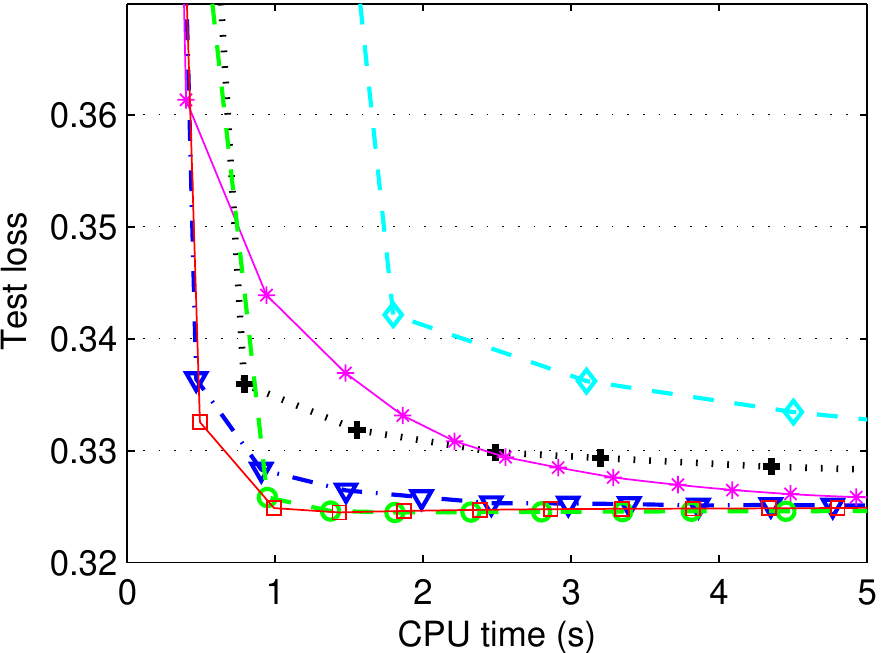}}\;
\subfigure[w8a]{\includegraphics[width=0.493\columnwidth]{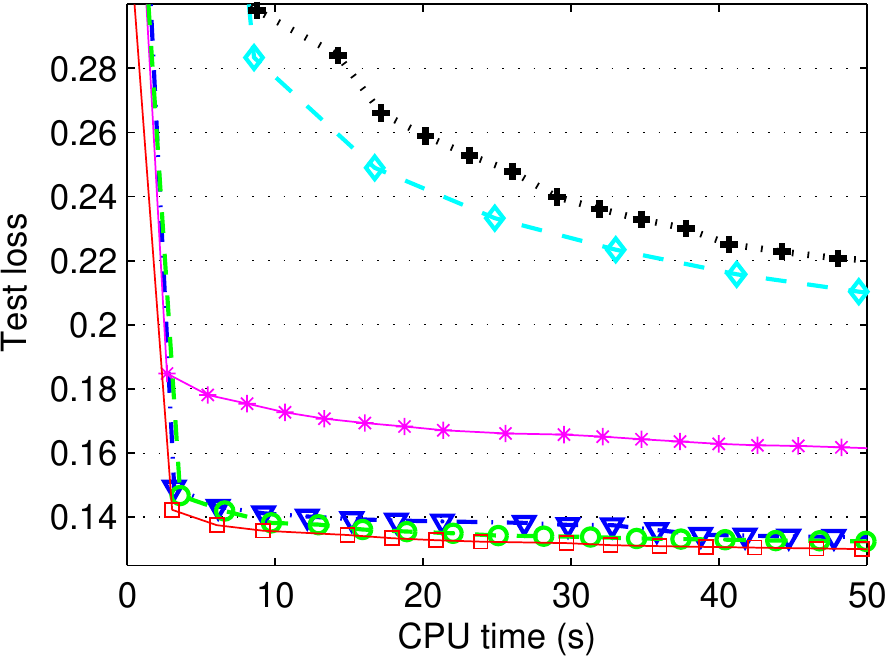}}\;
\subfigure[SUSY]{\includegraphics[width=0.493\columnwidth]{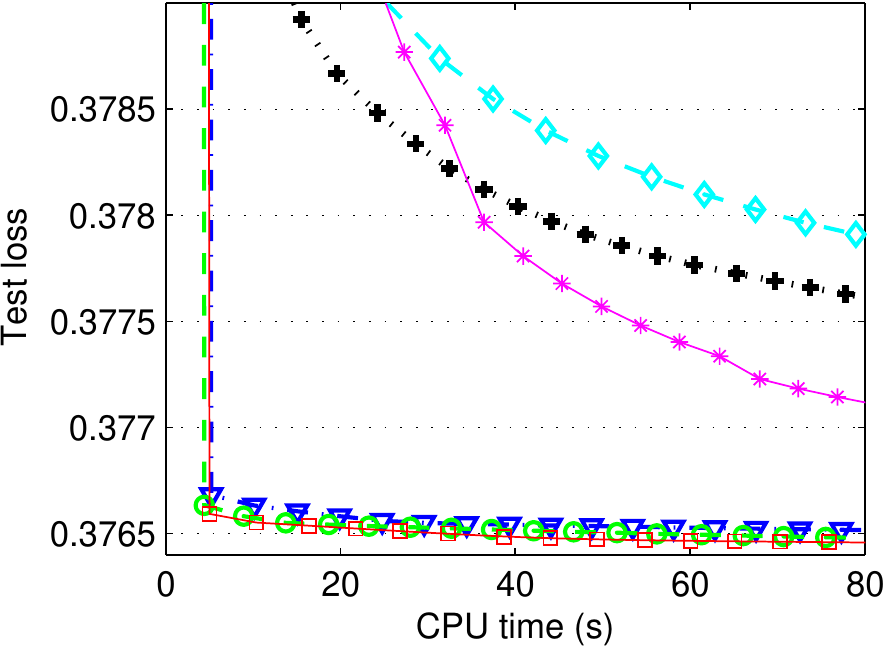}}\;
\subfigure[HIGGS]{\includegraphics[width=0.493\columnwidth]{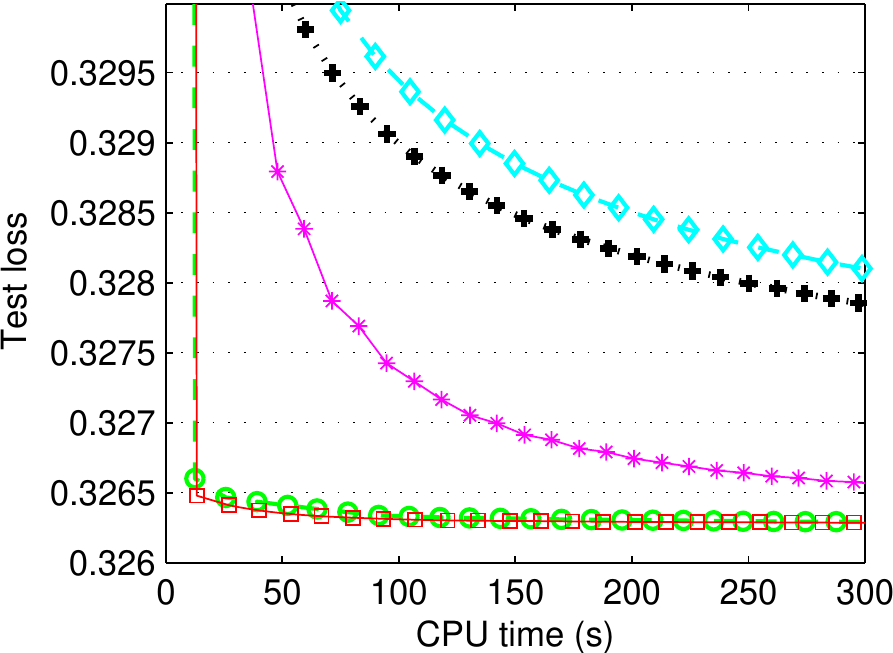}}
\includegraphics[width=0.3\columnwidth]{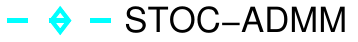}\;
\includegraphics[width=0.285\columnwidth]{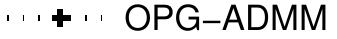}\;
\includegraphics[width=0.285\columnwidth]{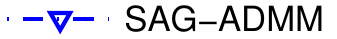}\;
\includegraphics[width=0.3\columnwidth]{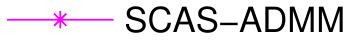}\;
\includegraphics[width=0.3\columnwidth]{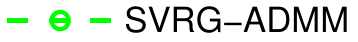}\;
\includegraphics[width=0.3\columnwidth]{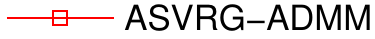}
\vspace{-2mm}
\caption{Comparison of different stochastic ADMM methods for graph-guided fuzed Lasso problems on the four data sets. The $x$-axis represents the objective value minus the minimum (top) or testing loss (bottom), and the $y$-axis corresponds to the running time (seconds).}
\label{fig_sim1}
\end{figure*}

\subsection{Convergence Rate of $\mathcal{O}(1/T^{2})$}
We first assume that $z\!\in\!\mathcal{Z}$, where $\mathcal{Z}$ is a convex compact set with diameter $D_{\mathcal{Z}}\!=\!\sup_{z_{1}\!,z_{2}\in \mathcal{Z}}\|z_{1}\!-\!z_{2}\|$, and the dual variable $\lambda$ is also bounded with $D_{\lambda}\!=\!\sup_{\lambda_{1}\!,\lambda_{2}}\|\lambda_{1}\!-\!\lambda_{2}\|$. For Algorithm 2, we give the following result.

\begin{theorem}\label{theo2}
Using the same notation as in Lemma~\ref{lemm2} with $\theta_{0}\!= \!1\!-\!\frac{\delta(b)}{\alpha-1}$, then we have
\begin{equation}\label{equ141}
\begin{split}
&\mathbb{E}\!\left[P(\widetilde{x}^{T}\!,\widetilde{y}^{T})\!+\gamma\|A\widetilde{x}^{T}\!+B\widetilde{y}^{T}\!-c\|\right]\\
\leq&\, \frac{4(\alpha\!-\!1)\delta(b)\left(P(\widetilde{x}^{0}\!,\widetilde{y}^{0})+ \gamma\|A\widetilde{x}^{0}\!+B\widetilde{y}^{0}\!-c\|\right)}{(\alpha-1-\delta(b))^{2}(T\!+\!1)^{2}}\\
&+\frac{2L\alpha\|x^{*}\!-\widetilde{x}^{0}\|^{2}_{G}}{m(T+1)^{2}}+\frac{2\beta\left(\|A^{T}\!A\|_{2}D^{2}_{\mathcal{Z}}+4D^{2}_{\lambda}\right)}{m(T\!+\!1)}.
\end{split}
\end{equation}
\end{theorem}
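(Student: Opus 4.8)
The plan is to convert the one-epoch estimate of Lemma~\ref{lemm2} into a FISTA-style telescoping recursion for an accelerated potential and then sum it. First I would specialize Lemma~\ref{lemm2} to Algorithm~\ref{alg2}, where $z^{s}_{0}=\widetilde{z}^{s-1}=z^{s-1}_{m}$ and $\lambda^{s}_{0}=\widetilde{\lambda}^{s-1}=\lambda^{s-1}_{m}$, so that the Bregman term $\|x^{*}-z^{s}_{0}\|^{2}_{G}$ and the primal/dual residual norms join up smoothly across consecutive epochs. The crucial preprocessing is to rewrite the inner-product terms on the left-hand side of Lemma~\ref{lemm2}: using optimality feasibility $Ax^{*}+By^{*}=c$ together with the dual update $\lambda^{s}_{k}-\lambda^{s}_{k-1}=Az^{s}_{k}+By^{s}_{k}-c$, each bracket collapses to $(c-Az^{s}_{k}-By^{s}_{k})^{T}\varphi^{s}_{k}=-(\lambda^{s}_{k}-\lambda^{s}_{k-1})^{T}\beta(\lambda^{s}_{k}-\lambda^{*})$, and the elementary identity $2(u_{k}-u_{k-1})^{T}u_{k}=\|u_{k}\|^{2}-\|u_{k-1}\|^{2}+\|u_{k}-u_{k-1}\|^{2}$ with $u_{k}=\lambda^{s}_{k}-\lambda^{*}$ turns the sum into a telescoping dual term plus $\tfrac12\sum_{k}\|\lambda^{s}_{k}-\lambda^{s}_{k-1}\|^{2}$. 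This last sum exactly cancels the residual sum already present on the right-hand side of Lemma~\ref{lemm2}, which is the key step that prevents a constant (noise-floor) contribution from surviving.

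After this bookkeeping I would reach a clean recursion
\[
P(\widetilde{x}^{s},\widetilde{y}^{s})\leq(1-\theta_{s-1})P(\widetilde{x}^{s-1},\widetilde{y}^{s-1})+\frac{\theta^{2}_{s-1}}{2m\eta}\big(\|x^{*}-z^{s}_{0}\|^{2}_{G}-\|x^{*}-z^{s}_{m}\|^{2}_{G}\big)+\frac{\beta\theta_{s-1}}{2m}\Delta_{s},
\]
where $\Delta_{s}$ collects the \emph{difference-structured} primal and dual distances $\|Az^{s}_{0}-Ax^{*}\|^{2}-\|Az^{s}_{m}-Ax^{*}\|^{2}$ and $\|\lambda^{s}_{0}-\lambda^{*}\|^{2}-\|\lambda^{s}_{m}-\lambda^{*}\|^{2}$. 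I then divide by $\theta^{2}_{s-1}$ and use the update~(\ref{equ7}), which one checks satisfies $\theta^{2}_{s}=\theta^{2}_{s-1}(1-\theta_{s})$, i.e.\ $\tfrac{1-\theta_{s-1}}{\theta^{2}_{s-1}}=\tfrac{1}{\theta^{2}_{s-2}}$, so the objective-gap terms telescope across $s=1,\dots,T$. A short induction on~(\ref{equ7}) with $\theta_{0}=1-\delta(b)/(\alpha-1)$ gives the standard estimate $\theta_{T-1}\le 2/(T+1)$, hence $\theta^{2}_{T-1}\le 4/(T+1)^{2}$, which after rearrangement supplies the two $\mathcal{O}(1/(T+1)^{2})$ terms: the initial gap carries the coefficient $\theta^{2}_{T-1}\cdot\tfrac{1-\theta_{0}}{\theta^{2}_{0}}=\tfrac{4(\alpha-1)\delta(b)}{(\alpha-1-\delta(b))^{2}(T+1)^{2}}$, while the constant-coefficient $G$-term telescopes perfectly to $\tfrac{2L\alpha}{m(T+1)^{2}}\|x^{*}-\widetilde{x}^{0}\|^{2}_{G}$ using $\eta=1/(L\alpha)$.

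The delicate part is the group $\Delta_{s}$: after the division it carries the weight $1/\theta_{s-1}=\mathcal{O}(s)$, whose naive sum is $\mathcal{O}(T^{2})$ and would leave only a constant after multiplying back by $\theta^{2}_{T-1}$. Here I would exploit that each entry of $\Delta_{s}$ is a difference $a_{s-1}-a_{s}$ of a bounded nonnegative quantity, bounded via the compactness assumptions $\|Az^{s}_{m}-Ax^{*}\|^{2}\le\|A^{T}\!A\|_{2}D^{2}_{\mathcal{Z}}$ and $\|\lambda^{s}_{m}-\lambda^{*}\|^{2}\le\mathcal{O}(D^{2}_{\lambda})$, and apply summation by parts: $\sum_{s}b_{s}(a_{s-1}-a_{s})\le b_{1}a_{0}+\bar{a}\,b_{T}$ with $b_{s}=1/\theta_{s-1}$. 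This replaces the $\mathcal{O}(T^{2})$ growth by $\mathcal{O}(1/\theta_{T-1})=\mathcal{O}(T)$, so that after multiplication by $\theta^{2}_{T-1}$ the whole group contributes the single $\mathcal{O}(1/(T+1))$ term $\tfrac{2\beta(\|A^{T}\!A\|_{2}D^{2}_{\mathcal{Z}}+4D^{2}_{\lambda})}{m(T+1)}$.

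Finally, to recover the feasibility term on the left I would track the residual $A\widetilde{x}^{s}+B\widetilde{y}^{s}-c=(1-\theta_{s-1})(A\widetilde{x}^{s-1}+B\widetilde{y}^{s-1}-c)+\tfrac{\theta_{s-1}}{m}(\lambda^{s}_{m}-\lambda^{s}_{0})$, which follows from the definitions of $\widetilde{x}^{s},\widetilde{y}^{s}$ and the dual update, and fold $\gamma\|A\widetilde{x}^{s}+B\widetilde{y}^{s}-c\|$ into the potential so that it inherits the same $(1-\theta_{s-1})$-contraction; its driving term $\tfrac{\theta_{s-1}}{m}\|\lambda^{s}_{m}-\lambda^{s}_{0}\|$ is again controlled by $D_{\lambda}$ and absorbed into the same $\mathcal{O}(1/(T+1))$ bound. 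I expect the main obstacle to be precisely this coordination of two scalings: ensuring the objective-gap and initialization terms ride the fast $\theta^{2}$/FISTA telescoping down to $\mathcal{O}(1/T^{2})$, while the dual, residual and feasibility terms, which carry only the slower $\theta$ weighting, are kept from degrading to a constant by combining the residual-sum cancellation with the summation-by-parts argument. Verifying that all constants line up (in particular the factor $4$ on $D^{2}_{\lambda}$ and the exact initial-gap coefficient) is the routine but error-prone remainder.
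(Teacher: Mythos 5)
Most of your plan reproduces the paper's proof faithfully: the collapse of the inner-product terms via $Ax^{*}+By^{*}=c$ and the dual update, the exact cancellation of $\sum_{k}\|\lambda^{s}_{k}-\lambda^{s}_{k-1}\|^{2}$ against the residual sum in Lemma~\ref{lemm2} (this is the content of the paper's auxiliary lemma on $-(Ax_{k}+By_{k}-c)^{T}(\varphi_{k}-\varphi)$), the division by $\theta^{2}_{s-1}$ with $(1-\theta_{s})/\theta^{2}_{s}=1/\theta^{2}_{s-1}$, the estimate $\theta_{s}\le 2/(s+2)$, and the summation-by-parts control of the difference-structured $\Delta_{s}$ group using $1/\theta_{s}-1/\theta_{s-1}<1$ together with $z^{s}_{0}=z^{s-1}_{m}$, $\lambda^{s}_{0}=\lambda^{s-1}_{m}$. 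Your constants (the initial-gap coefficient $\theta^{2}_{T-1}(1-\theta_{0})/\theta^{2}_{0}=\frac{4(\alpha-1)\delta(b)}{(\alpha-1-\delta(b))^{2}(T+1)^{2}}$ and the $G$-term with $\eta=1/(L\alpha)$) also come out correctly.

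The genuine gap is your treatment of the feasibility term. You run the dual telescoping with $\varphi^{s}_{k}=\beta(\lambda^{s}_{k}-\lambda^{*})$ only, and then try to recover $\gamma\|A\widetilde{x}^{T}+B\widetilde{y}^{T}-c\|$ through the separate norm recursion $\|r^{s}\|\le(1-\theta_{s-1})\|r^{s-1}\|+\frac{\theta_{s-1}}{m}\|\lambda^{s}_{m}-\lambda^{s}_{0}\|$. After dividing by $\theta^{2}_{s-1}$, the driving terms accumulate as $\frac{\gamma}{m}\sum^{T}_{s=1}\theta^{-1}_{s-1}\|\lambda^{s}_{m}-\lambda^{s}_{0}\|$, and here your summation-by-parts trick is unavailable: $\|\lambda^{s}_{m}-\lambda^{s}_{0}\|$ is the norm of a difference, not a difference of bounded nonnegative scalars, so it does not telescope. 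With only the boundedness $\|\lambda^{s}_{m}-\lambda^{s}_{0}\|\le 2D_{\lambda}$, the sum is $\Theta(T^{2})$ (since $\theta^{-1}_{s}-\theta^{-1}_{s-1}=1/(1+\sqrt{1-\theta_{s}})\ge 1/2$, so $\theta^{-1}_{s}$ grows linearly), and multiplying back by $\theta^{2}_{T-1}=\Theta(1/T^{2})$ leaves an additive constant of order $\gamma D_{\lambda}/m$ that does not decay in $T$ --- strictly weaker than the stated $\mathcal{O}(1/(mT))$ third term. The paper avoids exactly this by carrying an arbitrary shift $\varphi=\beta\lambda$ inside the dual telescoping from the start, working with $\|\lambda^{s}_{k}-\lambda^{*}-\lambda\|^{2}$, so that the feasibility enters the per-epoch inequality as the \emph{linear} functional $\langle A\widetilde{x}^{s}+B\widetilde{y}^{s}-c,\varphi\rangle$, which contracts exactly with factor $(1-\theta_{s-1})$ because the residual map is affine and the epoch averages satisfy the same convex-combination identity; only at the very end is $\varphi$ chosen as $\gamma(A\widetilde{x}^{T}+B\widetilde{y}^{T}-c)/\|A\widetilde{x}^{T}+B\widetilde{y}^{T}-c\|$ with $\gamma\le\beta D_{\lambda}$ to convert the linear term into the norm. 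The cost of the shift then sits inside the same Abel-summation bound via $\|\lambda^{s}_{0}-\lambda^{*}-\lambda\|\le 2D_{\lambda}$, which is precisely where the factor $4D^{2}_{\lambda}$ in the theorem comes from. You would need to adopt this fixed-$\varphi$ device (or an equivalent) to close your argument.
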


The proof of Theorem~\ref{theo2} is provided in the Supplementary Material. Theorem 2 shows that the convergence bound consists of the three components, which converge as $\mathcal{O}(1/T^{2})$, $\mathcal{O}(1/mT^{2})$ and $\mathcal{O}(1/mT)$, respectively, while the three components of SVRG-ADMM converge as $\mathcal{O}(1/T)$, $\mathcal{O}(1/mT)$ and $\mathcal{O}(1/mT)$. Clearly, ASVRG-ADMM achieves the convergence rate of $\mathcal{O}(1/T^{2})$ as opposed to $\mathcal{O}(1/T)$ of SVRG-ADMM and SAG-ADMM ($m\!\gg\!T$). All the components in the convergence bound of SCAS-ADMM converge as $\mathcal{O}(1/T)$. Thus, it is clear from this comparison that ASVRG-ADMM is a factor of $T$ faster than SAG-ADMM, SVRG-ADMM and SCAS-ADMM.

\subsection{Connections to Related Work}
Our algorithms and convergence results can be extended to the following settings. When the mini-batch size $b\!=\!n$ and $m\!=\!1$, then $\delta(n)\!=\!0$, that is, the first term of (\ref{equ141}) vanishes, and ASVRG-ADMM degenerates to the batch version. Its convergence rate becomes $\mathcal{O}(D^{2}_{x^{*}}\!/{(T\!+\!1)}^{2}\!+\!{D^{2}_{\mathcal{Z}}}/{(T\!+\!1)}\!+\!{D^{2}_{\lambda}}/{(T\!+\!1)})$ (which is consistent with the optimal result for accelerated deterministic ADMM methods~\citep{Goldstein:admm,lu:admm}), where $D_{x^{*}}\!=\!\|x^{*}\!-\!\widetilde{x}^{0}\|_{G}$. Many empirical risk minimization problems can be viewed as the special case of (\ref{equ1}) when $A\!=\!I$. Thus, our method can be extended to solve them, and has an $\mathcal{O}({1}/{T^{2}}\!+\!{1}/{(mT^{2})})$ rate, which is consistent with the best known result as in~\citep{zhu:Katyusha,hien:asmd}.

\section{Experiments}
In this section, we use our ASVRG-ADMM method to solve the general convex graph-guided fuzed Lasso, strongly convex graph-guided logistic regression and graph-guided SVM problems. We compare ASVRG-ADMM with the following state-of-the-art methods: STOC-ADMM~\citep{ouyang:sadmm}, OPG-ADMM~\citep{suzuki:oadmm}, SAG-ADMM~\citep{zhong:fsadmm}, and SCAS-ADMM~\citep{zhao:scasadmm} and SVRG-ADMM~\citep{zheng:fadmm}. All methods were implemented in MATLAB, and the experiments were performed on a PC with an Intel i5-2400 CPU and 16GB RAM.

\subsection{Graph-Guided Fused Lasso}
We first evaluate the empirical performance of the proposed method for solving the graph-guided fuzed Lasso problem:
\begin{equation}\label{equ51}
\min_{x}\frac{1}{n}\sum^{n}_{i=1}\ell_{i}(x)+\lambda_{1}\|Ax\|_{1},
\end{equation}
where $\ell_{i}$ is the logistic loss function on the feature-label pair $(a_{i},b_{i})$, i.e., $\log(1\!+\!\exp(-b_{i}a^{T}_{i}x))$, and $\lambda_{1}\!\geq\!0$ is the regularization
parameter. Here, we set $A=[G;I]$ as in~\citep{ouyang:sadmm,zhong:fsadmm,azadi:sadmm,zheng:fadmm}, where $G$ is the sparsity pattern of the graph obtained by sparse inverse covariance selection~\citep{banerjee:mle}. We used four publicly available data sets{\footnote{\url{http://www.csie.ntu.edu.tw/~cjlin/libsvmtools/datasets/}}} in our experiments, as listed in Table~\ref{tab2}. Note that except STOC-ADMM, all the other algorithms adopted the linearization of the penalty term $\frac{\beta}{2}\|Ax\!-\!y\!+\!z\|^{2}$ to avoid the inversion of $\frac{1}{\eta_{k}}\!I_{d_{1}}\!+\!\beta A^{T}\!A$ at each iteration, which can be computationally expensive for large matrices. The parameters of ASVRG-ADMM are set as follows: $m\!=\!2n/b$ and $\gamma\!=\!1$ as in~\citep{zhong:fsadmm,zheng:fadmm}, as well as $\eta$ and $\beta$.

\begin{table}[!th]
\centering
\small
\caption{Summary of data sets and regularization parameters used in our experiments.}
\label{tab2}
\begin{tabular}{lccccc}
\hline
Data sets   &\!$\sharp$ training   &$\sharp$ test  & \!\!$\sharp$ mini-batch\!\!  &$\lambda_{1}$ &$\lambda_{2}$\\
\hline
 a9a        & 16,281         & 16,280                       & 20         & 1e-5   & 1e-2\\
 w8a        & 32,350         & 32,350                       & 20         & 1e-5   & 1e-2\\
 SUSY       & \!\!3,500,000\!      & \!1,500,000\!          & 100        & 1e-5   & 1e-2\\
 HIGGS      & \!\!7,700,000\!      & \!3,300,000\!          & 150        & 1e-5   & 1e-2\\
\hline
\end{tabular}
\end{table}

Figure~\ref{fig_sim1} shows the training error (i.e.\ the training objective value minus the minimum) and testing loss of all the algorithms for the general convex problem on the four data sets. SAG-ADMM could not generate experimental results on the HIGGS data set because it ran out of memory. These figures clearly indicate that the variance reduced stochastic ADMM algorithms (including SAG-ADMM, SCAS-ADMM, SVRG-ADMM and ASVRG-ADMM) converge much faster than those without variance reduction techniques, e.g.\ STOC-ADMM and OPG-ADMM. Notably, ASVRG-ADMM consistently outperforms all other algorithms in terms of the convergence rate under all settings, which empirically verifies our theoretical result that ASVRG-ADMM has a faster convergence rate of $\mathcal{O}(1/T^{2})$, as opposed to the best known rate of $\mathcal{O}(1/T)$.

\begin{figure}[t]
\centering
\includegraphics[width=0.486\columnwidth]{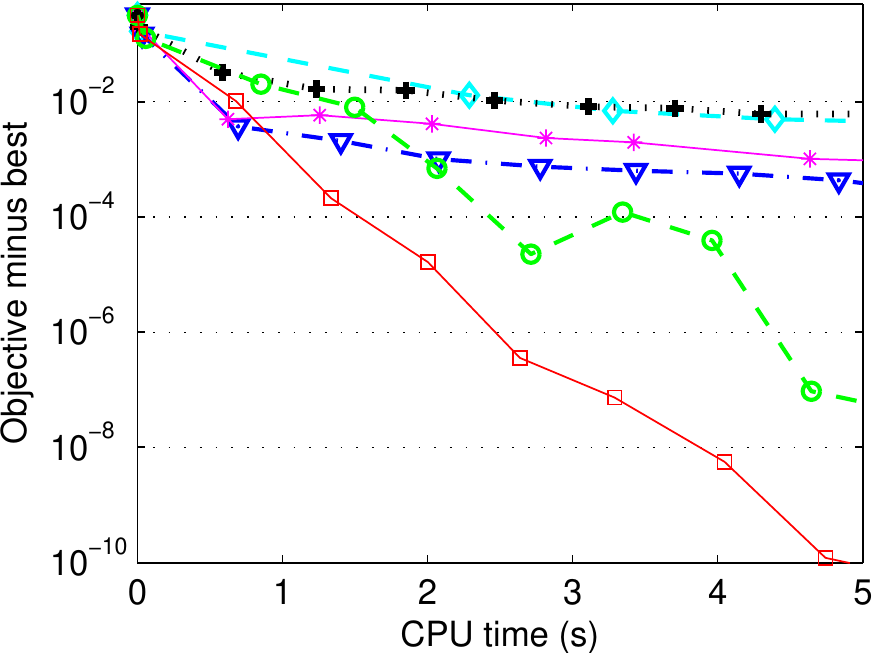}\;
\includegraphics[width=0.486\columnwidth]{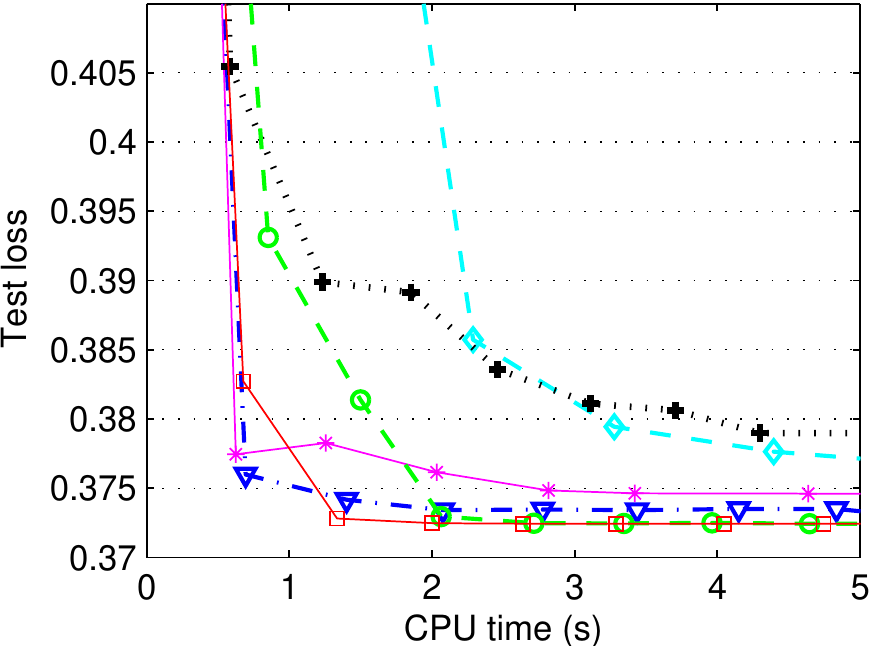}
\includegraphics[width=0.486\columnwidth]{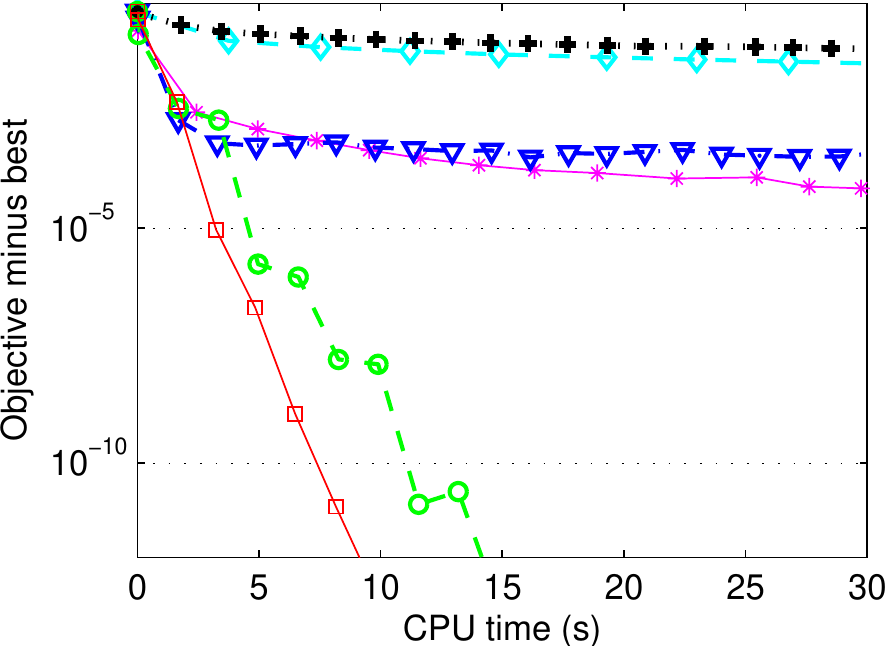}\;
\includegraphics[width=0.486\columnwidth]{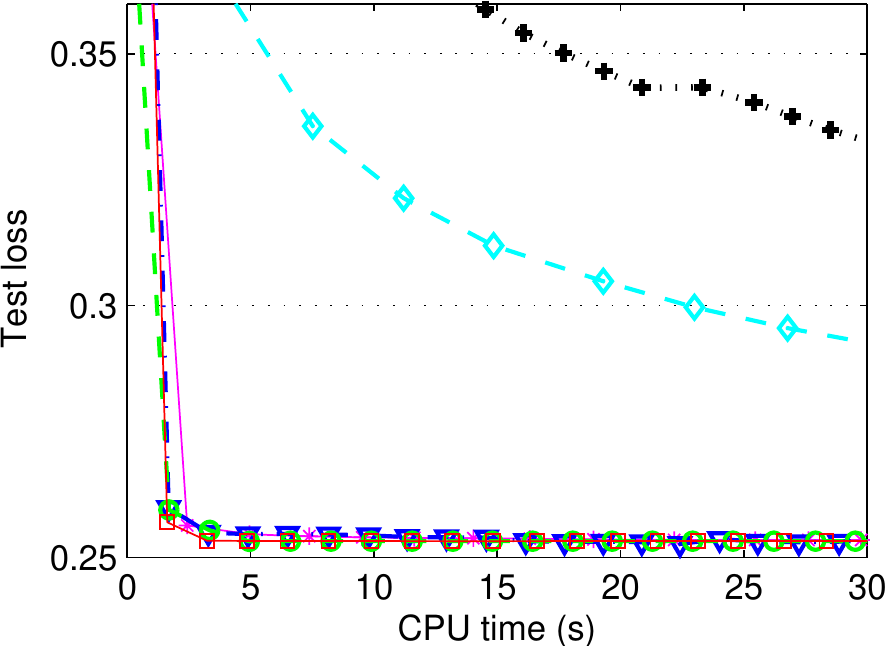}
\includegraphics[width=0.285\columnwidth]{line1}\;
\includegraphics[width=0.271\columnwidth]{line2}\;\:
\includegraphics[width=0.271\columnwidth]{line3}\;\,\\
\includegraphics[width=0.285\columnwidth]{line4}\;
\includegraphics[width=0.285\columnwidth]{line5}\;
\includegraphics[width=0.285\columnwidth]{line6}
\vspace{-2mm}
\caption{Comparison of different stochastic ADMM methods for graph-guided logistic regression problems on the two data sets: a9a (top) and w8a (bottom).}
\label{fig_sim2}
\end{figure}

\subsection{Graph-Guided Logistic Regression}
We further discuss the performance of ASVRG-ADMM for solving the strongly convex graph-guided logistic regression problem~\citep{ouyang:sadmm,zhong:stoc}:
\begin{equation}\label{equ52}
\min_{x}\frac{1}{n}\sum^{n}_{i=1}\!\left(\ell_{i}(x)+\frac{\lambda_{2}}{2} \|x\|^{2}_{2}\right)+\lambda_{1} \|Ax\|_{1}.
\end{equation}
Due to limited space and similar experimental phenomena on the four data sets, we only report the experimental results on the a9a and w8a data sets in Figure~\ref{fig_sim2}, from which we observe that SVRG-ADMM and ASVRG-ADMM achieve comparable performance, and they significantly outperform the other methods in terms of the convergence rate, which is consistent with their linear (geometric) convergence guarantees. Moreover, ASVRG-ADMM converges slightly faster than SVRG-ADMM, which shows the effectiveness of the momentum trick to accelerate variance reduced stochastic ADMM, as we expected.

\subsection{Graph-Guided SVM}
Finally, we evaluate the performance of ASVRG-ADMM for solving the graph-guided SVM problem,
\begin{equation}\label{equ53}
\min_{x}\frac{1}{n}\sum^{n}_{i=1}\!\left([1-b_{i}a^{T}_{i}x]_{+}+\frac{\lambda_{2}}{2}\|x\|^{2}_{2}\right)+\lambda_{1}\|Ax\|_{1},
\end{equation}
where $[x]_{+}\!=\!\max(0,x)$ is the non-smooth hinge loss. To effectively solve problem~(\ref{equ53}), we used the smooth Huberized hinge loss in~\citep{rosset:svm} to approximate the hinge loss. For the 20newsgroups dataset{\footnote{\url{http://www.cs.nyu.edu/~roweis/data.html}}}, we randomly divide it into 80\% training set and 20\% test set. Following~\citep{ouyang:sadmm}, we set $\lambda_{1}\!=\!\lambda_{2}\!=\!10^{-5}$, and use the one-vs-rest scheme for the multi-class classification.

\begin{figure}[t]
\centering
\includegraphics[width=0.486\columnwidth]{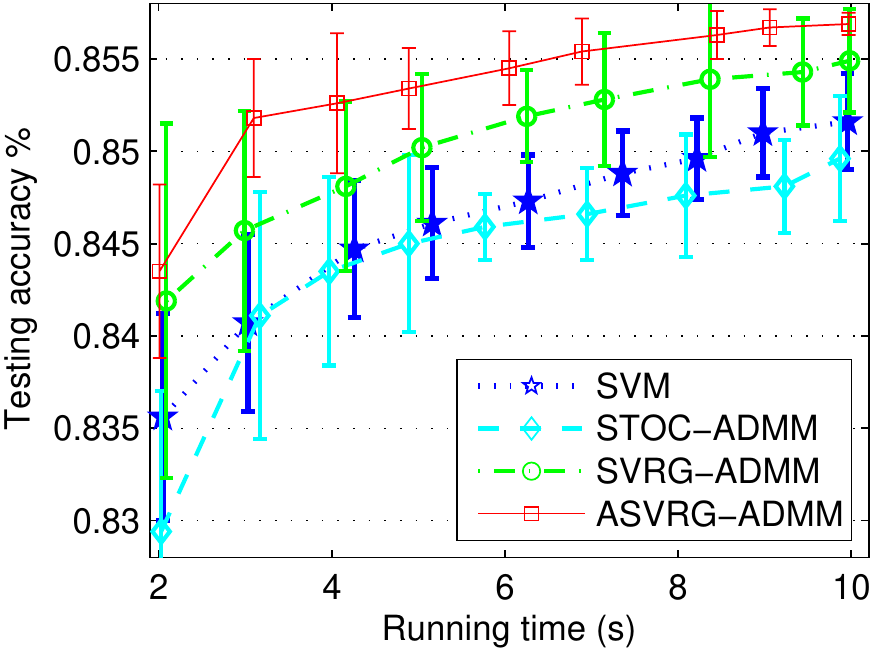}\;
\includegraphics[width=0.486\columnwidth]{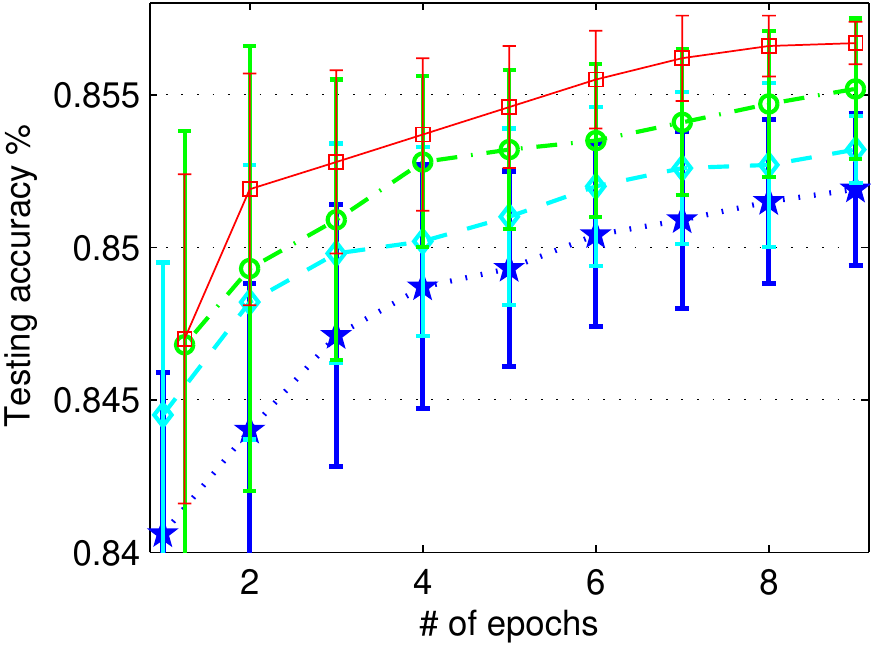}
\vspace{-6mm}
\caption{Comparison of accuracies multi-class classification on the 20newsgroups data set: accuracy v.s.\ running time (left) or number of epochs (right).}
\label{fig_sim3}
\end{figure}

Figure~\ref{fig_sim3} shows the average prediction accuracies and standard deviations of testing accuracies over 10 different runs. Since STOC-ADMM, OPG-ADMM, SAG-ADMM and SCAS-ADMM consistently perform worse than SVRG-ADMM and ASVRG-ADMM in all settings, we only report the results of STOC-ADMM. We observe that SVRG-ADMM and ASVRG-ADMM consistently outperform the classical SVM and STOC-ADMM. Moreover, ASVRG-ADMM performs much better than the other methods in all settings, which again verifies the effectiveness of our ASVRG-ADMM method.

\section{Conclusions}
In this paper, we proposed an accelerated stochastic variance reduced ADMM (ASVRG-ADMM) method, in which we combined both the momentum acceleration trick for batch optimization and the variance reduction technique. We designed two different momentum term update rules for strongly convex and general convex cases, respectively. Moreover, we also theoretically analyzed the convergence properties of ASVRG-ADMM, from which it is clear that ASVRG-ADMM achieves linear convergence and $\mathcal{O}(1/T^{2})$ rates for both cases. Especially, ASVRG-ADMM is at least a factor of $T$ faster than existing stochastic ADMM methods for general convex problems.

\section{Acknowledgements}
We thank the reviewers for their valuable comments. The authors are supported by the Hong Kong GRF 2150851 and 2150895, and Grants 3132964 and 3132821 funded by the Research Committee of CUHK.

\bibliographystyle{aaai}
\bibliography{aaai2017}

%
\newpage
\onecolumn

\setlength{\floatsep}{3pt}
\setlength{\abovecaptionskip}{2pt}
\setlength{\belowcaptionskip}{3pt}

\section{Supplementary Materials for ``Accelerated Variance Reduced Stochastic ADMM"}

\maketitle

\setcounter{page}{7}
\setcounter{property}{0}
\setcounter{lemma}{2}
\setcounter{equation}{17}

In this supplementary material, we give the detailed proofs for two important lemmas (i.e., Lemmas 1 and 2), two key theorems (i.e., Theorems 1 and 2) and a proposition (i.e., Proposition 1).

\section{Proof of Lemma 1:}
Our convergence analysis will use a bound on the variance term $\mathbb{E}[\|\widetilde{\nabla}\!f_{I_{k}}\!(x^{s}_{k-\!1})-\nabla\!f(x^{s}_{k-\!1})\|^{2}]$, as shown in Lemma 1. Before giving the proof of Lemma 1, we first give the following lemma.
\begin{lemma}\label{lemm1}

Since each $f_{j}(x)$ is convex, $L_{j}$-smooth ($j\!=\!1,\ldots,n$), then the following holds
\begin{equation}\label{equ11}
\begin{split}
&\|\nabla f_{i_{k}}\!(x^{s}_{k-1})-\nabla f_{i_{k}}\!(\widetilde{x}^{s-\!1})\|^{2}\\
\leq& 2L\!\left[f_{i_{k}}\!(\widetilde{x}^{s-\!1})\!-\!\!f_{i_{k}}\!(x^{s}_{k-\!1})\!+\!\langle \nabla\! f_{i_{k}}\!(x^{s}_{k-\!1}),\, x^{s}_{k-\!1}\!\!-\!\widetilde{x}^{s-\!1}\rangle\right],
\end{split}
\end{equation}
where $i_{k}\in[n]$, and $L:=\max_{j}L_{j}$.
\end{lemma}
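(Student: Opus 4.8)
The plan is to recognize the claimed inequality as the standard ``co-coercivity'' (smoothness-plus-convexity) bound, specialized to $g=f_{i_{k}}$, $x=x^{s}_{k-\!1}$ and $y=\widetilde{x}^{s-\!1}$. The cleanest route is to build an auxiliary function whose gradient vanishes at $x^{s}_{k-\!1}$, so that $x^{s}_{k-\!1}$ becomes its global minimizer, and then combine the descent lemma with this minimality.

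Concretely, I would fix $i_{k}$ and define the auxiliary function $\phi(x):=f_{i_{k}}(x)-\langle\nabla\!f_{i_{k}}(x^{s}_{k-\!1}),\,x\rangle$. Since subtracting a linear term preserves both convexity and the Hessian, $\phi$ is convex and $L$-smooth, where I use $L=\max_{j}L_{j}\ge L_{i_{k}}$ so that $f_{i_{k}}$ is in particular $L$-smooth. Its gradient is $\nabla\phi(x)=\nabla\!f_{i_{k}}(x)-\nabla\!f_{i_{k}}(x^{s}_{k-\!1})$, which vanishes at $x=x^{s}_{k-\!1}$; by convexity this makes $x^{s}_{k-\!1}$ a global minimizer of $\phi$.

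Next I would apply the descent lemma to $\phi$ at the point $y=\widetilde{x}^{s-\!1}$, evaluated along the gradient step $y-\tfrac{1}{L}\nabla\phi(y)$, which gives $\phi(y-\tfrac{1}{L}\nabla\phi(y))\le\phi(y)-\tfrac{1}{2L}\|\nabla\phi(y)\|^{2}$. Chaining this with the global minimality $\phi(x^{s}_{k-\!1})\le\phi(y-\tfrac{1}{L}\nabla\phi(y))$ yields
\[
\phi(x^{s}_{k-\!1})\le\phi(\widetilde{x}^{s-\!1})-\frac{1}{2L}\|\nabla\!f_{i_{k}}(\widetilde{x}^{s-\!1})-\nabla\!f_{i_{k}}(x^{s}_{k-\!1})\|^{2}.
\]
Substituting the definition of $\phi$ on both sides, the linear terms combine into $\langle\nabla\!f_{i_{k}}(x^{s}_{k-\!1}),\,x^{s}_{k-\!1}-\widetilde{x}^{s-\!1}\rangle$, and multiplying through by $2L$ produces exactly~(\ref{equ11}).

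The only point requiring care---rather than a genuine obstacle---is the smoothness-constant bookkeeping: the inequality is stated with the uniform constant $L$, so I must invoke $L\ge L_{i_{k}}$ to treat each $f_{i_{k}}$ as $L$-smooth before applying the descent lemma. Everything else is the standard auxiliary-minimizer argument followed by routine algebraic rearrangement, so I expect no conceptual difficulty beyond choosing the right auxiliary function.
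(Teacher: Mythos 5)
Your proof is correct and complete: the auxiliary function $\phi(x)=f_{i_{k}}(x)-\langle\nabla\!f_{i_{k}}(x^{s}_{k-1}),x\rangle$ together with the descent lemma and global minimality of $x^{s}_{k-1}$ yields exactly the stated inequality, and your use of $L\geq L_{i_{k}}$ handles the constant correctly. The paper disposes of this lemma by citing Theorem 2.1.5 of Nesterov, and your auxiliary-minimizer argument is precisely the standard proof of that cited theorem, so you have in effect reproduced the paper's approach in full detail (only note that, since twice differentiability is not assumed, you should say the linear shift preserves the gradient's Lipschitz constant rather than ``the Hessian''---your own gradient computation already establishes this).
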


\begin{proof}
This result follows immediately from Theorem 2.1.5 in~\citep{nesterov:co}.
\end{proof}
\vspace{3mm}

\textbf{ Proof of Lemma 1:}
\begin{proof}
$\widetilde{\nabla}\! f_{i_{k}}\!(x^{s}_{k-\!1})\!=\!\nabla\! f_{i_{k}}\!(x^{s}_{k-\!1})\!-\!\nabla\! f_{i_{k}}\!(\widetilde{x}^{s-\!1})\!+\!\nabla\! f(\widetilde{x}^{s-\!1})$. Taking expectation over the random choice of $i_{k}$, we have
\begin{equation}\label{equ12}
\begin{split}
&\mathbb{E}\!\left[\left\|\widetilde{\nabla} f_{i_{k}}\!(x^{s}_{k-1})-\nabla f(x^{s}_{k-1})\right\|^{2}\right]\\
=&\,\mathbb{E}\!\left[\left\|(\nabla\! f(\widetilde{x}^{s-\!1})\!-\!\nabla\! f(x^{s}_{k-\!1}))\!-\!(\nabla\! f_{i_{k}}\!(\widetilde{x}^{s-\!1})\!-\!\nabla\! f_{i_{k}}\!(x^{s}_{k-\!1}))\right\|^{2}\right]\\
\leq&\, \mathbb{E}\!\left[\left\|\nabla f_{i_{k}}\!(x^{s}_{k-1})-\nabla f_{i_{k}}\!(\widetilde{x}^{s-1})\right\|^{2}\right]\\
\leq&\, 2L\;\!\mathbb{E}\!\left[f_{i_{k}}\!(\widetilde{x}^{s-\!1})-f_{i_{k}}\!(x^{s}_{k-\!1})+\langle \nabla\! f_{i_{k}}\!(x^{s}_{k-\!1}),\;x^{s}_{k-\!1}\!-\widetilde{x}^{s-\!1}\rangle\right]\\
=&\,2L\!\left(f(\widetilde{x}^{s-1})-f(x^{s}_{k-1})+\langle \nabla f(x^{s}_{k-1}),\; x^{s}_{k-1}-\widetilde{x}^{s-1}\rangle\right),
\end{split}
\end{equation}
where the first inequality follows from the fact that $\mathbb{E}[\|\mathbb{E}[x]-x\|^{2}]=\mathbb{E}[\|x\|^{2}]-\|\mathbb{E}[x]\|^{2}$, and the second inequality is due to Lemma 3 given above. Note that the similar result for (\ref{equ12}) was also proved in~\citep{zhu:Katyusha} (see Lemma 3.4 in~\citep{zhu:Katyusha}). Next, we extend the result to the mini-batch setting.

Let $b$ be the size of mini-batch $I_{k}$. We prove the result of Lemma 1 for the mini-batch case, i.e.\ $b\geq2$.
\begin{displaymath}
\begin{split}
&\mathbb{E}\!\left[\left\|\widetilde{\nabla}\! f_{I_{k}}\!(x^{s}_{k-1})-\nabla\! f(x^{s}_{k-1})\right\|^{2}\right]\\
\!\!\!\!\!\!=&\mathbb{E}\!\!\left[\left\|\frac{1}{b}\!\!\sum_{i\in{I}_{k}}\!\!\!\left(\nabla\! f_{i}(x^{s}_{k-\!1})\!-\!\!\nabla\! f_{i}(\widetilde{x}^{s-\!1})\right)\!+\!\!\nabla\! f(\widetilde{x}^{s-\!1})\!-\!\!\nabla\! f(x^{s}_{k-\!1})\right\|^{2}\right]\\
\!\!\!\!\!\!=&\frac{n\!-\!b}{b(n\!-\!1)}\mathbb{E}\!\!\left[\left\|\nabla\! f_{i}(x^{s}_{k-\!1})\!-\!\!\nabla\! f(x^{s}_{k-\!1})\!-\!\!\nabla\! f_{i}(\widetilde{x}^{s-\!1})\!+\!\!\nabla\! f(\widetilde{x}^{s-\!1})\right\|^{2}\right]\\
\!\!\!\!\!\!\leq&\frac{2L(n\!-\!b)}{b(n\!-\!1)}\left(f(\widetilde{x}^{s-\!1})\!-\!f(x^{s}_{k-\!1})\!+\!\left\langle \nabla\! f(x^{s}_{k-\!1}),\;x^{s}_{k-\!1}\!-\!\widetilde{x}^{s-\!1}\right\rangle\right),
\end{split}
\end{displaymath}
where the second equality follows from Lemma 4 in~\citep{koneeny:mini}, and the inequality holds due to the result in (\ref{equ12}).
\end{proof}
\vspace{3mm}

\section{Proof of Lemma 2:}
Before proving the key Lemma 2, we first give the following a property~\citep{baldassarre:prox,lan:sgd}, which is useful for the convergence analysis of ASVRG-ADMM.
\vspace{1mm}

\begin{property}\label{proper1}
Given any $w_{1}, w_{2}, w_{3}, w_{4}\in \mathbb{R}^{d}$, then we have
\begin{displaymath}
\begin{split}
&\langle w_{1}-w_{2}, \;w_{1}-w_{3}\rangle=\frac{1}{2}\!\left(\|w_{1}-w_{2}\|^{2}+\|w_{1}-w_{3}\|^{2}-\|w_{2}-w_{3}\|^{2}\right),
\end{split}
\end{displaymath}
\vspace{1mm}
\begin{displaymath}
\begin{split}
&\langle w_{1}-w_{2}, \;w_{3}-w_{4}\rangle=\frac{1}{2}\!\left(\|w_{1}\!-\!w_{4}\|^{2}\!-\!\|w_{1}\!-\!w_{3}\|^{2}\!+\!\|w_{2}\!-\!w_{3}\|^{2}\!-\!\|w_{2}\!-\!w_{4}\|^{2}\right).
\end{split}
\end{displaymath}
\end{property}

\onecolumn

In order to prove Lemma 2, we first give and prove the following two key lemmas.

\begin{lemma}\label{lemm2}
Since $\eta=\frac{1}{L\alpha}$, $1-\theta_{s-1}\geq \frac{\delta(b)}{\alpha-1}$, and $\delta(b)=\frac{n-b}{b(n-1)}$, then we have
\begin{equation*}
\begin{split}
&\mathbb{E}\!\left[f(\widetilde{x}^{s})-f(x^{*})-\left\langle\nabla f(x^{*}),\;\widetilde{x}^{s}-x^{*}\right\rangle-\frac{\theta_{s-1}}{m}\!\sum^{m}_{k=1}\!\left\langle A^{T}\varphi^{s}_{k},\;x^{*}-z^{s}_{k}\right\rangle\right]\\
\leq& \mathbb{E}\!\left[(1\!-\!\theta_{s-1})\left(f(\widetilde{x}^{s-\!1})-f(x^{*})-\langle\nabla f(x^{*}),\;\widetilde{x}^{s-\!1}-x^{*}\rangle\right)+\frac{L\alpha\theta^{2}_{s-\!1}}{2m}\!\left(\|x^{*}-z^{s}_{0}\|^{2}-\|x^{*}-z^{s}_{m}\|^{2}\right)\right].
\end{split}
\end{equation*}
\end{lemma}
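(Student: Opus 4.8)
The plan is to prove the inequality one inner iteration at a time and then average over $k=1,\dots,m$. Fix the epoch $s$, write $\theta=\theta_{s-1}$ and $g_k=\widetilde{\nabla}\!f_{I_k}(x^s_{k-1})$, and let $D_f(u,v):=f(u)-f(v)-\langle\nabla\!f(v),u-v\rangle\ge 0$ denote the Bregman divergence of $f$. By convexity of $f$ and the averaging rule $\widetilde{x}^s=\frac1m\sum_k x^s_k$, the left-hand side is bounded above by $\frac1m\sum_k D_f(x^s_k,x^*)$ minus the dual cross terms, so it suffices to establish, in conditional expectation over $I_k$, the per-iteration estimate
\[
D_f(x^s_k,x^*)-\theta\langle A^T\varphi^s_k,x^*-z^s_k\rangle\le(1-\theta)D_f(\widetilde{x}^{s-1},x^*)+\frac{\theta^2}{2\eta}\!\left(\|z^s_{k-1}-x^*\|^2_G-\|z^s_k-x^*\|^2_G\right),
\]
since summing this telescopes the last term and the $\varphi^s_k$ terms match those carried on the left-hand side. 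Note the coefficient $\frac{\theta^2}{2\eta}=\frac{L\alpha\theta^2}{2}$ because $\eta=1/(L\alpha)$, which ties it to the stated form.

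First I would invoke $L$-smoothness of $f$ at $x^s_k$ and the momentum identity $x^s_k-x^s_{k-1}=\theta(z^s_k-z^s_{k-1})$ to get $f(x^s_k)\le f(x^s_{k-1})+\theta\langle\nabla\!f(x^s_{k-1}),z^s_k-z^s_{k-1}\rangle+\frac{L\theta^2}{2}\|z^s_k-z^s_{k-1}\|^2$. Next I would apply the gradient inequality at $x^s_{k-1}$ against the two reference points $\widetilde{x}^{s-1}$ and $x^*$ with weights $1-\theta$ and $\theta$; since $x^s_{k-1}=(1-\theta)\widetilde{x}^{s-1}+\theta z^s_{k-1}$, this yields the accelerated bound $f(x^s_{k-1})\le(1-\theta)f(\widetilde{x}^{s-1})+\theta f(x^*)-\theta\langle\nabla\!f(x^s_{k-1}),x^*-z^s_{k-1}\rangle$. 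Combining the two gradient inner products leaves $\theta\langle\nabla\!f(x^s_{k-1}),z^s_k-x^*\rangle$ as the only surviving first-order term.

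The purpose of the $z$-subproblem is to rewrite this term. Its optimality condition reads $g_k+\frac{\theta}{\eta}G(z^s_k-z^s_{k-1})+\beta A^T(Az^s_k+By^s_k-c+\lambda^s_{k-1})=0$; using the dual update $Az^s_k+By^s_k-c=\lambda^s_k-\lambda^s_{k-1}$ together with the KKT relation $\beta A^T\lambda^*=-\nabla\!f(x^*)$, this collapses to $g_k=\nabla\!f(x^*)-A^T\varphi^s_k-\frac{\theta}{\eta}G(z^s_k-z^s_{k-1})$. Substituting and splitting $\nabla\!f(x^s_{k-1})=g_k+(\nabla\!f(x^s_{k-1})-g_k)$, the $\nabla\!f(x^*)$ contribution cancels against the linear part of $D_f(x^s_k,x^*)$, the $A^T\varphi^s_k$ piece supplies exactly the dual cross term, and Property~1 applied to $\langle G(z^s_k-z^s_{k-1}),z^s_k-x^*\rangle$ produces the telescoping $\frac{\theta^2}{2\eta}(\|z^s_{k-1}-x^*\|^2_G-\|z^s_k-x^*\|^2_G)$ together with a favourable $-\frac{\theta^2}{2\eta}\|z^s_k-z^s_{k-1}\|^2_G$ term.

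The hard part will be controlling the stochastic error $\theta\langle\nabla\!f(x^s_{k-1})-g_k,z^s_k-x^*\rangle$ while matching constants exactly. I would split $z^s_k-x^*=(z^s_k-z^s_{k-1})+(z^s_{k-1}-x^*)$: the second piece has zero conditional expectation by unbiasedness of $g_k$, and the first is handled by Young's inequality with a free parameter $a$. Taking $a=L(\alpha-1)$ and using $G\succeq I$ with $\eta=1/(L\alpha)$ forces the three quadratic contributions in $\|z^s_k-z^s_{k-1}\|^2$ — namely $+\frac{L\theta^2}{2}$ from smoothness, $-\frac{\theta^2}{2\eta}$ from the proximal term, and $+\frac{a\theta^2}{2}$ from Young — to cancel, leaving only $\frac{1}{2L(\alpha-1)}\mathbb{E}\|\nabla\!f(x^s_{k-1})-g_k\|^2$. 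Bounding this variance by Lemma~1 gives a residual proportional to $\frac{\delta(b)}{\alpha-1}D_f(\widetilde{x}^{s-1},x^s_{k-1})$, and I expect the delicate step to be absorbing this residual Bregman term into the leading $(1-\theta)$-weighted gap; this is exactly where the hypothesis $1-\theta\ge\frac{\delta(b)}{\alpha-1}$ and the step-size choice $\eta=1/(L\alpha)$ must be invoked together. Once the per-iteration bound is in hand, the remaining work is routine: sum over $k$, telescope the $G$-weighted terms into $\|z^s_0-x^*\|^2_G-\|z^s_m-x^*\|^2_G$, and apply Jensen's inequality once to pass from $\frac1m\sum_k f(x^s_k)$ to $f(\widetilde{x}^s)$.
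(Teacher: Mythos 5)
Your architecture matches the paper's proof almost line for line: the $\frac{L\alpha}{2}-\frac{L(\alpha-1)}{2}$ smoothness split, Young's inequality with parameter $L(\alpha-1)$ followed by Lemma~1, the optimality condition of the $z$-subproblem combined with the dual update $\lambda^{s}_{k}=\lambda^{s}_{k-1}+Az^{s}_{k}+By^{s}_{k}-c$ and the KKT identity $\nabla f(x^{*})+\beta A^{T}\lambda^{*}=0$, Property~1 for the telescoping $G$-norms, $G\succeq I$ to dispose of the $\|z^{s}_{k}-z^{s}_{k-1}\|^{2}$ contributions, and a final Jensen/summation. Your variance bookkeeping is also exactly the paper's: with $\eta=1/(L\alpha)$ the three quadratic terms combine into $-\frac{L\alpha\theta^{2}}{2}\|z^{s}_{k}-z^{s}_{k-1}\|^{2}_{G-I}\leq 0$, and unbiasedness of $g_{k}$ kills the cross term against the $\mathcal{F}_{k-1}$-measurable direction $z^{s}_{k-1}-x^{*}$.

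The gap sits precisely at the step you flagged as delicate and deferred. You apply the two-point gradient inequality $f(x^{s}_{k-1})\leq(1-\theta)f(\widetilde{x}^{s-1})+\theta f(x^{*})-\theta\langle\nabla f(x^{s}_{k-1}),\,x^{*}-z^{s}_{k-1}\rangle$, which consumes all of $f(x^{s}_{k-1})$, and you are then left with the nonnegative residual $\frac{\delta(b)}{\alpha-1}\bigl(f(\widetilde{x}^{s-1})-f(x^{s}_{k-1})+\langle\nabla f(x^{s}_{k-1}),\,x^{s}_{k-1}-\widetilde{x}^{s-1}\rangle\bigr)$ and nothing to pay for it: a Bregman divergence anchored at $x^{s}_{k-1}$ cannot be dominated by the $(1-\theta)$-weighted gap, whose anchor is $x^{*}$, and convexity only gives the lower bound zero for the residual, never a cancellation; the hypothesis $1-\theta\geq\frac{\delta(b)}{\alpha-1}$ by itself supplies no such comparison. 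The paper's mechanism is different in order of operations: it never invokes the two-point inequality. It first merges the residual's linear part $\frac{\delta(b)}{\alpha-1}\langle\nabla f(x^{s}_{k-1}),\,x^{s}_{k-1}-\widetilde{x}^{s-1}\rangle$ into the surviving first-order term, producing $\bigl\langle\nabla f(x^{s}_{k-1}),\,\theta x^{*}+(1-\theta-\frac{\delta(b)}{\alpha-1})\widetilde{x}^{s-1}+\frac{\delta(b)}{\alpha-1}x^{s}_{k-1}-x^{s}_{k-1}\bigr\rangle$, and only then applies convexity over this \emph{three}-point combination (inequality (24) in the supplement). The resulting $+\frac{\delta(b)}{\alpha-1}f(x^{s}_{k-1})$ cancels the $-\frac{\delta(b)}{\alpha-1}f(x^{s}_{k-1})$ in the residual, and the two $\widetilde{x}^{s-1}$ weights recombine into $1-\theta$. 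So the condition $1-\theta_{s-1}\geq\frac{\delta(b)}{\alpha-1}$ enters as nonnegativity of a convex-combination coefficient, not as a size condition for absorbing a leftover Bregman term after the fact. Reserving a $\frac{\delta(b)}{\alpha-1}$ fraction of the linearization at $x^{s}_{k-1}$ in your convexity step repairs the argument; with that change every other step of your plan goes through as in the paper.
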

\vspace{3mm}

\begin{proof}
Let $g_{{k}}=\frac{1}{b}\!\sum_{i_{k}\in{I}_{k}}\!\left(\nabla f_{i_{k}}\!(x^{s}_{k-1})-\nabla f_{i_{k}}\!(\widetilde{x}^{s-1})\right)+\nabla f(\widetilde{x}^{s-1})$. Since the function $f$ is convex, differentiable  with an $L_{f}$-Lipschitz-continuous gradient, where $L_{f}\leq L=\max_{j=1,\ldots,n}L_{j}$, then
\begin{equation}\label{equ71}
\begin{split}
f(x^{s}_{k})\leq\,& f(x^{s}_{k-1})+\left\langle\nabla f(x^{s}_{k-1}),\,x^{s}_{k}-x^{s}_{k-1}\right\rangle+\frac{L\alpha}{2}\!\left\|x^{s}_{k}-x^{s}_{k-1}\right\|^{2}-\frac{L(\alpha\!-\!1)}{2}\!\left\|x^{s}_{k}-x^{s}_{k-1}\right\|^{2}\\
=\,& f(x^{s}_{k-1})+\left\langle g_{k},\,x^{s}_{k}-x^{s}_{k-1}\right\rangle+\frac{L\alpha}{2}\|x^{s}_{k}-x^{s}_{k-1}\|^2\\
&+\left\langle\nabla f(x^{s}_{k-1})-g_{k},\,x^{s}_{k}-x^{s}_{k-1}\right\rangle-\frac{L(\alpha\!-\!1)}{2}\|x^{s}_{k}-x^{s}_{k-1}\|^{2}.
\end{split}
\end{equation}

Using Lemma 1, then we get
\begin{equation}\label{equ72}
\begin{split}
&\mathbb{E}\!\left[\left\langle\nabla\! f(x^{s}_{k-1})-g_k,\,x^{s}_{k}-x^{s}_{k-1}\right\rangle-\frac{L(\alpha\!-\!1)}{2}\|x^{s}_{k}-x^{s}_{k-1}\|^{2}\right]\\
\leq\,& \mathbb{E}\!\left[\frac{1}{2L(\alpha\!-\!1)}\|\nabla\!f(x^{s}_{k-1})-g_k\|^{2}+\frac{L(\alpha\!-\!1)}{2}\|x^{s}_{k}\!-\!x^{s}_{k-1}\|^{2}-\frac{L(\alpha\!-\!1)}{2}\|x^{s}_{k}\!-\!x^{s}_{k-1}\|^{2}\right]\\
\leq\,& \frac{\delta(b)}{\alpha\!-\!1}\!\left(f(\widetilde{x}^{s-1})-f(x^{s}_{k-1})+\left\langle\nabla f(x^{s}_{k-1}),\;x^{s}_{k-1}-\widetilde{x}^{s-1}\right\rangle\right),
\end{split}
\end{equation}
where the first inequality holds due to the Young's inequality, and the second inequality follows from Lemma 1. Taking the expectation over the random choice of $I_{k}$ and substituting the inequality \eqref{equ72} into the inequality \eqref{equ71}, we have
\begin{equation}\label{equ73}
\begin{split}
\mathbb{E}[f(x^{s}_{k})]&\leq f(x^{s}_{k-1})+\mathbb{E}\!\left[\left\langle g_{k}, \,x^{s}_{k}-x^{s}_{k-1}\right\rangle+\frac{L\alpha}{2}\|x^{s}_{k}-x^{s}_{k-1}\|^2\right]\\
&\quad+\frac{\delta(b)}{\alpha\!-\!1}\!\left[f(\widetilde{x}^{s-1})-f(x^{s}_{k-1})+\left\langle\nabla f(x^{s}_{k-1}),\;x^{s}_{k-1}-\widetilde{x}^{s-1}\right\rangle\right]\\
&= f(x^{s}_{k-1})+\mathbb{E}\!\left[\left\langle g_{k},\;x^{s}_{k}-v^{*}+v^{*}-x^{s}_{k-1}\right\rangle+\frac{L\alpha}{2}\|x^{s}_{k}-x^{s}_{k-1}\|^2\right]\\
&\quad+\frac{\delta(b)}{\alpha\!-\!1}\!\left[f(\widetilde{x}^{s-1})-f(x^{s}_{k-1})+\left\langle\nabla f(x^{s}_{k-1}),\;x^{s}_{k-1}-\widetilde{x}^{s-1}\right\rangle\right]\\
&= f(x^{s}_{k-1})+\mathbb{E}\!\left[\left\langle g_{k}, \;x^{s}_{k}-v^{*}\right\rangle+\frac{L\alpha}{2}\|x^{s}_{k}-x^{s}_{k-1}\|^2\right]+\frac{\delta(b)}{\alpha-1}\left(f(\widetilde{x}^{s-1})-f(x^{s}_{k-1})\right)\\
&\quad+\left\langle\nabla f(x^{s}_{k-\!1}),\;v^{*}\!-\!x^{s}_{k-\!1}\!+\!\frac{\delta(b)}{\alpha\!-\!1}(x^{s}_{k-\!1}\!-\!\widetilde{x}^{s-\!1})\right\rangle\!+\!\mathbb{E}\!\left[\langle\!-\!\frac{1}{b}\!\sum_{i_{k}\in I_{k}}\!\!\nabla\! f_{i_{k}}\!(\widetilde{x}^{s-\!1})\!+\!\nabla\! f(\widetilde{x}^{s-\!1}),\;v^{*}\!-\!x^{s}_{k-\!1}\rangle\right],\\
&= f(x^{s}_{k-1})+\mathbb{E}\!\left[\left\langle g_{k}, \,x^{s}_{k}-v^{*}\right\rangle+\frac{L\alpha}{2}\|x^{s}_{k}-x^{s}_{k-1}\|^2\right]+\frac{\delta(b)}{\alpha\!-\!1}\left(f(\widetilde{x}^{s-1})-f(x^{s}_{k-1})\right)\\
&\quad+\left\langle\nabla f(x^{s}_{k-1}),\;v^{*}-x^{s}_{k-1}+\frac{\delta(b)}{\alpha\!-\!1}(x^{s}_{k-1}-\widetilde{x}^{s-1})\right\rangle,\\
\end{split}
\end{equation}
where $v^{*}\!=\!(1-\theta_{s-1})\widetilde{x}^{s-1}+\theta_{s-1}x^{*}$, the second equality holds due to that $\langle g_{k}, \,v^{*}-x^{s}_{k-1}\rangle=\langle\frac{1}{b}\!\sum_{i_{k}\in I_{k}}\!\!\nabla\! f_{i_{k}}(x^{s}_{k-1}),\,v^{*}-x^{s}_{k-1}\rangle+\langle-\frac{1}{b}\!\sum_{i_{k}\in I_{k}}\!\!\nabla\! f_{i_{k}}(\widetilde{x}^{s-1})+\nabla\! f(\widetilde{x}^{s-1}),\,v^{*}-x^{s}_{k-1}\rangle$ and $\mathbb{E}[\frac{1}{b}\sum_{i_{k}\in I_{k}}\!\!\nabla\! f_{i_{k}}(x^{s}_{k-1})]=\nabla\! f(x^{s}_{k-1})$, and the last equality follows from the fact that $\mathbb{E}\!\left[\langle-\frac{1}{b}\sum_{i_{k}\in I_{k}}\!\!\nabla \! f_{i_{k}}(\widetilde{x}^{s-1})+\nabla\! f(\widetilde{x}^{s-1}),\,v^{*}-x^{s}_{k-1}\rangle\right]=0$.

Furthermore,
\begin{equation}\label{equ75}
\begin{split}
&\,\left\langle\nabla f(x^{s}_{k-1}),\;v^{*}-x^{s}_{k-1}+\frac{\delta(b)}{\alpha\!-\!1}(x^{s}_{k-1}-\widetilde{x}^{s-1})\right\rangle\\
=&\, \left\langle\nabla f(x^{s}_{k-1}),\;(1-\theta_{s-1})\widetilde{x}^{s-1}+\theta_{s-1}x^{*}-x^{s}_{k-1}+\frac{\delta(b)}{\alpha\!-\!1}(x^{s}_{k-1}-\widetilde{x}^{s-1})\right\rangle\\
=&\, \left\langle\nabla f(x^{s}_{k-1}),\;\theta_{s-1}x^{*}+(1-\theta_{s-1}-\frac{\delta(b)}{\alpha\!-\!1})\widetilde{x}^{s-1}+\frac{\delta(b)}{\alpha\!-\!1}x^{s}_{k-1}-x^{s}_{k-1}\right\rangle\\
\leq\, &f\!\left(\theta_{s-1}x^{*}+(1-\theta_{s-1}-\frac{\delta(b)}{\alpha\!-\!1})\widetilde{x}^{s-1}+\frac{\delta(b)}{\alpha\!-\!1}x^{s}_{k-1}\right)-f(x^{s}_{k-1})\\
\leq\, &\theta_{s-1}f(x^{*})+(1-\theta_{s-1}-\frac{\delta(b)}{\alpha\!-\!1})f(\widetilde{x}^{s-1})+\frac{\delta(b)}{\alpha\!-\!1}f(x^{s}_{k-1})-f(x^{s}_{k-1}),
\end{split}
\end{equation}
where the first inequality holds due to the fact that $\langle \nabla f(x),\,y-x\rangle\leq f(y)-f(x)$, and the last inequality follows from the convexity of the function $f$ and the assumption that $1-\theta_{s-1}-\frac{\delta(b)}{\alpha-1}\geq 0$.

Substituting the inequality \eqref{equ75} into the inequality \eqref{equ73}, we have
\begin{equation}\label{equ76}
\begin{split}
\mathbb{E}\!\left[f(x^{s}_{k})\right]\leq&\, f(x^{s}_{k-1})+\mathbb{E}\!\left[\left\langle g_{k}, \,x^{s}_{k}-v^{*}\right\rangle+\frac{L\alpha}{2}\|x^{s}_{k}-x^{s}_{k-1}\|^2\right]+\frac{\delta(b)}{\alpha\!-\!1}\left(f(\widetilde{x}^{s-1})-f(x^{s}_{k-1})\right)\\
&\,+\theta_{s-1}f(x^{*})+(1-\theta_{s-1}-\frac{\delta(b)}{\alpha\!-\!1})f(\widetilde{x}^{s-1})+\frac{\delta(b)}{\alpha\!-\!1}f(x^{s}_{k-1})-f(x^{s}_{k-1})\\
=&\,\theta_{s-1}f(x^{*})+(1-\theta_{s-1})f(\widetilde{x}^{s-1})+\mathbb{E}\!\left[\langle g_{k}, \,x^{s}_{k}-v^{*}\rangle+\frac{L\alpha}{2}\|x^{s}_{k}-x^{s}_{k-1}\|^2\right].
\end{split}
\end{equation}

From the optimality condition of (9) with respect to $z^{s}_{k}$ and $\eta=\frac{1}{L\alpha}$, we have
\begin{equation*}
\begin{split}
\left\langle g_{k}+\beta A^{T}(Az^{s}_{k}+By^{s}_{k}-c)+\beta A^{T}\lambda^{s}_{k-1}+L\alpha\theta_{s-1}G(z^{s}_{k}-z^{s}_{k-1}),\;\,z-z^{s}_{k}\right\rangle\geq 0,\, \textrm{for}\, \textrm{any}\,z\in \mathcal{Z},
\end{split}
\end{equation*}
where $\mathcal{Z}$ is a convex compact set. Since $x^{s}_{k}=\theta_{s-1} z^{s}_{k}+(1-\theta_{s-1})\widetilde{x}^{s-1}$ and $v^{*}=\theta_{s-1}x^{*}+(1-\theta_{s-1})\widetilde{x}^{s-1}$, and the above inequality with $z=x^{*}$, we obtain
\begin{equation}\label{equ7711}
\begin{split}
&\left\langle g_{k},\;x^{s}_{k}-v^{*}\right\rangle=\theta_{s-1}\left\langle g_{k},\;z^{s}_{k}-x^{*}\right\rangle\\
\leq&\,\beta\theta_{s-\!1}\!\left\langle A^{T}(Az^{s}_{k}+By^{s}_{k}-b)+A^{T}\lambda^{s}_{k-1}, \,x^{*}-z^{s}_{k}\right\rangle+L\alpha\theta^{2}_{s-1}\left\langle G(z^{s}_{k}-z^{s}_{k-1}),\,x^{*}-z^{s}_{k}\right\rangle\\
\leq&\,\beta\theta_{s-\!1}\!\left\langle A^{T}(Az^{s}_{k}+By^{s}_{k}-b)+A^{T}\lambda^{s}_{k-1}, \,x^{*}-z^{s}_{k}\right\rangle+\frac{L\alpha\theta^{2}_{s-1}}{2}\left(\|x^{*}-z^{s}_{k-1}\|^{2}_{G}-\|x^{*}-z^{s}_{k}\|^{2}_{G}-\|z^{s}_{k-1}-z^{s}_{k}\|^{2}_{G}\right)\\
\leq&\,\beta\theta_{s-1}\left\langle A^{T}\lambda^{s}_{k}, \,x^{*}-z^{s}_{k}\right\rangle+\frac{L\alpha\theta^{2}_{s-1}}{2}\left(\|x^{*}-z^{s}_{k-1}\|^{2}_{G}-\|x^{*}-z^{s}_{k}\|^{2}_{G}-\|z^{s}_{k-1}-z^{s}_{k}\|^{2}_{G}\right),
\end{split}
\end{equation}
where the second inequity follows from Property 1. Using the optimality condition $\nabla f(x^{*})+\beta A^{T}\lambda^{*}=0$ of problem (2) and let $\varphi^{s}_{k}=\beta\left(\lambda^{s}_{k}-\lambda^{*}\right)$, then
\begin{equation*}
\begin{split}
&\theta_{s-1}\left\langle \beta A^{T}\lambda^{s}_{k},\;x^{*}-z^{s}_{k}\right\rangle\\
= &\,\theta_{s-1}\left\langle\nabla f(x^{*}),\;z^{s}_{k}-x^{*}\right\rangle+\theta_{s-1}\left\langle\beta A^{T}\lambda^{*},\;z^{s}_{k}-x^{*}\right\rangle+\theta_{s-1}\left\langle\beta A^{T}\lambda^{s}_{k}, \;x^{*}-z^{s}_{k}\right\rangle\\
=&\,\theta_{s-1}\left\langle\nabla f(x^{*}),\;z^{s}_{k}-x^{*}\rangle+\theta_{s-1}\langle \varphi^{s}_{k},\;x^{*}-z^{s}_{k}\right\rangle.
\end{split}
\end{equation*}

Taking the expectation of both sides of \eqref{equ7711} over the random choice of $I_{k}$, we have
\begin{equation}\label{equ77}
\begin{split}
&\mathbb{E}\!\left[\left\langle g_{k}, x^{s}_{k}-v^{*}\right\rangle\right]\\
\leq&\,\mathbb{E}\!\left[\theta_{s-\!1}\left\langle\nabla\! f(x^{*}),z^{s}_{k}-x^{*}\right\rangle\!+\!\theta_{s-\!1}\left\langle A^{T}\varphi^{s}_{k},x^{*}\!-z^{s}_{k}\right\rangle\!+\!\frac{L\alpha\theta^{2}_{s-\!1}}{2}\!\left(\|x^{*}\!-z^{s}_{k-\!1}\|^{2}_{G}-\|x^{*}\!-z^{s}_{k}\|^{2}_{G}-\|z^{s}_{k-\!1}\!-z^{s}_{k}\|^{2}_{G}\right)\right].
\end{split}
\end{equation}
Substituting the inequality \eqref{equ77} into the inequality \eqref{equ76}, and $x^{s}_{k}-x^{s}_{k-1}=(1-\theta_{s-1})\widetilde{x}^{s-1}+\theta_{s-1}z^{s}_{k}- (1-\theta_{s-1})\widetilde{x}^{s-1}-\theta_{s-1}z^{s}_{k-1}=\theta_{s-1}(z^{s}_{k}-z^{s}_{k-1})$, we obtain
\begin{equation*}
\begin{split}
&\mathbb{E}\!\left[f(x^{s}_{k})-f(x^{*})-\theta_{s-1}\langle\nabla f(x^{*}),\;z^{s}_{k}-x^{*}\rangle-\theta_{s-1}\langle A^{T}\varphi^{s}_{k},\;x^{*}-z^{s}_{k}\rangle\right]\\
\leq&\, (1\!-\!\theta_{s-1})[f(\widetilde{x}^{s-1})-f(x^{*})]+\frac{L\alpha\theta^{2}_{s-1}}{2}\mathbb{E}\!\left[\|x^{*}-z^{s}_{k-1}\|^{2}_{G}-\|x^{*}-z^{s}_{k}\|^{2}_{G}-\|z^{s}_{k-1}-z^{s}_{k}\|^{2}_{G-I}\right]\\
\leq&\, (1\!-\!\theta_{s-1})[f(\widetilde{x}^{s-1})-f(x^{*})]+\frac{L\alpha\theta^{2}_{s-1}}{2}\mathbb{E}\!\left[\|x^{*}-z^{s}_{k-1}\|^{2}_{G}-\|x^{*}-z^{s}_{k}\|^{2}_{G}\right],
\end{split}
\end{equation*}
where the last inequality holds due to $G\succeq I$ in Algorithms 1 and 2, that is, $\|z^{s}_{k-1}-z^{s}_{k}\|^{2}_{G-I}\geq 0$. Using the update rule $x^{s}_{k}=(1-\theta_{s-1})\widetilde{x}^{s-1}+\theta_{s-1}z^{s}_{k}$ and subtracting $(1\!-\!\theta_{s-1})\langle\nabla f(x^{*}),\;\widetilde{x}^{s-1}-x^{*}\rangle$ from both sides, we have
\begin{equation}\label{equ78}
\begin{split}
&\mathbb{E}\!\left[f(x^{s}_{k})-f(x^{*})-\langle\nabla f(x^{*}),\;x^{s}_{k}-x^{*}\rangle-\theta_{s-1}\langle A^{T}\varphi^{s}_{k},\;x^{*}-z^{s}_{k}\rangle\right]\\
\leq&\, \mathbb{E}\!\left[(1\!-\!\theta_{s-1})\left(f(\widetilde{x}^{s-1})-f(x^{*})-\langle\nabla f(x^{*}),\;\widetilde{x}^{s-1}-x^{*}\rangle\right)+\frac{L\alpha\theta^{2}_{s-1}}{2}\left(\|x^{*}-z^{s}_{k-1}\|^{2}_{G}-\|x^{*}-z^{s}_{k}\|^{2}_{G}\right)\right].
\end{split}
\end{equation}
Since $\widetilde{x}^{s}=\frac{1}{m}\sum^{m}_{k=1}x^{s}_{k}$, and taking the expectation over the random choice of the history of random variables $I_{1},\ldots,I_{m}$ on the inequality \eqref{equ78}, summing it over $k=1,\ldots,m$ at the $s$-th stage and $f(\frac{1}{m}\sum^{m}_{k=1}x^{s}_{k})\leq \frac{1}{m}\sum^{m}_{k=1}f(x^{s}_{k})$, we have
\begin{equation*}
\begin{split}
&\mathbb{E}\!\left[f(\widetilde{x}^{s})-f(x^{*})-\langle\nabla f(x^{*}),\;\widetilde{x}^{s}-x^{*}\rangle-\frac{\theta_{s-1}}{m}\sum^{m}_{k=1}\langle A^{T}\varphi^{s}_{k},\;x^{*}-z^{s}_{k}\rangle\right]\\
\leq&\, \mathbb{E}\!\left[(1\!-\!\theta_{s-1})(f(\widetilde{x}^{s-1})-f(x^{*})-\langle\nabla f(x^{*}),\;\widetilde{x}^{s-1}-x^{*}\rangle)+\frac{L\alpha\theta^{2}_{s-1}}{2m}\left(\|x^{*}-z^{s}_{0}\|^{2}_{G}-\|x^{*}-z^{s}_{m}\|^{2}_{G}\right)\right].
\end{split}
\end{equation*}
This completes the proof.
\end{proof}
\vspace{5mm}

\begin{lemma}\label{lemm3}
Let $\widetilde{y}^{s}=(1-\theta_{s-1})\widetilde{y}^{s-1}+\frac{\theta_{s-1}}{m}\sum^{m}_{k=1}y^{s}_{k}$, then
\begin{displaymath}
\begin{split}
&\mathbb{E}\!\left[h(\widetilde{y}^{s})-h(y^{*})-h'(y^{*})^{T}(\widetilde{y}^{s}-y^{*})-\frac{\theta_{s-1}}{m}\sum^{m}_{k=1}\langle B^{T}\varphi^{s}_{k},\;y^{*}-y^{s}_{k}\rangle\right]\\
\leq &(1-\theta_{s-1})\mathbb{E}\!\left[h(\widetilde{y}^{s-1})-h(y^{*})-h'(y^{*})^{T}(\widetilde{y}^{s-1}-y^{*})\right]\\ &+\frac{\beta\theta_{s-1}}{2m}\mathbb{E}\!\left[\|Az^{s}_{0}+By^{*}-c\|^{2}-\|Az^{s}_{m}+By^{*}-c\|^{2}+\sum^{m}_{k=1}\|\lambda^{s}_{k}-\lambda^{s}_{k-1}\|^{2}\right].
\end{split}
\end{displaymath}
\end{lemma}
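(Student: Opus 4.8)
The plan is to mirror the structure of the preceding lemma (the analogue for the $f(x)$ term), replacing the gradient-step optimality condition by the optimality condition of the $y$-subproblem. Define the convex criterion $Q(y):=h(y)-h(y^{*})-h'(y^{*})^{T}(y-y^{*})\geq 0$. Since $\widetilde{y}^{s}=(1-\theta_{s-1})\widetilde{y}^{s-1}+\frac{\theta_{s-1}}{m}\sum_{k=1}^{m}y^{s}_{k}$ is a convex combination (the weights sum to $1$) and $Q$ is convex, Jensen's inequality gives $Q(\widetilde{y}^{s})\leq (1-\theta_{s-1})Q(\widetilde{y}^{s-1})+\frac{\theta_{s-1}}{m}\sum_{k=1}^{m}Q(y^{s}_{k})$. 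This already produces the $(1-\theta_{s-1})$-term on the right-hand side, so it suffices to bound each per-iterate quantity $Q(y^{s}_{k})-\langle B^{T}\varphi^{s}_{k},\,y^{*}-y^{s}_{k}\rangle$ and to show that summing the resulting expressions telescopes into the stated residual-norm differences.

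First I would record two optimality conditions. From the $y$-subproblem in line 6 of Algorithm 1/2 (unconstrained in $y$), there is a subgradient $h'(y^{s}_{k})=-\beta B^{T}(Az^{s}_{k-1}+By^{s}_{k}-c+\lambda^{s}_{k-1})$; from the KKT system of problem (2), $h'(y^{*})=-\beta B^{T}\lambda^{*}$. The key algebraic identity comes from the dual update $\lambda^{s}_{k}=\lambda^{s}_{k-1}+Az^{s}_{k}+By^{s}_{k}-c$, which rearranges to $Az^{s}_{k-1}+By^{s}_{k}-c+\lambda^{s}_{k-1}=\lambda^{s}_{k}-A(z^{s}_{k}-z^{s}_{k-1})$. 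Substituting both conditions into the convexity bound $Q(y^{s}_{k})\leq\langle h'(y^{s}_{k})-h'(y^{*}),\,y^{s}_{k}-y^{*}\rangle$ and using $\varphi^{s}_{k}=\beta(\lambda^{s}_{k}-\lambda^{*})$ yields
\[
Q(y^{s}_{k})-\langle B^{T}\varphi^{s}_{k},\,y^{*}-y^{s}_{k}\rangle\leq \beta\langle A(z^{s}_{k}-z^{s}_{k-1}),\,By^{s}_{k}-By^{*}\rangle.
\]

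The remaining work is to telescope the right-hand side. Writing $r_{k}:=Az^{s}_{k}+By^{*}-c$ so that $A(z^{s}_{k}-z^{s}_{k-1})=r_{k}-r_{k-1}$, and using the dual update once more to write $By^{s}_{k}-By^{*}=(\lambda^{s}_{k}-\lambda^{s}_{k-1})-r_{k}$, I would split $\langle r_{k}-r_{k-1},\,(\lambda^{s}_{k}-\lambda^{s}_{k-1})-r_{k}\rangle$. Property 1 (with $w_{1}=r_{k},\,w_{2}=r_{k-1},\,w_{3}=0$) gives $\langle r_{k}-r_{k-1},r_{k}\rangle=\frac{1}{2}(\|r_{k}\|^{2}-\|r_{k-1}\|^{2}+\|r_{k}-r_{k-1}\|^{2})$, so the $\|r_{k}\|^{2}$ pieces telescope to $\frac{1}{2}(\|r_{0}\|^{2}-\|r_{m}\|^{2})$; the Young-type inequality $\langle a,b\rangle-\frac{1}{2}\|a\|^{2}\leq\frac{1}{2}\|b\|^{2}$ (i.e.\ $\frac{1}{2}\|a-b\|^{2}\geq 0$) with $a=r_{k}-r_{k-1}$ and $b=\lambda^{s}_{k}-\lambda^{s}_{k-1}$ converts the remaining cross terms into $\frac{1}{2}\sum_{k}\|\lambda^{s}_{k}-\lambda^{s}_{k-1}\|^{2}$. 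Summing over $k$, multiplying by $\frac{\theta_{s-1}}{m}$, combining with the Jensen step, and taking expectation over $I_{1},\ldots,I_{m}$ (the pathwise inequalities are preserved) then reproduces exactly the claimed bound, since $\|r_{0}\|^{2}=\|Az^{s}_{0}+By^{*}-c\|^{2}$ and $\|r_{m}\|^{2}=\|Az^{s}_{m}+By^{*}-c\|^{2}$.

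The main obstacle I anticipate is the extra term $\beta\langle A(z^{s}_{k}-z^{s}_{k-1}),\,By^{s}_{k}-By^{*}\rangle$, which arises because the $y$-subproblem is solved at the stale iterate $z^{s}_{k-1}$ whereas the dual variable is updated with the fresh $z^{s}_{k}$. Unlike the batch setting this Gauss--Seidel mismatch cannot simply be discarded; the delicate point is to choose the substitution $By^{s}_{k}-By^{*}=(\lambda^{s}_{k}-\lambda^{s}_{k-1})-r_{k}$ so that this term telescopes cleanly against the residual norms and leaves only the controllable remainder $\frac{1}{2}\sum_{k}\|\lambda^{s}_{k}-\lambda^{s}_{k-1}\|^{2}$, with no $G$-norm or step-size factors surviving in the final bound.
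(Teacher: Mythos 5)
Your proof is correct, and its outer skeleton coincides with the paper's: the paper also first applies the Jensen step $h(\widetilde{y}^{s})\leq(1-\theta_{s-1})h(\widetilde{y}^{s-1})+\frac{\theta_{s-1}}{m}\sum_{k}h(y^{s}_{k})$ to the convex combination and then sums a per-iteration bound over $k=1,\ldots,m$. The difference is in where that per-iteration bound comes from: the paper simply imports it as ``Lemma 3 in \citep{zheng:fadmm}'', namely $\mathbb{E}[\,h(y^{s}_{k})-h(y^{*})-h'(y^{*})^{T}(y^{s}_{k}-y^{*})-\langle B^{T}\varphi^{s}_{k},y^{*}-y^{s}_{k}\rangle\,]\leq\frac{\beta}{2}\mathbb{E}[\|Az^{s}_{k-1}+By^{*}-c\|^{2}-\|Az^{s}_{k}+By^{*}-c\|^{2}+\|\lambda^{s}_{k}-\lambda^{s}_{k-1}\|^{2}]$, whereas you rederive it from scratch, and your derivation checks out at every step: the subgradient optimality condition $h'(y^{s}_{k})=-\beta B^{T}(Az^{s}_{k-1}+By^{s}_{k}-c+\lambda^{s}_{k-1})$, the scaled KKT condition $h'(y^{*})=-\beta B^{T}\lambda^{*}$ (consistent with the paper's convention $\varphi^{s}_{k}=\beta(\lambda^{s}_{k}-\lambda^{*})$), the rearrangement $Az^{s}_{k-1}+By^{s}_{k}-c+\lambda^{s}_{k-1}=\lambda^{s}_{k}-A(z^{s}_{k}-z^{s}_{k-1})$, and the split of $\beta\langle r_{k}-r_{k-1},(\lambda^{s}_{k}-\lambda^{s}_{k-1})-r_{k}\rangle$ via Property 1 plus Young, which correctly absorbs $-\frac{1}{2}\|r_{k}-r_{k-1}\|^{2}$ into the cross term and leaves exactly the telescoping residuals and $\frac{1}{2}\sum_{k}\|\lambda^{s}_{k}-\lambda^{s}_{k-1}\|^{2}$. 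Your self-contained route buys two small things: it makes the lemma independent of the external reference, and your per-iterate inequality holds pathwise (deterministically), which is marginally stronger than the expectation form cited by the paper; the paper's route buys only brevity. Your closing remark about the Gauss--Seidel mismatch term $\beta\langle A(z^{s}_{k}-z^{s}_{k-1}),By^{s}_{k}-By^{*}\rangle$ is also the right diagnosis of why the bound carries the $\sum_{k}\|\lambda^{s}_{k}-\lambda^{s}_{k-1}\|^{2}$ remainder rather than telescoping away entirely.
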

\vspace{3mm}

\begin{proof}
Since $\lambda^{s}_{k}=\lambda^{s}_{k-1}+Az^{s}_{k}+By^{s}_{k}-c$, and using Lemma 3 in \citep{zheng:fadmm}, we obtain
\begin{displaymath}
\begin{split}
&\mathbb{E}\!\left[ h(y^{s}_{k})-h(y^{*})-h'(y^{*})^{T}(y^{s}_{k}-y^{*})-\langle B^{T}\varphi^{s}_{k},\;y^{*}-y^{s}_{k}\rangle\right]\\
\leq&\, \frac{\beta}{2}\mathbb{E}\!\left[\|Az^{s}_{k-1}+By^{*}-c\|^{2}-\|Az^{s}_{k}+By^{*}-c\|^{2}+\|\lambda^{s}_{k}-\lambda^{s}_{k-1}\|^{2}\right].
\end{split}
\end{displaymath}
Using the update rule $\widetilde{y}^{s}=(1-\theta_{s-1})\widetilde{y}^{s-1}+\frac{\theta_{s-1}}{m}\sum^{m}_{k=1}y^{s}_{k}$, $h(\widetilde{y}^{s})\leq (1-\theta_{s-1})h(\widetilde{y}^{s-1})+\frac{\theta_{s-1}}{m}\sum^{m}_{k=1}h(y^{s}_{k})$, and taking expectation over whole history and summing the above inequality over $k=1,\ldots,m$, we have
\begin{displaymath}
\begin{split}
&\mathbb{E}\!\left[h(\widetilde{y}^{s})-h(y^{*})-h'(y^{*})^{T}(\widetilde{y}^{s}-y^{*})-\frac{\theta_{s-1}}{m}\sum^{m}_{k=1}\langle B^{T}\varphi^{s}_{k},y^{*}-y^{s}_{k}\rangle\right]\\
\leq&\frac{\theta_{s-1}}{m}\mathbb{E}\!\left[\sum^{m}_{k=1}\left( h(y^{s}_{k})-h(y^{*})-h'(y^{*})^{T}(y^{s}_{k}-y^{*})-\langle B^{T}\varphi^{s}_{k},\;y^{*}-y^{s}_{k}\rangle\right)\right]\\
&+(1-\theta_{s-1})\mathbb{E}\!\left[h(\widetilde{y}^{s-1})-h(y^{*})-h'(y^{*})^{T}(\widetilde{y}^{s-1}-y^{*})\right]\\
\leq &\,\frac{\beta\theta_{s-1}}{2m}\mathbb{E}\!\left[\|Az^{s}_{0}+By^{*}-c\|^{2}-\|Az^{s}_{m}+By^{*}-c\|^{2}+\sum^{m}_{k=1}\|\lambda^{s}_{k}-\lambda^{s}_{k-1}\|^{2}\right]\\
&+(1-\theta_{s-1})\mathbb{E}\!\left[h(\widetilde{y}^{s-1})-h(y^{*})-h'(y^{*})^{T}(\widetilde{y}^{s-1}-y^{*})\right].
\end{split}
\end{displaymath}
This completes the proof.
\end{proof}
\vspace{3mm}

\textbf{Proof of Lemma 2}:
\begin{proof}
Using Lemmas ~\ref{lemm2} and ~\ref{lemm3} and the definition of $P(x,y)$, we have
\begin{equation*}
\begin{split}
&\mathbb{E}\!\left[P(\widetilde{x}^{s},\widetilde{y}^{s})-\frac{\theta_{s-1}}{m}\sum^{m}_{k=1}\left(\langle A^{T}\varphi^{s}_{k},\,x^{*}-z^{s}_{k}\rangle+\langle B^{T}\varphi^{s}_{k},\,y^{*}-y^{s}_{k}\rangle\right)\right]\\
\leq &\, (1-\theta_{s-1})\mathbb{E}\!\left[P(\widetilde{x}^{s-1},\widetilde{y}^{s-1})\right]+\frac{L\alpha\theta^{2}_{s-1}}{2m}\mathbb{E}\!\left[\|x^{*}-z^{s}_{0}\|^{2}_{G}-\|x^{*}-z^{s}_{m}\|^{2}_{G}\right]\\
&+\frac{\beta\theta_{s-1}}{2m}\mathbb{E}\!\left[\|Az^{s}_{0}+By^{*}-c\|^{2}-\|Az^{s}_{m}+By^{*}-c\|^{2}+\sum^{m}_{k=1}\|\lambda^{s}_{k}-\lambda^{s}_{k-1}\|^{2}\right]\\
=&\,\mathbb{E}\!\left[(1\!-\!\theta_{\!s-\!1})P(\widetilde{x}^{s-\!1}\!,\widetilde{y}^{s-\!1})+\frac{\!\theta^{2}_{\!s-\!1}\!\left(\|x^{*}\!-z^{s}_{0}\|^{2}_{G}\!-\!\|x^{*}\!-z^{s}_{m}\|^{2}_{G}\right)}{2m\eta}\right]\\
&+\frac{\beta\theta_{\!s-\!1}}{2m}\mathbb{E}\!\left[\!\|Az^{s}_{0}\!-\!Ax^{*}\|^{2}\!-\!\|Az^{s}_{m}\!\!-\!Ax^{*}\|^{2}\!+\!\!\sum^{m}_{k=1}\!\|\lambda^{s}_{k}\!-\!\lambda^{s}_{k-\!1}\|^{2}\!\right].
\end{split}
\end{equation*}
This completes the proof.
\end{proof}

\section{Proof of Theorem 1:}

Let $(x^{*}\!,y^{*})$ be an optimal solution of the convex problem (2), and $\lambda^{*}$ the corresponding Lagrange multiplier that maximizes the dual. Then $x^{*}$, $y^{*}$ and $\lambda^{*}$ satisfy the following Karush-Kuhn-Tucker (KKT) conditions:
\begin{displaymath}
\beta A^{T}\lambda^{*}+\nabla f(x^{*})=0,\;\;\beta B^{T}\lambda^{*}+h'(y^{*})=0,\;\;Ax^{*}+By^{*}=c.
\end{displaymath}

Before giving the proof of Theorem 1, we first present the following lemmas~\citep{zheng:fadmm}.
\begin{lemma}\label{lemm4}
Let $\varphi_{k}=\beta(\lambda_{k}-\lambda^{*})$, \,and\, $\lambda_{k}=\lambda_{k-1}+Ax_{k}+By_{k}-c$, then
\begin{displaymath}
\begin{split}
\mathbb{E}\!\left[-(Ax_{k}+By_{k}-c)^{T}\varphi_{k}\right]
=\frac{\beta}{2}\mathbb{E}\!\left[\|\lambda_{k-1}-\lambda^{*}\|^{2}-\|\lambda_{k}-\lambda^{*}\|^{2}-\|\lambda_{k}-\lambda_{k-1}\|^{2}\right].
\end{split}
\end{displaymath}
\end{lemma}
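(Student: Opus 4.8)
The plan is to recognize this as a direct consequence of the three-point inner-product identity recorded in Property~\ref{proper1}, so that nothing beyond elementary algebra is required. First I would exploit the dual update rule $\lambda_{k}=\lambda_{k-1}+Ax_{k}+By_{k}-c$, which immediately yields the substitution $Ax_{k}+By_{k}-c=\lambda_{k}-\lambda_{k-1}$ for the constraint residual appearing on the left-hand side.

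Next, inserting this together with the definition $\varphi_{k}=\beta(\lambda_{k}-\lambda^{*})$ rewrites the quantity inside the expectation as
$$-(Ax_{k}+By_{k}-c)^{T}\varphi_{k}=-\beta\langle\lambda_{k}-\lambda_{k-1},\,\lambda_{k}-\lambda^{*}\rangle.$$
I would then apply the first identity of Property~\ref{proper1} with the assignment $w_{1}=\lambda_{k}$, $w_{2}=\lambda_{k-1}$, $w_{3}=\lambda^{*}$, which gives
$$\langle\lambda_{k}-\lambda_{k-1},\,\lambda_{k}-\lambda^{*}\rangle=\tfrac{1}{2}\!\left(\|\lambda_{k}-\lambda_{k-1}\|^{2}+\|\lambda_{k}-\lambda^{*}\|^{2}-\|\lambda_{k-1}-\lambda^{*}\|^{2}\right).$$
Multiplying through by $-\beta$ and rearranging the three squared-norm terms reproduces exactly the bracketed expression on the right-hand side, after which taking expectations of both sides finishes the argument, since the identity holds deterministically term by term and the expectation acts only through linearity.

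There is no genuine obstacle here: the claim is a purely algebraic identity and the expectation plays no substantive role. The only point requiring care is matching the sign conventions in Property~\ref{proper1}, that is, ordering the three points $w_{1},w_{2},w_{3}$ correctly so that $\|\lambda_{k-1}-\lambda^{*}\|^{2}$ emerges with a positive sign while both $\|\lambda_{k}-\lambda^{*}\|^{2}$ and $\|\lambda_{k}-\lambda_{k-1}\|^{2}$ emerge with negative signs, exactly as in the stated right-hand side.
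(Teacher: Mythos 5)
Your proof is correct and is exactly the standard argument: the paper itself does not reprove this lemma (it is quoted from \citet{zheng:fadmm}), and your derivation --- substituting $Ax_{k}+By_{k}-c=\lambda_{k}-\lambda_{k-1}$ from the dual update, applying the first identity of Property~1 with $w_{1}=\lambda_{k}$, $w_{2}=\lambda_{k-1}$, $w_{3}=\lambda^{*}$, and scaling by $-\beta$ --- is the canonical proof, with the signs matching the stated right-hand side. Your observation that the identity holds deterministically, so the expectation enters only by linearity, is also the right remark to make.
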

\vspace{1mm}

\begin{lemma}\label{lemm6}
Since the matrix $A$ has full row rank, then
\begin{displaymath}
\lambda^{*}=-\frac{1}{\beta}(A^{T})^{\dag}\nabla f(x^{*}).
\end{displaymath}
\end{lemma}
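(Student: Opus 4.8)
The plan is to read off the result directly from the first KKT stationarity condition stated above, namely $\beta A^{T}\lambda^{*}+\nabla f(x^{*})=0$, or equivalently
\begin{equation*}
A^{T}\lambda^{*}=-\tfrac{1}{\beta}\nabla f(x^{*}).
\end{equation*}
The whole task is to solve this linear system for $\lambda^{*}$. Since $A\in\mathbb{R}^{d_{2}\times d_{1}}$ has full row rank by assumption, $A^{T}$ has full column rank, and $AA^{T}\in\mathbb{R}^{d_{2}\times d_{2}}$ is symmetric positive definite, hence invertible. Consequently the Moore--Penrose pseudo-inverse of $A^{T}$ takes the explicit closed form $(A^{T})^{\dag}=(AA^{T})^{-1}A$, and it serves as a genuine left inverse of $A^{T}$.

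First I would verify the left-inverse identity
\begin{equation*}
(A^{T})^{\dag}A^{T}=(AA^{T})^{-1}A\,A^{T}=(AA^{T})^{-1}(AA^{T})=I_{d_{2}},
\end{equation*}
which is the only place where the full-row-rank hypothesis is actually used. Then I would left-multiply the stationarity condition by $(A^{T})^{\dag}$, giving
\begin{equation*}
\lambda^{*}=(A^{T})^{\dag}A^{T}\lambda^{*}=-\tfrac{1}{\beta}(A^{T})^{\dag}\nabla f(x^{*}),
\end{equation*}
which is exactly the claimed expression. This also shows the minimizing multiplier is unique, consistent with the initialization $\widetilde{\lambda}^{0}=-\frac{1}{\beta}(A^{T})^{\dag}\nabla f(\widetilde{x}_{0})$ used in Algorithm~1.

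There is essentially no hard step here: the argument is a one-line application of the pseudo-inverse. The only point requiring care — and the sole reason the lemma needs a hypothesis at all — is the invertibility of $AA^{T}$; without full row rank of $A$, the product $(A^{T})^{\dag}A^{T}$ would be merely the orthogonal projector onto the row space of $A^{T}$ rather than the identity, and $\lambda^{*}$ would not be uniquely recoverable from $\nabla f(x^{*})$. I would therefore state the full-row-rank assumption explicitly at the point where $AA^{T}$ is inverted, and otherwise the proof closes immediately.
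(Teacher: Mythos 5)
Your proof is correct: left-multiplying the KKT stationarity condition $\beta A^{T}\lambda^{*}+\nabla f(x^{*})=0$ by $(A^{T})^{\dag}=(AA^{T})^{-1}A$, which is a genuine left inverse of $A^{T}$ precisely because $A$ has full row rank, is exactly the intended argument. The paper itself states this lemma without proof (importing it from the SVRG-ADMM paper of \citealp{zheng:fadmm}), so your one-line derivation simply makes explicit the standard reasoning behind it, including the correct observation that full row rank is used only to invert $AA^{T}$.
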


\begin{lemma}\label{lemm7}
Let $\widetilde{\lambda}^{s-1}=\lambda^{s}_{0}=-\frac{1}{\beta}(A^{\dag})\nabla f(\widetilde{x}^{s-1})$, and $\lambda^{*}=-\frac{1}{\beta}(A^{\dag})\nabla f(x^{*})$, then
\begin{equation*}
\|\widetilde{\lambda}^{s-1}-\lambda^{*}\|^{2}\leq \frac{2L_{f}}{\beta^{2}\sigma_{\min}(AA^{T})}\left(f(\widetilde{x}^{s-1})-f(x^{*})-\nabla f(x^{*})^{T}(\widetilde{x}^{s-1}-x^{*})\right).
\end{equation*}
\end{lemma}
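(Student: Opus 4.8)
The plan is to reduce the claim to two independent facts: a purely linear-algebraic bound on the operator norm of the pseudo-inverse $(A^{T})^{\dag}$, and the standard smoothness--convexity inequality controlling a gradient difference by a Bregman-type gap. First I would subtract the two defining identities. Since $\widetilde{\lambda}^{s-1}=-\frac{1}{\beta}(A^{T})^{\dag}\nabla f(\widetilde{x}^{s-1})$ and $\lambda^{*}=-\frac{1}{\beta}(A^{T})^{\dag}\nabla f(x^{*})$ (the latter being exactly Lemma~\ref{lemm6}), linearity of the pseudo-inverse gives
\begin{displaymath}
\widetilde{\lambda}^{s-1}-\lambda^{*}=-\frac{1}{\beta}(A^{T})^{\dag}\!\left(\nabla f(\widetilde{x}^{s-1})-\nabla f(x^{*})\right),
\end{displaymath}
so that $\|\widetilde{\lambda}^{s-1}-\lambda^{*}\|^{2}=\frac{1}{\beta^{2}}\|(A^{T})^{\dag}(\nabla f(\widetilde{x}^{s-1})-\nabla f(x^{*}))\|^{2}$. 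Everything then hinges on bounding $(A^{T})^{\dag}$ applied to the gradient difference.

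Next I would carry out the key linear-algebra step. Because $A$ has full row rank, the matrix $AA^{T}$ is invertible and the pseudo-inverse of $A^{T}$ admits the closed form $(A^{T})^{\dag}=(AA^{T})^{-1}A$. Consequently $(A^{T})^{\dag}((A^{T})^{\dag})^{T}=(AA^{T})^{-1}AA^{T}(AA^{T})^{-1}=(AA^{T})^{-1}$, whose largest eigenvalue is $1/\sigma_{\min}(AA^{T})$. Hence $\|(A^{T})^{\dag}\|_{2}^{2}=1/\sigma_{\min}(AA^{T})$, and for every vector $u$ one has $\|(A^{T})^{\dag}u\|^{2}\leq\|u\|^{2}/\sigma_{\min}(AA^{T})$. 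Applying this with $u=\nabla f(\widetilde{x}^{s-1})-\nabla f(x^{*})$ yields
\begin{displaymath}
\|\widetilde{\lambda}^{s-1}-\lambda^{*}\|^{2}\leq\frac{1}{\beta^{2}\sigma_{\min}(AA^{T})}\,\|\nabla f(\widetilde{x}^{s-1})-\nabla f(x^{*})\|^{2}.
\end{displaymath}

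Finally I would invoke smoothness and convexity to convert the gradient-norm term into the convergence-criterion gap. Since $f$ is convex and $L_{f}$-smooth, Theorem~2.1.5 in~\citep{nesterov:co} (the same co-coercivity bound used to prove Lemma~\ref{lemm1}) gives $\frac{1}{2L_{f}}\|\nabla f(\widetilde{x}^{s-1})-\nabla f(x^{*})\|^{2}\leq f(\widetilde{x}^{s-1})-f(x^{*})-\nabla f(x^{*})^{T}(\widetilde{x}^{s-1}-x^{*})$. Substituting this into the previous display produces exactly the stated bound with the factor $2L_{f}/(\beta^{2}\sigma_{\min}(AA^{T}))$. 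The only genuinely delicate point is the second step: one must use the full-row-rank assumption to justify the closed form $(A^{T})^{\dag}=(AA^{T})^{-1}A$ and to identify $\|(A^{T})^{\dag}\|_{2}^{2}$ with $1/\sigma_{\min}(AA^{T})$; the remaining manipulations are routine once this operator-norm estimate is in hand.
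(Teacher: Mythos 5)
Your proposal is correct: the identity $(A^{T})^{\dag}=(AA^{T})^{-1}A$ under the full-row-rank assumption, the operator-norm bound $\|(A^{T})^{\dag}\|_{2}^{2}=\lambda_{\max}\bigl((A^{T})^{\dag}((A^{T})^{\dag})^{T}\bigr)=1/\sigma_{\min}(AA^{T})$, and the co-coercivity inequality $\|\nabla f(\widetilde{x}^{s-1})-\nabla f(x^{*})\|^{2}\leq 2L_{f}\left(f(\widetilde{x}^{s-1})-f(x^{*})-\nabla f(x^{*})^{T}(\widetilde{x}^{s-1}-x^{*})\right)$ combine exactly as you say to give the stated bound. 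Note that the paper itself does not prove this lemma but imports it from~\citep{zheng:fadmm}; your argument is precisely the standard one behind that cited result, so there is no substantive divergence to report.
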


\vspace{3mm}
\textbf{Proof of Theorem 1:}
\begin{proof}
Using the update rule $\lambda^{s}_{k}=\lambda^{s}_{k-1}+Az^{s}_{k}+By^{s}_{k}-c$, then
\begin{equation*}
\begin{split}
\sum^{m}_{k=1}\!\left[\langle A^{T}\varphi^{s}_{k},\;x^{*}-z^{s}_{k}\rangle+\langle B^{T}\varphi^{s}_{k},\;y^{*}-y^{s}_{k}\rangle+\langle Az^{s}_{k}+By^{s}_{k}-c,\;\varphi^{s}_{k}\rangle\right]=0,
\end{split}
\end{equation*}
where $\varphi^{s}_{k}=\beta\left(\lambda^{s}_{k}-\lambda^{*}\right)$. Using Lemma ~\ref{lemm4}, we have
\begin{equation*}
\begin{split}
-\sum^{m}_{k=1}\langle Az^{s}_{k}+By^{s}_{k}-c,\;\varphi^{s}_{k}\rangle&=\frac{\beta}{2}\sum^{m}_{k=1}\left(\|\lambda^{s}_{k-1}-\lambda^{*}\|^{2}-\|\lambda^{s}_{k}-\lambda^{*}\|^{2}-\|\lambda^{s}_{k-1}-\lambda^{s}_{k}\|^{2}\right)\\
&=\frac{\beta}{2}\left(\|\lambda^{s}_{0}-\lambda^{*}\|^{2}-\|\lambda^{s}_{m}-\lambda^{*}\|^{2}-\sum^{m}_{k=1}\|\lambda^{s}_{k-1}-\lambda^{s}_{k}\|^{2}\right).
\end{split}
\end{equation*}

Combining Lemma 2 and the above results with $\theta_{s-1}=\theta$ at all stages, $z^{s}_{0}=\widetilde{x}^{s-1}$ and $\lambda^{s}_{0}=\widetilde{\lambda}^{s-1}$, we have
\begin{equation*}
\begin{split}
&\mathbb{E}\!\left[P(\widetilde{x}^{s},\widetilde{y}^{s})\right]\\
\leq&\, (1\!-\!\theta)\mathbb{E}\!\left[P(\widetilde{x}^{s-1},\widetilde{y}^{s-1})\right]+\frac{L\alpha\theta^{2}}{2m}\mathbb{E}\!\left[\|x^{*}-\widetilde{x}^{s-1}\|^{2}_{G}-\|x^{*}-z^{s}_{m}\|^{2}_{G}\right]\\
&+\frac{\beta\theta}{2m}\mathbb{E}\!\left[\|A\widetilde{x}^{s-1}+By^{*}-c\|^{2}-\|Az^{s}_{m}+By^{*}-c\|^{2}+\|\widetilde{\lambda}^{s-1}-\lambda^{*}\|^{2}-\|\lambda^{s}_{m}-\lambda^{*}\|^{2}\right]\\
\leq&\, (1\!-\!\theta)\mathbb{E}\!\left[P(\widetilde{x}^{s-1},\widetilde{y}^{s-1})\right]+\frac{L\alpha\theta^{2}}{2m}\mathbb{E}\!\left[\|x^{*}-\widetilde{x}^{s-1}\|^{2}_{G}\right]+\frac{\beta\theta}{2m}\mathbb{E}\!\left[\|A\widetilde{x}^{s-1}-Ax^{*}\|^{2}+\|\widetilde{\lambda}^{s-1}-\lambda^{*}\|^{2}\right]\\
=&\, (1\!-\!\theta)\mathbb{E}\!\left[P(\widetilde{x}^{s-1},\widetilde{y}^{s-1})\right]+\frac{1}{2m}\mathbb{E}\!\left[\|x^{*}-\widetilde{x}^{s-1}\|^{2}_{L\alpha\theta^{2}G+\beta\theta A^{T}A}\right]+\frac{\beta\theta}{2m}\mathbb{E}\!\left[\|\widetilde{\lambda}^{s-1}-\lambda^{*}\|^{2}\right]\\
\leq&\,(1\!-\!\theta)\mathbb{E}\!\left[P(\widetilde{x}^{s-1},\widetilde{y}^{s-1})\right]+\frac{\|L\alpha\theta^{2}G+\beta\theta A^{T}A\|_{2}}{2m}\mathbb{E}\!\left[\|x^{*}-\widetilde{x}^{s-1}\|^{2}\right]+\frac{\beta\theta}{2m}\mathbb{E}\!\left[\|\widetilde{\lambda}^{s-1}-\lambda^{*}\|^{2}\right]\\
\leq&\,(1\!-\!\theta)\mathbb{E}\!\left[P(\widetilde{x}^{s-\!1},\widetilde{y}^{s-\!1})\right]+\frac{\|L\alpha\theta^{2}G\!+\!\beta\theta A^{T}A\|_{2}}{m\mu}\left(f(\widetilde{x}^{s-\!1})\!-f(x^{*})-\nabla f(x^{*})^{T}(\widetilde{x}^{s-\!1}\!-x^{*})\right)+\frac{\beta\theta}{2m}\mathbb{E}\!\left[\|\widetilde{\lambda}^{s-\!1}\!-\lambda^{*}\|^{2}\right],
\end{split}
\end{equation*}
where the second inequality holds due to the fact that $Ax^{*}+By^{*}-c=0$, and the last inequality follows from the definition of the strongly convex function.

Using $\lambda^{*}=-\frac{1}{\beta}(A^{\dag})\nabla f(x^{*})$, the update rule $\lambda^{s}_{0}=-\frac{1}{\beta}(A^{\dag})\nabla f(\widetilde{x}^{s-1})$, and Lemma ~\ref{lemm7}, we have
\begin{equation*}
\begin{split}
\|\lambda^{s-1}-\lambda^{*}\|^{2}\leq \frac{2L_{f}}{\beta^{2}\sigma_{\min}(AA^{T})}\left[f(\widetilde{x}^{s-1})-f(x^{*})-\nabla f(x^{*})^{T}(\widetilde{x}^{s-1}-x^{*})\right].
\end{split}
\end{equation*}
Combining the above results, $h(\widetilde{y}^{s-1})-h(x^{*})-h'(y^{*})^{T}(\widetilde{y}^{s-1}-y^{*})\geq 0$, $\eta=\frac{1}{L\alpha}$ and the definition of $P(x,y)$, then we have
\begin{equation*}
\begin{split}
&\mathbb{E}\!\left[P(\widetilde{x}^{s},\widetilde{y}^{s})\right]\\
\leq&\,\left(1-\theta+\frac{\|L\alpha\theta^{2}G+\beta\theta A^{T}A\|_{2}}{m\mu}+\frac{L_{f}\theta}{\beta m\sigma_{\min}(AA^{T})}\right)\mathbb{E}\!\left[P(\widetilde{x}^{s-1},\widetilde{y}^{s-1})\right]\\
=&\,\left(1-\theta+\frac{\|\theta^{2}G+\eta\beta\theta A^{T}A\|_{2}}{\eta m\mu}+\frac{L_{f}\theta}{\beta m\sigma_{\min}(AA^{T})}\right)\mathbb{E}\!\left[P(\widetilde{x}^{s-1},\widetilde{y}^{s-1})\right].
\end{split}
\end{equation*}
This completes the proof.
\end{proof}
\vspace{3mm}

\section{Proof of Proposition 1:}
\vspace{1mm}
\begin{proof}
Since
\begin{equation*}
G=\gamma I_{d_{1}}-\frac{\eta\beta^{*} A^{T}\!A}{\theta},\;\;\gamma=\frac{\eta\beta^{*}\|A^{T}\!A\|_{2}}{\theta}+1,\;\, \textup{and} \;\,\beta^{*}\equiv\sqrt{\frac{L_{f}\mu}{\|A^{T}\!A\|_{2}\sigma_{\min}(AA^{T})}},
\end{equation*}
then the rate $\rho$ in Theorem 1 is rewritten as follows:
\begin{equation}\label{equ1180}
\begin{split}
\rho(\theta)\,&=\frac{\theta\|\theta G\!+\!\eta\beta^{*}A^{T}\!A\|_{2}}{\eta m\mu}\!+1-\theta+\frac{L_{f}\theta}{\beta^{*} m\sigma_{\min}(AA^{T})}\\
&=\theta(\eta\beta^{*}\|A^{T}\!A\|_{2}+\theta)/(\eta m\mu)+1-\theta+\frac{L_{f}\theta\sqrt{\|A^{T}\!A\|_{2}/\sigma_{\min}(AA^{T})}}{m\sqrt{L_{f}\mu}}\\
&=(\theta^{2}/\eta+\beta^{*}\theta\|A^{T}\!A\|_{2})/(m\mu)+1-\theta+\frac{\theta\sqrt{L_{f}/\mu}\sqrt{\omega}}{m}\\
&=\frac{\kappa\alpha\theta^{2}}{m}+\frac{2\theta\sqrt{\kappa_{\!f}\omega}}{m}+1-\theta,
\end{split}
\end{equation}
where $\alpha=\frac{1}{L\eta}$. Therefore, the rate $\rho$ can be expressed as a simple convex function with respect to $\theta$ by fixing other variable. To minimize the quadratic function $\rho$ with respect to $\theta$, then we have
\begin{equation*}
\theta^{*}=\frac{m-2\sqrt{\kappa_{\!f}\omega}}{2\kappa\alpha}.
\end{equation*}

Recall from the body of this paper that $\alpha=\frac{m-2\sqrt{\kappa_{\!f} \omega}}{2\kappa}+\delta(b)+1$. Then it is not difficult to verify that
\begin{equation*}
\begin{split}
\theta^{*}=&\,\frac{m-2\sqrt{\kappa_{\!f}\omega}}{2\kappa\alpha}=\frac{m-2\sqrt{\kappa_{\!f}\omega}}{[(m-2\sqrt{\kappa_{\!f} \omega})+2\kappa(\delta(b)+1)]}\\
\leq&\,\frac{m-2\sqrt{\kappa_{\!f}\omega}}{[(m-2\sqrt{\kappa_{\!f} \omega})+2\kappa\delta(b)]}=1-\frac{\delta(b)}{[(m-2\sqrt{\kappa_{\!f} \omega})/(2\kappa)+\delta(b)]}\\
=&\,1-\frac{\delta(b)}{\alpha-1}.
\end{split}
\end{equation*}

By the above result, we have
\begin{equation}
\theta^{*}=\frac{m-2\sqrt{\kappa_{\!f}\omega}}{2\kappa\alpha}=\frac{m-2\sqrt{\kappa_{\!f}\omega}}{m-2\sqrt{\kappa_{\!f}\omega}+2\kappa(\delta(b)+1)}.
\end{equation}
It is not difficult to verify that $0<\rho(\theta^{*})<1$ is satisfied. Thus, $\theta^{*}$ is the optimal solution of (\ref{equ1180}) with $\alpha=\frac{m-2\sqrt{\kappa_{\!f} \omega}}{2\kappa}+\delta(b)+1$.

This proof is completed.
\end{proof}
\vspace{3mm}

\section{Proof of Theorem 2:}
Before giving the proof of Theorem 2, we first present the following lemma~\citep{zheng:fadmm}.
\begin{lemma}\label{lemm5}
Let $\varphi_{k}=\beta(\lambda_{k}-\lambda^{*})$ and any $\varphi=\beta\lambda$, and $\lambda_{k}=\lambda_{k-1}+Ax_{k}+By_{k}-c$, then
\begin{displaymath}
\mathbb{E}\!\left[-(Ax_{k}+By_{k}-c)^{T}(\varphi_{k}-\varphi)\right]=\frac{\beta}{2}\mathbb{E}\!\left[\|\lambda_{k-1}-\lambda^{*}-\lambda\|^{2}-\|\lambda_{k}-\lambda^{*}-\lambda\|^{2}-\|\lambda_{k}-\lambda_{k-1}\|^{2}\right].
\end{displaymath}
\end{lemma}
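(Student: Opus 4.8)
The plan is to reduce the claimed identity to a purely algebraic (deterministic) three-point equality and only invoke the expectation at the very end. First I would exploit the dual update rule $\lambda_{k}=\lambda_{k-1}+Ax_{k}+By_{k}-c$, which immediately gives $Ax_{k}+By_{k}-c=\lambda_{k}-\lambda_{k-1}$. Hence the primal residual appearing on the left-hand side can be replaced by a difference of consecutive multipliers, $-(Ax_{k}+By_{k}-c)=\lambda_{k-1}-\lambda_{k}$, eliminating the primal iterates entirely and leaving an expression purely in terms of the $\lambda$'s.

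Next I would expand the second factor. Since $\varphi_{k}=\beta(\lambda_{k}-\lambda^{*})$ and $\varphi=\beta\lambda$, we have $\varphi_{k}-\varphi=\beta(\lambda_{k}-\lambda^{*}-\lambda)$. Combining the two substitutions, the quantity inside the expectation becomes $\beta\langle\lambda_{k-1}-\lambda_{k},\,\lambda_{k}-\lambda^{*}-\lambda\rangle$, with the scalar $\beta$ factored out front.

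The core step is to apply the polarization identity of Property~1 (its second form) with the choice $w_{1}=\lambda_{k-1}$, $w_{2}=\lambda_{k}$, $w_{3}=\lambda_{k}$, and $w_{4}=\lambda^{*}+\lambda$. Because $w_{2}=w_{3}=\lambda_{k}$, the term $\|w_{2}-w_{3}\|^{2}$ vanishes, and the identity collapses to $\langle\lambda_{k-1}-\lambda_{k},\,\lambda_{k}-\lambda^{*}-\lambda\rangle=\frac{1}{2}\bigl(\|\lambda_{k-1}-\lambda^{*}-\lambda\|^{2}-\|\lambda_{k}-\lambda^{*}-\lambda\|^{2}-\|\lambda_{k-1}-\lambda_{k}\|^{2}\bigr)$. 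Multiplying through by $\beta$ and using $\|\lambda_{k-1}-\lambda_{k}\|^{2}=\|\lambda_{k}-\lambda_{k-1}\|^{2}$ reproduces exactly the right-hand side of the lemma.

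Finally, since every manipulation above is a pointwise identity holding for each realization of the mini-batch $I_{k}$, the expectation $\mathbb{E}[\cdot]$ passes through unchanged and the claim follows. I do not anticipate a genuine obstacle here: the result is an exact equality rather than an inequality, so no bounding is needed. The only points demanding care are the bookkeeping of the sign when the minus sign is absorbed into the first factor of the inner product, and the correct pairing of the four vectors in Property~1 so that the cancelling term is precisely the one with $w_{2}=w_{3}$.
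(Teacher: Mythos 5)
Your proof is correct: substituting $Ax_{k}+By_{k}-c=\lambda_{k}-\lambda_{k-1}$ and $\varphi_{k}-\varphi=\beta(\lambda_{k}-\lambda^{*}-\lambda)$, and then applying the second identity of Property~1 with $w_{1}=\lambda_{k-1}$, $w_{2}=w_{3}=\lambda_{k}$, $w_{4}=\lambda^{*}+\lambda$ (so that $\|w_{2}-w_{3}\|^{2}=0$), yields exactly the stated three-point equality, which passes through the expectation pointwise. The paper itself states this lemma without proof, citing \citep{zheng:fadmm}, and your derivation is precisely the standard argument behind it, so there is nothing that diverges from the intended route.
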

\vspace{3mm}

\vspace{1mm}
\textbf{Proof of Theorem 2:}
\begin{proof}
For any $\varphi=\beta\lambda$, we have
\begin{equation*}
\begin{split}
\sum^{m}_{k=1}\left(\langle A^{T}\varphi^{s}_{k},\,x^{*}-z^{s}_{k}\rangle+\langle B^{T}\varphi^{s}_{k},\,y^{*}-y^{s}_{k}\rangle+\langle Az^{s}_{k}+By^{s}_{k}-c,\,\varphi^{s}_{k}-\varphi\rangle\right)=-\sum^{m}_{k=1}\langle Az^{s}_{k}+By^{s}_{k}-c,\,\varphi\rangle,
\end{split}
\end{equation*}
where $\varphi^{s}_{k}=\beta\left(\lambda^{s}_{k}-\lambda^{*}\right)$. Using Lemma~\ref{lemm5}, we have
\begin{equation*}
\begin{split}
-\sum^{m}_{k=1}\langle Az^{s}_{k}+By^{s}_{k}-c,\,\varphi^{s}_{k}-\varphi\rangle&=\frac{\beta}{2}\sum^{m}_{k=1}\left(\|\lambda^{s}_{k-1}-\lambda^{*}-\lambda\|^{2}-\|\lambda^{s}_{k}-\lambda^{*}-\lambda\|^{2}-\|\lambda^{s}_{k-1}-\lambda^{s}_{k}\|^{2}\right)\\
&=\frac{\beta}{2}\left(\|\lambda^{s}_{0}-\lambda^{*}-\lambda\|^{2}-\|\lambda^{s}_{m}-\lambda^{*}-\lambda\|^{2}-\sum^{m}_{k=1}\|\lambda^{s}_{k-1}-\lambda^{s}_{k}\|^{2}\right).
\end{split}
\end{equation*}

Combining the above results and Lemma 2, we have
\begin{equation}
\begin{split}
&\mathbb{E}\!\left[P(\widetilde{x}^{s}\!,\widetilde{y}^{s})-\frac{\theta_{s-1}}{m}\sum^{m}_{k=1}\langle Az^{s}_{k}+By^{s}_{k}-c,\;\varphi\rangle\right]\\
\leq&\, (1\!-\!\theta_{s-1})\mathbb{E}\!\left[P(\widetilde{x}^{s-1}\!,\widetilde{y}^{s-1})\right]+\frac{L\alpha\theta^{2}_{s-1}}{2m}\mathbb{E}\!\left[\|x^{*}-z^{s}_{0}\|^{2}_{G}-\|x^{*}-z^{s}_{m}\|^{2}_{G}\right]\\
&+\frac{\beta\theta_{s-1}}{2m}\mathbb{E}\!\left[\|Az^{s}_{0}+By^{*}-c\|^{2}-\|Az^{s}_{m}+By^{*}-c\|^{2}+\|\lambda^{s}_{0}-\lambda^{*}-\lambda\|^{2}-\|\lambda^{s}_{m}-\lambda^{*}-\lambda\|^{2}\right].
\end{split}
\end{equation}

Using the update rule of $\widetilde{x}^{s}=\frac{1}{m}\sum^{m}_{k=1}x^{s}_{k}$, we have
\begin{equation*}
\begin{split}
\widetilde{x}^{s}=\frac{1}{m}\sum^{m}_{k=1}x^{s}_{k}=\frac{1}{m}\sum^{m}_{k=1}\left(\theta_{s-1}z^{s}_{k}+(1-\theta_{s-1})\widetilde{x}^{s-1}\right)=(1-\theta_{s-1})\widetilde{x}^{s-1}+\frac{\theta_{s-1}}{m}\sum^{m}_{k=1}z^{s}_{k}.
\end{split}
\end{equation*}
Recall that
\begin{equation*}
\widetilde{y}^{s}=(1-\theta_{s-1})\widetilde{y}^{s-1}+\frac{\theta_{s-1}}{m}\sum^{m}_{k=1}y^{s}_{k}.
\end{equation*}
Subtracting $(1\!-\!\theta_{s-1})\langle A\widetilde{x}^{s-1}+B\widetilde{y}^{s-1}-c,\,\varphi\rangle$ and dividing both sides by $\theta^{2}_{s-1}$, we have
\begin{equation}
\begin{split}
&\frac{1}{\theta^{2}_{s-1}}\mathbb{E}\!\left[P(\widetilde{x}^{s}\!,\widetilde{y}^{s})-\langle A\widetilde{x}^{s}+B\widetilde{y}^{s}-c,\,\varphi\rangle\right]\\
\leq&\, \frac{(1\!-\!\theta_{s-1})}{\theta^{2}_{s-1}}\mathbb{E}\!\left[P(\widetilde{x}^{s-1}\!,\widetilde{y}^{s-1})-\langle A\widetilde{x}^{s-1}+B\widetilde{y}^{s-1}-c,\,\varphi\rangle\right]+\frac{L\alpha}{2m}\mathbb{E}\!\left[\|x^{*}-z^{s}_{0}\|^{2}_{G}-\|x^{*}-z^{s}_{m}\|^{2}_{G}\right]\\
&+\frac{\beta}{2m\theta_{s-1}}\mathbb{E}\!\left[\|Az^{s}_{0}+By^{*}-c\|^{2}-\|Az^{s}_{m}+By^{*}-c\|^{2}+\|\lambda^{s}_{0}-\lambda^{*}-\lambda\|^{2}-\|\lambda^{s}_{m}-\lambda^{*}-\lambda\|^{2}\right].
\end{split}
\end{equation}

By the update rule in (12), we have $(1-\theta_{s})/{\theta^{2}_{s}}={1}/{\theta^{2}_{s-1}}$. Since $z^{s}_{0}=z^{s-1}_{m}$, $\widetilde{z}^{0}=\widetilde{x}^{0}$, and summing over all stages $(s=1,\cdots,T)$, we have
\begin{equation}\label{equ110}
\begin{split}
&\frac{1}{\theta^{2}_{T-1}}\mathbb{E}\left[P(\widetilde{x}^{T}\!,\widetilde{y}^{T})-\langle A\widetilde{x}^{T}+B\widetilde{y}^{T}-c,\,\varphi\rangle\right]\\
\leq&\, \frac{(1\!-\!\theta_{0})}{\theta^{2}_{0}}\mathbb{E}\left[P(\widetilde{x}^{0}\!,\widetilde{y}^{0})-\langle A\widetilde{x}^{0}+B\widetilde{y}^{0}-c,\,\varphi\rangle\right]+\frac{L\alpha}{2m}\mathbb{E}\!\left[\|x^{*}-\widetilde{x}^{0}\|^{2}_{G}-\|x^{*}-z^{0}_{m}\|^{2}_{G}\right]\\
&+\sum^{T}_{s=1}\frac{\beta}{2m\theta_{s-1}}\mathbb{E}\!\left[\|Az^{s}_{0}-A\widetilde{x}^{*}\|^{2}-\|Az^{s}_{m}-A\widetilde{x}^{*}\|^{2}+\|\lambda^{s}_{0}-\lambda^{*}-\lambda\|^{2}-\|\lambda^{s}_{m}-\lambda^{*}-\lambda\|^{2}\right].
\end{split}
\end{equation}

Since $(1-\theta_{s})/{\theta^{2}_{s}}={1}/{\theta^{2}_{s-1}}$, then
\begin{equation*}
\begin{split}
0\leq \frac{1}{\theta_{s}}-\frac{1}{\theta_{s-1}}\leq \frac{1}{\theta_{s}}-\frac{\sqrt{1-\theta_{s}}}{\theta_{s}}=\frac{1}{1+\sqrt{1-\theta_{s}}}<1.
\end{split}
\end{equation*}

Using $z^{s}_{0}=z^{s-1}_{m}$ and $\lambda^{s}_{0}=\lambda^{s-1}_{m}$, we have

\begin{equation}\label{equ111}
\begin{split}
&\sum^{T}_{s=1}\frac{\beta}{2m\theta_{s-1}}\mathbb{E}\!\left[\|Az^{s}_{0}-A\widetilde{x}^{*}\|^{2}-\|Az^{s}_{m}-A\widetilde{x}^{*}\|^{2}+\|\lambda^{s}_{0}-\lambda^{*}-\lambda\|^{2}-\|\lambda^{s}_{m}-\lambda^{*}-\lambda\|^{2}\right]\\
\leq &\mathbb{E}\!\left[\frac{\beta(1-\theta_{0})}{2m\theta_{0}}(\|Az^{1}_{0}-A\widetilde{x}^{*}\|^{2}+\|\lambda^{1}_{0}-\lambda^{*}-\lambda\|^{2})+\frac{\beta}{2m}\sum^{T}_{s=1}\left(\|Az^{s}_{0}-A\widetilde{x}^{*}\|^{2}+\|\lambda^{s}_{0}-\lambda^{*}-\lambda\|^{2}\right)\right]\\
\leq &\frac{\beta (T-1+1/\theta_{0})}{2m}\mathbb{E}\!\left[\|A^{T}\!A\|_{2}D^{2}_{\mathcal{Z}}+4D^{2}_{\lambda}\right],
\end{split}
\end{equation}
where the first inequality follows from that ${1}/{\theta_{s}}-{1}/{\theta_{s-1}}<1$, $z^{1}_{0}=\widetilde{x}^{0}$ and $\lambda^{1}_{0}=\widetilde{\lambda}^{0}$, and the last inequality holds due to that $\|Az^{s}_{0}-A\widetilde{x}^{*}\|^{2}\leq \|A^{T}\!A\|_{2}D^{2}_{\mathcal{Z}}$ and $\|\lambda^{s}_{0}-\lambda^{*}-\lambda\|\leq \|\lambda^{s}_{0}-\lambda^{*}\|+\|\lambda\|\leq 2D_{\lambda}$.
\vspace{1mm}

By \eqref{equ110} and \eqref{equ111} with $\theta_{s}\leq{2}/(s+2)$ for all $s$, we have
\begin{equation*}
\begin{split}
&\mathbb{E}\!\left[P(\widetilde{x}^{T}\!,\widetilde{y}^{T})-\langle A\widetilde{x}^{T}+B\widetilde{y}^{T}-c,\,\varphi\rangle\right]\\
\leq&\, \frac{4\tau}{(T\!+\!1)^{2}}\left(P(\widetilde{x}^{0}\!,\widetilde{y}^{0})-\langle A\widetilde{x}^{0}+B\widetilde{y}^{0}-c,\,\varphi\rangle\right)+\frac{2L\alpha}{m(T\!+\!1)^{2}}\|x^{*}-\widetilde{x}^{0}\|^{2}_{G}+\frac{2\beta(T\!-\!1\!+\!1/\theta_{0})}{m(T\!+\!1)^{2}}\left(\|A^{T}\!A\|_{2}D^{2}_{\mathcal{Z}}+4D^{2}_{\lambda}\right),\\
\leq&\, \frac{4\tau}{(T\!+\!1)^{2}}\left(P(\widetilde{x}^{0}\!,\widetilde{y}^{0})-\langle A\widetilde{x}^{0}+B\widetilde{y}^{0}-c,\,\varphi\rangle\right)+\frac{2L\alpha}{m(T\!+\!1)^{2}}\|x^{*}-\widetilde{x}^{0}\|^{2}_{G}+\frac{2\beta}{m(T\!+\!1)}\left(\|A^{T}\!A\|_{2}D^{2}_{\mathcal{Z}}+4D^{2}_{\lambda}\right),
\end{split}
\end{equation*}
where $\theta_{0}=1-\frac{\delta(b)}{\alpha-1}$ and $\tau=(1-\theta_{0})/\theta^{2}_{0}=\frac{(\alpha-1)\delta(b)}{(\alpha-1-\delta(b))^{2}}$.  Setting $\varphi=\gamma\frac{A\widetilde{x}^{T}+B\widetilde{y}^{T}-c}{\|A\widetilde{x}^{T}+B\widetilde{y}^{T}-c\|}$ with $\gamma\leq \beta D_{\lambda}$ such that $\|\lambda\|=\|\varphi/\beta\|\leq D_{\lambda}$, and $-\langle A\widetilde{x}^{0}+B\widetilde{y}^{0}-c,\varphi\rangle\leq \|\varphi\|\|A\widetilde{x}^{0}+B\widetilde{y}^{0}-c\|\leq \gamma\|A\widetilde{x}^{0}+B\widetilde{y}^{0}-c\|$, we have

\begin{equation*}
\begin{split}
&\mathbb{E}\!\left[P(\widetilde{x}^{T}\!,\widetilde{y}^{T})+\gamma\|A\widetilde{x}^{T}+B\widetilde{y}^{T}-c\|\right]\\
\leq&\, \frac{4(\alpha-1)\delta(b)}{(\alpha-1-\delta(b))^{2}(T+1)^{2}}\left(P(\widetilde{x}^{0},\widetilde{y}^{0})+ \gamma\|A\widetilde{x}^{0}+B\widetilde{y}^{0}-c\|\right)+\frac{2L\alpha}{m(T+1)^{2}}\|x^{*}-\widetilde{x}^{0}\|^{2}_{G}\\
&+\frac{2\beta}{m(T+1)}\left(\|A^{T}\!A\|_{2}D^{2}_{\mathcal{Z}}+4D^{2}_{\lambda}\right).
\end{split}
\end{equation*}
This completes the proof.
\end{proof}

\end{document}